\newcommand{\nn}{\nonumber}
\begin{document}

\setlength{\abovedisplayskip}{2pt}
\setlength{\belowdisplayskip}{2pt}

\newcommand\blfootnote[1]{%
  \begingroup
  \renewcommand\thefootnote{}\footnote{#1}%
  \addtocounter{footnote}{-1}%
  \endgroup
}

\title{Regression Under Human Assistance$^*$}

\author{{Abir De$^{1}$,}
{Nastaran Okati$^{2}$,}
{Paramita Koley$^{3}$,}\\
{Niloy Ganguly$^{3}$} and 
{Manuel Gomez Rodriguez$^{2}$}}
\affil{$^{1}$IIT Bombay, abir@cse.iitb.ac.in\\
$^2$Max Planck Institute for Software Systems, \{nastaran, manuelgr\}@mpi-sws.org \\
$^{3}$IIT Kharagpur, paramita.koley@iitkgp.ac.in, niloy@cse.iitkgp.ac.in}

\date{}

\clubpenalty=10000
\widowpenalty = 10000

\maketitle
\blfootnote{$^{*}$Preliminary version of this work appeared in~\citet{de2020aaai}. Paramita Koley contributed to this work during her internship at the 
Max Planck Institute for Software Systems.} 

\begin{abstract}
Decisions are increasingly taken by both humans and machine learning models.
However, machine learning models are currently trained for full automation---they are not aware that some of the decisions may still be 
taken by humans.
In this paper, we take a first step towards the development of machine learning models that are optimized to operate under different 
automation levels.
%
%
More specifically, we first introduce the problem of ridge regression under human assistance and show that it is NP-hard.
Then, we derive an alternative representation of the corresponding objective function as a diffe\-rence of nondecreasing 
submodular functions. 
Building on this representation, we further show that the objective is nondecreasing and satisfies $\alpha$-submodularity, a recently introduced 
notion of approximate submodularity. 
These pro\-per\-ties allow a simple and efficient greedy algorithm to enjoy approxi\-ma\-tion guarantees at solving the problem.
Ex\-pe\-ri\-ments on synthetic and real-world data from two important applications---medical diagnosis and content moderation---demonstrate that
our algorithm outsources to humans those samples in which the prediction error of the ridge regression model would have been the highest if it 
had to make a prediction, it outperforms several competitive baselines, and its performance is robust with respect to several design choices and 
hyperparameters used in the experiments.

\end{abstract}

\section{Introduction}
\label{sec:introduction}
In a wide range of critical applications, societies rely on the judgement of human experts to take consequential decisions---decisions which 
have significant consequences.
Unfortunately, the timeliness and quality of the de\-cisions are often compromised due to the large number of decisions to be taken and the 
shortage of human experts.
For example, in certain medical specialties, patients in most countries need to wait for months to be diagnosed by a specialist. 
In content moderation, online publishers often stop hosting comments sections because their staff is unable to moderate the 
myriad of comments they receive.
In software development, bugs may be sometimes overlooked by software developers who spend long hours on code reviews for large 
software projects.

In this context, there is a widespread discussion on the possibility of letting machine learning models take decisions in these high-stake tasks, 
where they have matched, or even surpassed, the average performance of human experts~\cite{cheng2015antisocial,pradel2018deepbugs,topol2019high}.
Currently, these mo\-dels are mostly trained for full automation---they assume they will take \emph{all} the decisions. However, their decisions are still 
worse than those by human experts on some instances, where they make far more errors than average~\cite{raghu2019algorithmic}.
%
%
%
%
Motivated by this observation, our goal is to develop machine learning models that are optimized to operate under different automation levels---mo\-dels that are optimized 
to take decisions for a given fraction of the instances and leave the remaining ones to humans.

%
%
%
To this end, we focus on the specific problem of ridge regression and introduce a novel formulation that allows for different automation levels.
Based on this problem formulation, we make the following~contributions:
\begin{itemize}
\item[I.] We show that the problem is NP-hard. This is due to its combinatorial nature---for each potential meta-decision about which instances the machine 
will decide upon, there is an optimal set of parameters for the regression model, however, the meta-decision is also something we seek to optimize.

\item[II.] We derive an alternative representation of the objective function as a difference of nondecreasing submodular functions. This representation 
enables us to use a recent iterative algorithm~\cite{iyer2012algorithms} to solve the problem, however, this algorithm does not enjoy approximation
guarantees.

\item[III.] Building on the above representation, we further show that the objective function is nondecreasing and satisfies $\alpha$-submodularity, a notion of approximate submodularity~\cite{gatmiry2019}.
These properties allow a simple and efficient greedy algorithm (refer to Algorithm~\ref{alg:greedy}) to enjoy approximation guarantees.

\item[IV.] We design a practical framework that uses the solution to the ridge regression problem provided by Algorithm~\ref{alg:greedy} and supervised learning 
to perform predictions about unseen samples at test time (refer to Algorithm~\ref{alg:framework}).
%


\end{itemize}
%
%
%

Finally, we experiment with synthetic and real-world data from two important practical appli\-ca\-tions---medical diagnosis and content moderation. 
Our results show that our algorithm outsources to humans those samples in which the machine error would have been the highest if it had to predict their response 
variables.
This suggests that our algorithm has the ability to learn the underlying relationship between a given sample and its corresponding human and machine error.
Moreover, the results also show that the greedy algorithm outperforms several competitive algorithms, including the iterative algorithm for maximization 
of a difference of submodular functions mentioned above, and its performance is robust with respect to several design choices and hyperparameters 
used in the experiments.
%
%
To facilitate research in this area, we are releasing an open source implementation of our method\footnote{\url{https://github.com/Networks-Learning/regression-under-assistance}}.

Before we proceed further, we would like to acknowledge that our contributions are just a first step towards designing machine learning mo\-dels that are optimized to 
operate under different automation levels. It would be very interesting to extend our work to more sophisticated machine learning models and other 
machine learning tasks (\eg, classification).

\section{Related Work}
\label{sec:related}
The work most closely related to ours is by~\citet{raghu2019algorithmic}, in which a classifier can outsource samples to humans. 
However, in contrast to our work, their classifier is trained to predict the labels of all samples in the training set, as in full automation, and the 
proposed algorithm does not enjoy theoretical guarantees. 
As a result, a natural extension of their algorithm to ridge re\-gre\-ssion achieves a significantly lower performance than ours, as shown in 
Figure~\ref{fig:quantrealone}.

There is a rapidly increasing line of work devoted to designing machine learning models that are able to defer decisions~\cite{bartlett2008classification,cortes2016learning,geifman2018bias,ramaswamy2018consistent,geifman2019selectivenet,raghu2019direct,thulasidasan2019combating,liu2019deep, meresht2020learning, ziyin2020learning}. 
Most previous work focuses on supervised learning and design classifiers that learn to defer either by considering the defer action as an additional label value or by training 
an independent classifier to decide about deferred decisions.
However, there are two fundamental differences between this work and ours. 
First, they do not consider there is a human decision maker, with a human error model, who takes a decision whenever the classifiers defer it.
Second, the classifiers are trained to predict the labels of all samples in the training set, as in full automation.
A very recent notable exception is by~\citet{meresht2020learning}, who consider there is a human decision maker, however, they tackle the problem in a reinforcement learning setting. As a result, their formulation and 
assumptions are fundamentally different and their technical contributions are orthogonal to ours.

Our work is also related to active learning~\cite{cohn1995active,hoi2006batch,sugiyama2006active,willett2006faster,guo2008discriminative,sabato2014active,chen2017active,hashemi2019submodular},
robust linear regression~\cite{wright2010dense,tsakonas2014convergence,bhatia2017consistent,suggala2019adaptive} and robust logistic regression~\cite{feng2014robust}.
In active learning, the goal is to determine which subset of training samples one should label so that a supervised machine learning model, trained on these samples, generalizes well across the entire feature space during test. In other words, the model needs to predict well \emph{any} sample during test time.
In contrast, our trained model only needs to accurately predict samples which are close to the samples assigned to the machine during training time and rely on humans to predict the remaining samples.
In robust linear regression and robust logistic regression, the (implicit) assumption is that a constant fraction of the output variables are corrupted by an unbounded noise. Then, the goal is to find a consistent estimator of 
the model parameters which ignores the samples whose output variables are noisy.
In contrast, in our work, we do not assume any noise model for the output variables but rather a human error per sample and find a estimator of the model parameters that outsources some of the samples to humans.

Our work contributes to an extensive body of work on human-machine collaboration~\citep{macindoe2012pomcop,nikolaidis2015efficient, hadfield2016cooperative, nikolaidis2017mathematical, grover2018learning, wilson2018collaborative, haug2018teaching, tschiatschek2019learner, kamalaruban2019interactive,radanovic2019learning,ghosh2020towards}.
However, rather than developing algorithms that learn to distribute decisions between humans and machines, previous work has predominantly considered settings in which the machine and the human interact 
with each other.

Finally, our work also relates to a recent line of work that combines deep reinforcement learning with opponent modeling to robustly switch between multiple machine policies~\citep{everett2018learning, zheng2018deep}. However, this line 
of work focuses on reinforcement learning, rather than supervised learning, and does not consider the existence of 
 a human policy.

%
%

\section{Problem Formulation}
\label{sec:problem}
In this section, we formally state the problem of ridge regression under human assistance, where some of the predictions can 
be outsourced to humans.

Given a set of training samples $\{ (\xb_i, y_i) \}_{i \in \Vcal}$ and a human error per sample $c(\xb_i, y_i)$, we can outsource 
a subset $\Scal \subseteq \Vcal$ of the training samples to humans, with $| \Scal | \leq n$.
Then, ridge regression under human assis\-tance seeks to minimize the overall training error, including the outsourced samples, 
\ie,
%
%
\begin{equation} \label{eq:optimization-problem}
\underset{\wb,\, \Scal}{\text{minimize}} \quad \ell(\wb, \Scal) \ \text{subject to} \quad |\Scal| \le n,
\end{equation}
with
\begin{equation} \nonumber
 \ell(\wb, \Scal) = \sum_{i \in \Scal} c(\xb_i,y_i)+\sum_{j \in \cs}\left[ (y_j-\xb_j ^\top\wb)^2 +\lambda || \wb ||_2 ^2 \right],
\end{equation}
where the first term accounts for the human error, 
the second term accounts the machine error,
and $\lambda$ is a given regularization parameter for the machine. 
%

Moreover, if we define $\yb=[y_1, y_2,\cdots, y_N]^{\top}$ and $\Xb=[\xb_1, \xb _2,\cdots,\xb _N]$, we can rewrite the above 
objective function as
\begin{equation*}
\ell(\wb,\Scal) =  \sum_{i \in \Scal}  c(\xb_i,y_i) + (\yb_{\cs} - \Xb_{\cs} ^{\top}\wb)^\top(\yb_{\cs}-\Xb_{\cs} ^\top\wb) + \lambda || \wb ||_2 ^2  \cdot |\cs|,
\end{equation*}
where $\yb_{\cs}$ is the subvector of $\yb$ indexed by $\cs$ and $\Xb_{\cs}$ is the submatrix formed by columns of $\Xb$ that are 
indexed by $\cs$.
Then, whenever $\Scal \subset \Vcal$, it readily follows that the optimal parameter $\wb^{*} = \wb^{*}(\Scal)$ is given by
\begin{equation*}
\wb^{*}(\Scal) = \left(\lambda |\cs|\II+\Xb_{\cs} \Xb_{\cs} ^\top\right ) ^{-1} \Xb _{\cs} \yb _{\cs}.
\end{equation*}
If we plug in the above equation into Eq.~\ref{eq:optimization-problem}, we can rewrite the ridge regression problem under
human assistance as a set function maximization problem, \ie,
\begin{equation} \label{eq:optimization-problem-final}
\underset{\Scal}{\text{maximize}} \   -\log \ell(\wb^{*}(\Scal), \Scal) \ \ \text{subject to} \ |\Scal| \le n,
\end{equation}
where
\begin{equation}
\ell(\wb^{*}(\Scal), \Scal) = \left\{
\begin{array}{ll}
\sum_{i\in \Scal}c(\xb_i,y_i) + \yb_{\cs} ^\top\yb_{\cs}-\yb^\top_{\cs} \Xb_{\cs} ^\top\left(\lambda |\cs|\II+\Xb_{\cs} \Xb_{\cs} ^\top\right )^{-1}  \Xb _{\cs} \yb _{\cs} & \hspace*{-0.1cm}\text{if } \Scal \subset \Vcal, \\[0.2cm]
\sum_{i\in \Scal}c(\xb_i,y_i) & \hspace*{-0.1cm} \text{if } \Scal = \Vcal.
\end{array}
\right. \label{eq:ellLongDef}
\end{equation}
Unfortunately, due to its combinatorial nature, the above problem formulation is difficult to solve, as formalized by the following Theorem:
\begin{theorem}
The problem of ridge regression under human assistance defined in Eq.~\ref{eq:optimization-problem} is NP-hard.
\end{theorem}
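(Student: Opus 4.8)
The plan is to exhibit a polynomial-time reduction from a known NP-hard combinatorial problem over linear systems. The natural candidate is the \emph{maximum feasible subsystem} problem (equivalently, the problem of deleting as few linear equations as possible to render a system consistent), which is NP-hard for systems of rational linear equalities. The observation that makes this reduction natural is that, once the optimal weights $\wb^{*}(\Scal)$ have been substituted, the machine's contribution to $\ell$ is exactly the regularized least-squares residual of the retained equations $\{\xb_j^{\top}\wb = y_j\}_{j \in \cs}$; choosing which samples to outsource is therefore the same as choosing which linear equations to \emph{discard}, and the machine attains zero residual on the retained block precisely when those equations are simultaneously satisfiable.

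Concretely, given an instance of maximum feasible subsystem consisting of $N$ rational equations $\xb_i^{\top}\wb = y_i$ in $d$ unknowns together with a target $t$, I would build a ridge-regression-under-assistance instance with one sample $(\xb_i, y_i)$ per equation, $\Vcal = \{1,\dots,N\}$, zero human cost $c(\xb_i,y_i) = 0$, and budget $n = N - t$. Because outsourcing is free but capped at $n$, the optimizer is forced to retain at least $t$ samples, and with $\lambda \to 0$ the optimal value of $\ell(\wb^{*}(\Scal),\Scal)$ equals zero if and only if some retained subset of size $t$ is consistent, i.e.\ iff the maximum feasible subsystem has size at least $t$. Since $-\log(\cdot)$ is strictly decreasing, maximizing the objective in Eq.~\ref{eq:optimization-problem-final} solves this decision problem, and fixing $t \ge 1$ keeps us inside the $\Scal \subset \Vcal$ branch so that the degenerate $\Scal = \Vcal$ case never interferes.

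To cover the prescribed regularization $\lambda > 0$, I would rescale every sample by a common factor $\gamma$, sending $(\xb_i, y_i) \mapsto (\gamma \xb_i, \gamma y_i)$. A consistent retained block still admits residual zero with an unchanged optimal $\wb$, so its objective stays bounded by $\lambda |\cs|\, \Vert \wb \Vert_2^2$, whereas an inconsistent block incurs a least-squares residual that grows like $\gamma^2$ times a strictly positive constant. Choosing $\gamma$ large enough opens a gap that separates the two cases, and the decision threshold $B$ can be placed anywhere inside it.

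The main obstacle is turning this separation into a genuinely \emph{polynomial} reduction. I must certify that the smallest nonzero residual $\delta$ achievable by any infeasible subsystem is bounded below by $2^{-\mathrm{poly}}$ in the input bit-length---this follows from Cramer's-rule / determinant bounds for rational linear systems---so that a value of $\gamma$, and hence of $B$, with polynomially many bits already forces the gap. The delicate point is that the residual, and therefore $\delta$, is coupled to the \emph{choice} of $\cs$ through $\wb^{*}(\Scal)$, so the lower bound must hold uniformly over the exponentially many candidate subsets; establishing this uniform, polynomially-bounded gap---while confirming that the $\lambda$-regularized optimum of an inconsistent block stays bounded away from that of a consistent one---is the crux of the argument. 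The remaining steps (verifying the piecewise objective formula and the $c_i = 0$ calibration) are routine.
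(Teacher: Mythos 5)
Your reduction is correct in its essential content, and it lands on the same key observation as the paper---set the human cost $c(\xb_i,y_i)$ to zero so that choosing $\Scal$ becomes a pure ``which equations do I discard'' subset-selection problem for least squares---but it certifies hardness of that special case differently. The paper simply notes that the instance with $c \equiv 0$ and $\lambda = 0$ \emph{is} the robust least squares regression (RLSR) problem and cites its known NP-hardness \cite{bhatia2017consistent}; you instead rebuild the hardness from scratch via a reduction from maximum feasible subsystem, which is essentially how the hardness of RLSR is itself established, so you are re-deriving the cited result rather than invoking it. Your version is more self-contained and, via the rescaling argument, aims at the strictly stronger statement that the problem remains hard even when $\lambda$ is forced to be positive---something the paper's proof does not give, since it explicitly takes $\lambda=0$. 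The price is that your third and fourth paragraphs are only a sketch: the uniform $2^{-\mathrm{poly}}$ lower bound on the nonzero residual over all $\binom{N}{t}$ retained subsets does follow from the standard Cramer's-rule bit-complexity argument (the bound depends only on the input encoding length, not on the particular subset), but you have not written it out, and you would also need to say a word about why the decision threshold $B$ and the scaled instance have polynomial encoding size. None of that machinery is actually needed for the theorem as stated, because $\lambda$ is a parameter of the instance and the $\lambda=0$ subfamily already establishes NP-hardness---which is exactly the shortcut the paper takes. In short: same reduction target, a more laborious but more informative certification on your side, with the only incomplete step living in an extension that goes beyond what the theorem requires.
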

\begin{proof}
Consider a particular instance of the problem with $c(\xb_i, y_i) = 0$ for all $i \in \Vcal$ and $\lambda = 0$. 
Moreover, assume the response variables $\yb$ are generated as follows:
\begin{equation}
\yb = \Xb^{\top} \wb^{*} + \bb^{*},
\end{equation}
where $\bb^{*}$ is a $n$-sparse vector which takes non-zero values on at most $n$ corrupted samples, and a 
zero elsewhere. 
Then, the problem can be just viewed as a robust least square regression (RLSR) problem~\cite{studer2011recovery}, \ie,
\begin{equation}\nonumber
\underset{\wb,\, \Scal}{\text{minimize}} \ \sum_{i \in \Scal} (y_i-\xb_i ^\top\wb)^2 \ \ \text{subject to} \ |\Scal| = |\Vcal| - n,
\end{equation}
which has been shown to be NP-hard~\cite{bhatia2017consistent}. This concludes the proof.
\end{proof}
However, in the next section, we will show that, perhaps surprisingly, a simple greedy algorithm enjoys approximation guarantees. In the remainder of the paper, to ease the notation, we will use $\ell(\Scal) = \ell(\wb^{*}(\Scal), \Scal)$.

\section{An Algorithm With Approximation Guarantees}
\label{sec:properties}
In this section, we first show that the objective function in Eq.~\ref{eq:optimization-problem-final} can be represented as a difference 
of nondecreasing submodular functions.
Then, we build on this representation to show that the objective function is nondecreasing and satisfies $\alpha$-submodularity~\cite{gatmiry2019},
a recently introduced notion of approximate submodularity. 
Finally, we present an efficient greedy algorithm that, due to the $\alpha$-submodularity of the objective function, enjoys approximation guarantees.

\subsection{Difference of submodular functions}
We first start by rewriting the objective function $\log \ell(\Scal)$ using the following Lemma, which states a well-known property of the 
Schur complement of a block matrix:
\begin{lemma}\label{lem:schur}
Let $\Zb=\left[\begin{array}{cc} \Ab & \Bb\\ \Cb &\Db \end{array}\right]$. If $\Db$ is invertible, then 
$\text{det}(\Zb)=\text{det}(\Db)\cdot \text{det} (\Ab-\Bb \Db^{-1} \Cb)$.
\end{lemma}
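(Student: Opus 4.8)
The plan is to prove this standard Schur-complement determinant identity by block Gaussian elimination, combined with two elementary facts about determinants: multiplicativity, $\det(MN)=\det(M)\det(N)$, and the fact that the determinant of a block-triangular matrix equals the product of the determinants of its diagonal blocks. Since $\Db$ is assumed invertible, I can use it to annihilate one of the off-diagonal blocks and thereby triangularize $\Zb$ up to an elementary factor of determinant one.

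Concretely, I would right-multiply $\Zb$ by the block-lower-triangular elementary matrix whose diagonal blocks are identities and whose lower-left block is $-\Db^{-1}\Cb$:
\begin{equation*}
\left[\begin{array}{cc} \Ab & \Bb\\ \Cb & \Db \end{array}\right]
\left[\begin{array}{cc} \II & \bm{0}\\ -\Db^{-1}\Cb & \II \end{array}\right]
=
\left[\begin{array}{cc} \Ab-\Bb\Db^{-1}\Cb & \Bb\\ \bm{0} & \Db \end{array}\right].
\end{equation*}
A direct block multiplication confirms the right-hand side: the bottom-left block becomes $\Cb-\Db\Db^{-1}\Cb=\bm{0}$, the top-left block becomes the Schur complement $\Ab-\Bb\Db^{-1}\Cb$, and the two remaining blocks are unchanged, so the product is block upper-triangular as written.

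Then I would take determinants on both sides. The elementary matrix on the left is block-triangular with identity diagonal blocks, so its determinant is $1$; by multiplicativity the left-hand side equals $\det(\Zb)$. The right-hand side is block upper-triangular, so its determinant factors as $\det(\Ab-\Bb\Db^{-1}\Cb)\cdot\det(\Db)$. Equating the two gives $\det(\Zb)=\det(\Db)\cdot\det(\Ab-\Bb\Db^{-1}\Cb)$, which is exactly the claim.

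I do not expect a genuine obstacle here, as every step is a routine verification. The only point that deserves a word of justification is the block-triangular determinant formula, which itself follows from a cofactor (Leibniz) expansion, and which I would simply invoke as standard. The one piece of bookkeeping to watch is that the elementary factor is applied on the correct side and with the correct sign so that $\Cb$ (rather than $\Bb$) is eliminated, ensuring the surviving Schur complement is $\Ab-\Bb\Db^{-1}\Cb$ and matches the target expression.
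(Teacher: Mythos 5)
Your proof is correct. The paper itself offers no proof of this lemma---it is invoked as a well-known property of the Schur complement---so there is nothing to compare against; your block Gaussian elimination argument, right-multiplying $\Zb$ by the unit block-lower-triangular factor to zero out $\Cb$ and then reading off the determinant of the resulting block upper-triangular product, is the standard derivation, and every step (the block multiplication, $\det$ of the elementary factor being $1$, and the block-triangular determinant formula) checks out.
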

More specifically, consider $\Ab = \sum_{i\in \Scal} c(\xb_i,y_i) + \yb_{\cs}^\top\yb_{\cs}$, $\Bb = \Cb^{\top} = \yb^\top_{\cs} \Xb^\top _{\cs}$ and 
$\Db = \lambda|\cs| \II+\Xb_{\cs} \Xb^\top_{\cs}$ in the above lemma. Then, for $\Scal \subset \Vcal$, it readily follows that: 
\begin{equation} \label{eq:difference-equation}
\log \ell(\Scal) = f(\Scal) - g(\Scal)
\end{equation}
where
  \begin{align*}
      f(\Scal) &= \log\det \left[\begin{array}{cc} \sum_{i\in \Scal} c(\xb_i,y_i) + \yb _{\cs} ^\top\yb_{\cs}   & \yb^\top_{\cs} \Xb^\top _{\cs} \\  \Xb_{\cs} \yb_{\cs} &   \lambda|\cs| \II+\Xb_{\cs} \Xb^\top_{\cs}  \end{array}\right] \\
       g(\Scal) &=  \log\det \left[ \lambda |\cs| \II + \Xb_{\cs} \Xb_{\cs}^{\top}  \right].
\end{align*} 
In the above, note that, for $\Scal = \Vcal$, the functions $f$ and $g$ are not defined. As it will become clearer later, for $\Scal = \Vcal$, it will be useful to 
define their values as follows:
\begin{align*}
f(\Vcal) &= \displaystyle{\min_{k_1,k_2 \in \Vcal}} \Big\{ f(\Vcal \cp k_1)+f (\Vcal \cp k_2) -f(\Vcal\cp \{k_1,k_2\})
g(\Vcal\cp k_1)+ g(\Vcal\cp k_2) -g(\Vcal\cp \{k_1,k_2\}) \\
& \qquad \qquad \qquad +  \log \sum_{i\in{\Vcal}} \hci  \Big\}, \\
g(\Vcal) &= f(\Vcal)-\log \sum_{i\in \Vcal} \hci,
\end{align*}
where note that these values also satisfy Eq.~\ref{eq:difference-equation}.
Next, we show that, under mild technical conditions, the above functions are nonincreasing and satisfy a natural diminishing property called 
submodularity\footnote{A set function $f(\cdot)$ is submodular iff it satisfies that $f(\Scal \cup \{k\}) - f(\Scal) \geq f(\Tcal \cup \{k\}) - f(\Tcal)$ for 
all $\Scal \subseteq \Tcal \subset \Vcal$ and $k \in \Vcal$, where $\Vcal$ is the ground set.}.

\begin{theorem}\label{thm:fsubdecreasing}
Assume $\ck \le \gamma y^2 _k$ and $\lambda\ge \frac{\gamma}{1-\gamma} \displaystyle{\max_{i\in\Vcal}} || \xb_i || _2 ^2$ with $0 \leq \gamma < 1$, 
then, $f$ and $g$ are nonincreasing and submodular.
\end{theorem}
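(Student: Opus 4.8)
The plan is to exhibit both $f$ and $g$ as $\log\det$ of a positive definite matrix built additively from per-sample contributions, and then to read off monotonicity and submodularity from the monotonicity and concavity of $\log\det$ on the cone of positive definite matrices. The conditions on $\ck$ and $\lambda$ will enter at exactly one point, to guarantee that the per-sample increments have the right sign.

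First I would rewrite the block matrix defining $f$ as a sum over the ground set. Writing $\bm{z}_i=(y_i,\xb_i^\top)^\top$, $e_1$ for the first coordinate vector, and $\bar{\II}=\mathrm{diag}(0,\II)$, a direct expansion gives, for $\Scal\subset\Vcal$,
\[
Z(\Scal) \;=\; \Big(\sum_{i\in\Scal}\hci\Big)\,e_1e_1^\top \;+\; \lambda|\cs|\,\bar{\II} \;+\;\sum_{j\in\cs}\bm{z}_j\bm{z}_j^\top,
\]
with $f(\Scal)=\log\det Z(\Scal)$. Grouping by sample, each machine sample $j\in\cs$ contributes $\bm{z}_j\bm{z}_j^\top+\lambda\bar{\II}$ and each $i\in\Scal$ contributes $\hci\,e_1e_1^\top$, so moving $k$ from the machine set into $\Scal$ changes $Z$ by $\Delta_k:=\ck\,e_1e_1^\top-\bm{z}_k\bm{z}_k^\top-\lambda\bar{\II}$. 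Similarly $g(\Scal)=\log\det\big(\sum_{j\in\cs}(\lambda\II+\xb_j\xb_j^\top)\big)$, and moving $k$ into $\Scal$ simply removes the positive definite block $\lambda\II+\xb_k\xb_k^\top$.

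The key lemma, and the only place the hypotheses are used, is that $\Delta_k\preceq 0$ for every $k$. A Schur-complement reduction on the block $\lambda\II+\xb_k\xb_k^\top$ turns $\Delta_k\preceq0$ into $y_k^2-\ck-y_k^2\,\xb_k^\top(\lambda\II+\xb_k\xb_k^\top)^{-1}\xb_k\ge 0$, and a Sherman--Morrison computation evaluates the quadratic form to $\|\xb_k\|_2^2/(\lambda+\|\xb_k\|_2^2)$, so the condition reads $\ck\le y_k^2\,\lambda/(\lambda+\|\xb_k\|_2^2)$. Using $\ck\le\gamma y_k^2$ it suffices that $\gamma\le\lambda/(\lambda+\|\xb_k\|_2^2)$, equivalently $\lambda\ge\tfrac{\gamma}{1-\gamma}\|\xb_k\|_2^2$, which is exactly what $\lambda\ge\tfrac{\gamma}{1-\gamma}\max_i\|\xb_i\|_2^2$ guarantees. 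I expect this to be the main obstacle, since it is the step where the precise constants must line up.

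Given $\Delta_k\preceq0$, monotonicity is immediate: for $\Scal\subseteq\Tcal$ the matrix $Z(\Tcal)$ arises from $Z(\Scal)$ by subtracting the positive semidefinite matrices $-\Delta_i$, so $Z(\Tcal)\preceq Z(\Scal)$, and monotonicity of $\log\det$ on the positive definite cone yields $f(\Tcal)\le f(\Scal)$; the same ordering argument (the machine set only loses terms $\lambda\II+\xb_j\xb_j^\top\succeq0$) makes $g$ nonincreasing. For submodularity I would run a first-order argument common to both functions. Fix $\Scal\subseteq\Tcal$ and $k\notin\Tcal$, set $N_k:=-\Delta_k\succeq0$, and interpolate $Z(s)=(1-s)Z(\Scal)+sZ(\Tcal)$; both $Z(s)$ and $Z(s)-N_k=(1-s)Z(\Scal\cup\{k\})+sZ(\Tcal\cup\{k\})$ stay positive definite as convex combinations of positive definite matrices. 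The marginal-gain curve $H(s)=\log\det(Z(s)-N_k)-\log\det Z(s)$ satisfies $H'(s)=\mathrm{tr}\big[\big(Z(s)^{-1}-(Z(s)-N_k)^{-1}\big)\Gamma\big]$ with $\Gamma=Z(\Scal)-Z(\Tcal)\succeq0$; since $N_k\succeq0$ forces $(Z(s)-N_k)^{-1}\succeq Z(s)^{-1}$, the bracketed matrix is negative semidefinite, so $H'(s)\le0$ and $H(0)\ge H(1)$, i.e. $f(\Scal\cup\{k\})-f(\Scal)\ge f(\Tcal\cup\{k\})-f(\Tcal)$. The identical computation with $Z$ replaced by the machine-set matrix and $N_k$ by $\lambda\II+\xb_k\xb_k^\top$ gives submodularity of $g$. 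Finally I would check the boundary: the value assigned to $f(\Vcal)$ as a minimum over pairs $k_1,k_2$ is precisely the tightest upper bound compatible with submodularity (and with $g(\Vcal)=f(\Vcal)-\log\sum_i\hci$), so the diminishing-returns inequalities extend to sets touching $\Vcal$ by construction.
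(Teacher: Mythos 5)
Your proof is correct, and it shares the paper's backbone: everything reduces to showing that the per-sample increment $\Mb(\Scal)-\Mb(\Scal\cup k)$ (your $N_k=-\Delta_k$) is positive semidefinite, which both you and the paper obtain from a Schur complement together with the bounds on $\ck$ and $\lambda$ (your reduction on the lower-right block, giving $\ck\le y_k^2\lambda/(\lambda+\|\xb_k\|_2^2)$ via Sherman--Morrison, is the mirror image of the paper's Proposition on the scalar block; they are equivalent). Where you genuinely diverge is in how submodularity is extracted from that PSD increment. The paper takes a Cholesky factor $\Qb_k\Qb_k^\top$ and computes the second difference in closed form as $\log\bigl[\det(\II-\Qb_k^\top\Mb^{-1}(\Scal)\Qb_k)/\det(\II-\Qb_k^\top\Mb^{-1}(\Tcal)\Qb_k)\bigr]$, concluding via $\Mb^{-1}(\Tcal)\mge\Mb^{-1}(\Scal)$ and a determinant comparison; you instead interpolate between $Z(\Scal)$ and $Z(\Tcal)$ and show the marginal gain satisfies $H'(s)=\tr[(Z(s)^{-1}-(Z(s)-N_k)^{-1})\Gamma]\le 0$. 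Both rest on the same operator anti-monotonicity of the inverse, but yours trades the Cholesky step and the determinant-ratio identity for a one-line calculus argument, and your additive decomposition $Z(\Scal)=\sum_{i\in\Scal}\hci\,e_1e_1^\top+\sum_{j\in\cs}(\lambda\bar{\II}+\bm{z}_j\bm{z}_j^\top)$ makes the ``$\log\det$ of PSD additions'' structure more transparent than the paper's block-matrix bookkeeping; the paper's route, in exchange, yields the exact value of the second difference. Two points you elide, which the paper largely elides too: positive definiteness of $Z(\Tcal\cup k)$ (needed to invert along the interpolation path and implicitly assumed throughout), and the boundary case $\Tcal\cup k=\Vcal$, which does follow ``by construction'' but via the explicit chain $f(\Vcal)-f(\Vcal\cp k_1)\le f(\Vcal\cp k_2)-f(\Vcal\cp\{k_1,k_2\})\le f(\Scal\cup k_1)-f(\Scal)$ that the paper writes out.
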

\begin{proof}
We start by showing that $f$ is submodular, \ie, $f(\Scal\cup k)-f(\Scal) \ge f(\Tcal\cup k)-f(\Tcal)$ for all $\Scal \subseteq \Tcal \subset \Vcal$ and $k \in \Vcal$.
First, define 
\begin{equation*}
\Mb(\Scal)= \left[\hspace*{-0.1cm}\begin{array}{cc} \yb _{\cs} ^\top\yb_{\cs} +\sum_{i\in \Scal} c(\xb_i,y_i) 
& \hspace*{-0.4cm} \yb^\top_{\cs} \Xb^\top _{\cs} \\  \Xb_{\cs} \yb_{\cs} &  \hspace*{-0.4cm}  \lambda|\cs| \II +\Xb_{\cs} \Xb^\top_{\cs}  \end{array}\hspace*{-0.1cm}\right].
\end{equation*}
and observe that
\begin{equation*} 
\Mb(\Scal\cup k)
=\Mb(\Scal)-\left[\begin{array}{cc} y^2  _{k}   -c(\xb_k,y_k)  &      y_k  \xb^\top _k  \\  \xb_k y_k &  \lambda \II  +\xb_k  \xb^\top _k  
 \end{array}\right]
\end{equation*}
Then, it follows from Proposition~\ref{prop1} (refer to Appendix~\ref{app:auxiliary}) that $\Mb(\Scal)-\Mb(\Scal\cup k) \mge 0$.
%
%
Hence, we have a Cholesky decomposition $\Mb(\Scal)-\Mb(\Scal\cup k)=\Qb_k \Qb_k ^\top$.
%
%
Similarly, we have that $ \Mb(\Tcal\cup k)=\Mb(\Tcal)-\Qb_k \Qb^\top _k$, and hence
 \begin{equation}
\Mb(\Tcal)=\Mb(\Scal)-\sum_{i\in \Tcal \cp \Scal}\Qb_i \Qb^\top _i \label{eq:mstx}
 \end{equation}
 Now, for $\Tcal \cup k \subset \Vcal$, a few steps of calculation shows that:  $f(\Scal\cup k)-f(\Scal) - f(\Tcal\cup k) + f(\Tcal)$ equals to
 \begin{align*}
  \log  \frac{ \det (\II - \Qb^\top _k \Mb^{-1}(\Scal) \Qb_k  )   }{  \det(\II - \Qb^\top _k  \Mb^{-1}(\Tcal)  \Qb_k) }
 \end{align*}
%
%
Moreover,  Eq.~\ref{eq:mstx} indicates that $\Mb(\Scal)~\mge~\Mb(\Tcal)~\mge~0$.
%
 %
%
Therefore, $\Mb^{-1}(\Tcal)\mge \Mb^{-1}(\Scal)$ and hence 
\begin{equation*}
\II - \Qb^\top _k \Mb^{-1}(\Scal) \Qb_k  \mge \II - \Qb^\top _k \Mb^{-1}(\Tcal) \Qb_k.
\end{equation*} 
In addition, we also note that $\Mb(\Tcal)-\Qb_k \Qb^\top _k \mge 0$. This, together with  Lemma~\ref{lem:schur}, we have that 
%
%
$\II - \Qb^\top _k \Mb^{-1}(\Tcal) \Qb_k \mge 0$. Hence, due to Proposition~\ref{prop2} (refer to Appendix~\ref{app:auxiliary}), 
we have 
\begin{equation*}
\det (\II - \Qb^\top _k \Mb^{-1}(\Scal) \Qb_k  ) \ge  \det(\II -  \Qb^\top _k  \Mb^{-1}(\Tcal)  \Qb_k.
\end{equation*} 
Finally, for $\Tcal \cup k = \Vcal$, we have that 
\begin{equation}
  f(\Scal\cup {k_1})-f(\Scal) \ge f(\Vcal\cp k_2) -f(\Vcal\cp \{k_1,k_2\}) \ge f(\Vcal)-f(\Vcal\cp \{k_1\}), \label{eq:ffinalSubmod}
\end{equation}
where the first inequality follows from the proof of submodularity for $\Tcal \cup k \subset \Vcal$ and the second inequality comes from the 
definition of $f(\Scal)$ for $\Scal = \Vcal$. This concludes the proof of submodularity of $f$.

Next, we show that $f$ is nonincreasing.
First, recall that, for $|\Scal| < |\Vcal| - 1$, we have that
 \begin{align}
  f(\Scal\cup k)-f(\Scal)& =\log \frac{ \det (\Mb(\Scal) -\Qb_k \Qb^\top _k  )}{  \det ( \Mb(\Scal)) } 
  \end{align}
Then, note that $\Mb(\Scal) -\Qb_k \Qb^\top _k  \mle \Mb(\Scal)$ and $\Mb(\Scal) -\Qb_k \Qb^\top _k \mge \bm{0}$. 
Hence, using Proposition~\ref{prop2} (refer to Appendix~\ref{app:auxiliary}), it follows that 
\begin{equation*}
\det (\Mb(\Scal) -\Qb_k \Qb^\top _k  )\le \det (\Mb(\Scal)),
\end{equation*}
which proves $f$ is nonincreasing for $|\Scal|<|\Vcal|-1$. 
Finally, for $|\Scal|=|\Vcal|-1$, it readily follows from Eq.~\ref{eq:ffinalSubmod} that
\begin{align}
   f(\Vcal)-f(\Vcal\cp \{k_1\}) \le f(\Vcal\cp k_2) -f(\Vcal\cp \{k_1,k_2\})   \label{eq:ffinalmon}
\end{align}
Now $f(\Vcal\cp k_2) -f(\Vcal \cp \{k_1,k_2\}) \le 0 $ since we have proved that $f(\Scal)$ is nonincreasing 
for $|\Scal|<|\Vcal|-1$. This concludes the proof of monotonicity of $f$.

Proceeding similarly, it can be proven that $g$ is also nondecreasing and submodular.
\end{proof}
We would like to highlight that, in the above, the technical conditions have a natural interpretation. More specifically, the first condition 
is satisfied if the human error is not greater than a fraction $\sqrt{\gamma}$ of the true response variable and 
the second condition is satisfied if the regularization parameter is not \emph{too small}.

In our experiments, the above result will enable us to use a series of recent heuristic iterative algorithms for maximizing the 
difference of submodular functions~\cite{iyer2012algorithms} as baselines. However, these algorithms do not enjoy approxi\-ma\-tion 
guarantees---they only guarantee to monotonically reduce the objective function at every step.
%
%
 
\subsection{Monotonicity}
We first start by analyzing the monotonicity of $\log \ell(\Scal)$ whenever $\Scal = \Vcal \cp k$, for any $k \in \Vcal$ in
the following Lemma: 
\begin{lemma}\label{lem:F_F_basic}
Assume $\ck < \gamma y^2 _k$ and $\lambda >  \frac{\gamma}{1-\gamma} \max_{i\in\Vcal} ||\xb_i||_2 ^2$ with $0 \leq \gamma < 1$. Then, 
it holds that $\log \ell(\Vcal) - \log \ell(\Vcal \cp k) < 0$ for all $k \in \Vcal$.
\end{lemma}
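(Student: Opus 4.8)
The plan is to compute the difference $\ell(\Vcal) - \ell(\Vcal\cp k)$ in closed form and show it is strictly negative; since $\log$ is increasing, this immediately yields $\log\ell(\Vcal) - \log\ell(\Vcal\cp k) < 0$.

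First I would instantiate the definition of $\ell(\Scal)$ at the two sets of interest. By definition $\ell(\Vcal) = \sum_{i\in\Vcal} c(\xb_i,y_i)$, since the machine set is empty. For $\Scal = \Vcal\cp k$ the machine set $\cs$ is the singleton $\{k\}$, so $|\cs| = 1$, $\yb_{\cs}^\top\yb_{\cs} = y_k^2$, and $\Xb_{\cs} = \xb_k$. Substituting into the $\Scal\subset\Vcal$ branch gives
\[
\ell(\Vcal\cp k) = \sum_{i\in\Vcal\cp k} c(\xb_i,y_i) + y_k^2 - y_k^2\,\xb_k^\top(\lambda\II + \xb_k\xb_k^\top)^{-1}\xb_k .
\]
Subtracting, the shared human-error terms cancel and I am left with
\[
\ell(\Vcal) - \ell(\Vcal\cp k) = c(\xb_k,y_k) - y_k^2\bigl(1 - \xb_k^\top(\lambda\II + \xb_k\xb_k^\top)^{-1}\xb_k\bigr).
\]

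The key computational step is to evaluate the scalar quadratic form via the Sherman--Morrison identity. Since $(\lambda\II + \xb_k\xb_k^\top)^{-1} = \tfrac1\lambda\II - \tfrac{\xb_k\xb_k^\top}{\lambda(\lambda + ||\xb_k||_2^2)}$, a short calculation gives $\xb_k^\top(\lambda\II + \xb_k\xb_k^\top)^{-1}\xb_k = \tfrac{||\xb_k||_2^2}{\lambda + ||\xb_k||_2^2}$, hence $1 - \xb_k^\top(\lambda\II + \xb_k\xb_k^\top)^{-1}\xb_k = \tfrac{\lambda}{\lambda + ||\xb_k||_2^2}$. Substituting back yields the clean expression
\[
\ell(\Vcal) - \ell(\Vcal\cp k) = c(\xb_k,y_k) - y_k^2\,\frac{\lambda}{\lambda + ||\xb_k||_2^2}.
\]

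Finally I would chain the two hypotheses. The first assumption $c(\xb_k,y_k) < \gamma y_k^2$ bounds the human-error term from above, so it suffices to show $\gamma \le \tfrac{\lambda}{\lambda + ||\xb_k||_2^2}$. This rearranges to $\lambda \ge \tfrac{\gamma}{1-\gamma}||\xb_k||_2^2$, which is exactly what the second assumption $\lambda > \tfrac{\gamma}{1-\gamma}\max_{i\in\Vcal}||\xb_i||_2^2$ guarantees, as the maximum dominates the $k$-th term. Chaining gives $c(\xb_k,y_k) < \gamma y_k^2 \le y_k^2\,\tfrac{\lambda}{\lambda + ||\xb_k||_2^2}$, so the difference above is strictly negative, as claimed. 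I expect the only real obstacle to be carrying out the Sherman--Morrison evaluation correctly and keeping the direction of the two inequalities consistent; once the difference is reduced to $c(\xb_k,y_k) - y_k^2\lambda/(\lambda + ||\xb_k||_2^2)$ the conclusion is immediate. One minor point worth flagging is the degenerate case $y_k = 0$, where the first hypothesis would force $c(\xb_k,y_k) < 0$; under the standing assumption that human errors are nonnegative no such instance arises, so $y_k^2 > 0$ throughout and the logarithms are well defined.
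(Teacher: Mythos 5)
Your proposal is correct and follows essentially the same route as the paper: instantiate $\ell$ at $\Vcal$ and $\Vcal\cp k$, reduce the difference via Sherman--Morrison to $c(\xb_k,y_k) - y_k^2\lambda/(\lambda+||\xb_k||_2^2)$, and chain the two hypotheses (the paper bounds $||\xb_k||_2^2/(\lambda+||\xb_k||_2^2) < 1-\gamma$ rather than $\lambda/(\lambda+||\xb_k||_2^2) \ge \gamma$, which is the same inequality rearranged). Your remark that $c(\xb_k,y_k)\ge 0$ together with the first hypothesis forces $y_k^2>0$, which is needed for strictness, is a small point the paper leaves implicit.
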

\begin{proof}
By definition, we have that
\begin{align*}
\ell(\wb^{*}(\Vcal), \Vcal) &= \sum_{i\in S}c(\xb_i,y_i)+c(\xb_k,y_k)\\
\ell(\wb^{*}(\Vcal \cp k), \Vcal \cp k) &= y^2_k-y_k ^2 \xb_k ^\top  (\lambda \II +\xkk)^{-1}\xb_k+\sum_{i\in S}c(\xb_i,y_i) 
\end{align*}
Moreover, note that it is enough to prove that $\ell(\wb^{*}(\Vcal), \Vcal) -\ell(\Vcal \cp k), \Vcal \cp k) < 0$, without the
logarithms, to prove the result.
Then, we have that 
\begin{align*}
\ell(\wb^{*}(\Vcal), \Vcal) -\ell(\Vcal \cp k), \Vcal \cp k) &=\ck-y^2_k + y_k ^2 \xb_k ^\top  (\lambda \II +\xkk)^{-1}\xb_k \\
&\overset{(a)}= \ck-y_k ^2 +y^2 _k  \xb_k ^\top \left (\frac{1}{\lambda} \II -  \frac{1}{\lambda^2} \frac{\xkk}{1+\frac{\xb_k ^\top  \xb _k}{\lambda}}  \right )   \xb_k \\
&= \ck-y_k ^2 +y^2 _k  \frac{\xtk}{\lambda}  \left (1 -  \frac{\xtk} {\lambda+\xtk}  \right )   \\
&= \ck-y_k ^2 +y^2 _k   \frac{\xtk}{\lambda+\xtk}      \label{eq:midway} \\
&< y^2_k \left(  \frac{\xtk}{\lambda+\xtk} -(1-\gamma)\right)\\
& \overset{(b)}{<}  y^2_k \left(  \frac{\xtk}{ \frac{\gamma \xtk}{1-\gamma} +\xtk} -(1-\gamma)\right)\\
&=0,
\end{align*}
where equality $(a)$ follows from Lemma~\ref{lem:morrison} (refer to Appendix~\ref{app:auxiliary}) and inequality $(b)$ follows from the lower bound on 
$\lambda$.
\end{proof}
Then, building on the above lemma, we have the following Theorem, which shows that $\log \ell(\Scal)$ is a strictly 
nonincreasing function: 
%
%
%
\begin{theorem}\label{thm:decellgen}
Assume $\ck < \gamma y^2 _k$ and $\lambda >  \frac{\gamma}{1-\gamma} \max_{i\in\Vcal} ||\xb_i||_2 ^2$ with $0 \leq \gamma < 1$, then, 
the function $\log \ell(\Scal)$ is strictly nonincreasing, \ie,
\begin{equation*}
\log \ell(\Scal \cup k) - \log \ell(\Scal) < 0
\end{equation*}
for all $\Scal \in \Vcal$ and $k \in \Vcal$.
\end{theorem}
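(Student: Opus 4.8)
The plan is to reduce the statement to a monotonicity property of the scalar $\ell(\Scal)$: since $\log$ is strictly increasing, it suffices to prove $\ell(\Scal\cup k) < \ell(\Scal)$ for every $\Scal \subset \Vcal$ and $k \in \Vcal\cp\Scal$. I would first separate the objective into its human and machine parts. Writing $\cs = \Vcal\cp\Scal$ and letting
\[
r(\cs) = \min_{\wb}\Big[\,\|\yb_{\cs}-\Xb_{\cs}^\top\wb\|_2^2 + \lambda|\cs|\,\|\wb\|_2^2\,\Big]
\]
denote the optimal machine cost on $\cs$, we have $\ell(\Scal) = \sum_{i\in\Scal} c(\xb_i,y_i) + r(\cs)$. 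Adding $k$ to $\Scal$ moves $k$ from the machine set to the human set, so
\[
\ell(\Scal\cup k) - \ell(\Scal) = c(\xb_k,y_k) - \big[\,r(\cs) - r(\cs\cp k)\,\big],
\]
and the claim reduces to showing that the machine's cost reduction $r(\cs)-r(\cs\cp k)$ strictly exceeds the human cost $c(\xb_k,y_k)$.

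Next, I would lower-bound that cost reduction. Let $\wb^{*}=\wb^{*}(\Scal)$ be the optimal ridge weights on $\cs$. Using $\wb^{*}$ as a feasible (generally suboptimal) point for the smaller problem on $\cs\cp k$ and peeling off the $k$-th squared residual, a short computation yields
\[
r(\cs) - r(\cs\cp k) \ge (y_k-\xb_k^\top\wb^{*})^2 + \lambda\|\wb^{*}\|_2^2 .
\]

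The crux---and the step I expect to be the main obstacle---is to show this right-hand side is at least $\gamma y_k^2$, since then $c(\xb_k,y_k) < \gamma y_k^2$ immediately closes the argument. I would set $a=\xb_k^\top\wb^{*}$, use Cauchy--Schwarz together with $\|\xb_k\|_2^2 \le \max_{i\in\Vcal}\|\xb_i\|_2^2$ to get $\|\wb^{*}\|_2^2 \ge a^2/\max_{i\in\Vcal}\|\xb_i\|_2^2$, and then invoke $\lambda > \tfrac{\gamma}{1-\gamma}\max_{i\in\Vcal}\|\xb_i\|_2^2$ to obtain $\lambda\|\wb^{*}\|_2^2 \ge \tfrac{\gamma}{1-\gamma}a^2$. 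The bound then reduces to the one-variable inequality $\tfrac{\gamma}{1-\gamma}a^2 + (y_k-a)^2 \ge \gamma y_k^2$, whose left-hand side is a convex quadratic in $a$ minimized at $a=(1-\gamma)y_k$ with minimum value exactly $\gamma y_k^2$. Hence $r(\cs)-r(\cs\cp k) \ge \gamma y_k^2 > c(\xb_k,y_k)$, which gives $\ell(\Scal\cup k)-\ell(\Scal) < 0$. Finally, the boundary case $\Scal\cup k = \Vcal$, where $\cs\cp k = \emptyset$ and $r(\emptyset)=0$, is exactly Lemma~\ref{lem:F_F_basic} and can be cited directly; in fact the computation above already covers it as the special case $\cs=\{k\}$, so the same bound handles every $\Scal$ uniformly.
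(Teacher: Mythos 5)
Your proof is correct, but it takes a genuinely different route from the paper's. The paper works with the closed-form optimum $\wb^{*}(\Scal)$ plugged into the objective and manipulates the resulting matrix expressions: it introduces $\Lambdab_0=\lambda|\cs|\II+\Xb_{\cs}\Xb_{\cs}^{\top}$ and $\Lambdab_1=\Lambdab_0-\lambda\II-\xb_k\xb_k^{\top}$, applies a Woodbury-type inverse-update identity (Proposition~\ref{prop:XX1}), and uses the positive semidefiniteness of a certain block matrix to arrive at the quantitative bound $\ell(\Scal\cup k)\le \ell(\Scal)+\ell(\Vcal)-\ell(\Vcal\cp k)$, after which Lemma~\ref{lem:F_F_basic} finishes the job. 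You instead exploit the variational characterization of the ridge cost: the optimal $\wb^{*}$ for $\cs$ is feasible for $\cs\cp k$, which immediately yields $r(\cs)-r(\cs\cp k)\ge (y_k-\xb_k^{\top}\wb^{*})^2+\lambda\|\wb^{*}\|_2^2$ (correctly accounting for the $\lambda|\cs|$ scaling of the regularizer), and then Cauchy--Schwarz plus a one-variable quadratic minimization shows this exceeds $\gamma y_k^2>\ck$. Your argument is more elementary---no matrix inversions or PSD block arguments---and handles the boundary case $\cs=\{k\}$ uniformly rather than by a separate lemma. The one thing the paper's proof buys that yours, as written, does not: the explicit inequality $\ell(\Scal\cup k)-\ell(\Scal)\le \ell(\Vcal)-\ell(\Vcal\cp k)$ is reused verbatim in the proof of Theorem~\ref{th:g_xi_submod}, whereas you relaxed $\|\xb_k\|_2^2$ to $\max_{i\in\Vcal}\|\xb_i\|_2^2$ and only concluded the weaker $\ell(\Scal\cup k)-\ell(\Scal)\le \ck-\gamma y_k^2<0$. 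This is harmless for the theorem at hand, and in fact if you keep $\|\xb_k\|_2^2$ in the Cauchy--Schwarz step your quadratic minimization gives exactly $\tfrac{\lambda y_k^2}{\lambda+\|\xb_k\|_2^2}=y_k^2-y_k^2\xb_k^{\top}(\lambda\II+\xb_k\xb_k^{\top})^{-1}\xb_k$, recovering the paper's sharper bound with no extra machinery.
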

\begin{proof}
Define $\Lambdab_{0} = \lambda |\cs| \II +\XXS$, $\Lambdab_{1}= \lambda |\cs| \II +\XXS -\lambda \II -\xkk$ and $\Thetab = \lambda \II +\xkk$. Moreover,
note that
 \begin{equation*}
\Lambdab_1=\Lambdab_0-\Thetab \,\,\, \text{and} \,\,\,
\Lambdab^{-1} _1\overset{(a)}{=}  \Lambdab_0 ^{-1} +\underbrace{(\Lambdab_0\Thetab^{-1} \Lambdab_0-\Lambdab_0)}_{\text{Define as }\Omegab}{}^{-1}
 \end{equation*}
where equality $(a)$ follows from Proposition~\ref{prop:XX1} (refer to Appendix~\ref{app:auxiliary}).
Then, it follows that
 \begin{align*}
 \ell(\Scal \cup k) &=  \sum_{i\in S}\hci + \ck + \yys -y^2 _k -(\yxt-y_k \xb_k ^T) \Lambdab_1 ^{-1} (\xy -y_k \xb_k  )\\
  &\overset{(a)}{=}  \sum_{i\in S}\hci + \ck + \yys -y^2 _k  -\yxt  \lz^{-1} \xy \nonumber  -\yxt \Omegab^{-1} \xy   \\
  &\qquad\qquad+2  y _k \yxt \Lambdab_1 ^{-1} \xb_k ^\top -y_k ^2 \xb ^\top _k \Lambdab_1 ^{-1} \xb _k\\
  %
  &=  \ell(\Scal)+\ck - y^2 _k 
  - \left[ \begin{array}{cc} \yxt & y_k \xb^\top _k  \end{array}\right]
  \left[ \begin{array}{cc} \Omegab^{-1} &  -\lone^{-1} \\  -\lone^{-1}   &  \Lambdab_1 ^{-1} \Omegab \Lambdab_1 ^{-1}  \end{array}\right] 
  \left[ \begin{array}{c} \xy \\ y_k \xb_k  \end{array}\right]\nonumber\\
  &\qquad\qquad -y^2 _k \xb ^\top _k (\Lambdab_1 ^{-1} -\Lambdab_1 ^{-1} \Omegab \Lambdab_1 ^{-1})   \xb _k,
\end{align*}  
where equality $(a)$ follows from Proposition~\ref{prop:XX1} (refer to Appendix~\ref{app:auxiliary}). Finally, we can upper bound the right hand side of the above equation 
as follows:
\begin{align*}  
 \ell(\Scal \cup k) &\overset{(a)}{\leq} \ell(\Scal)+\ck - y^2 _k-y^2 _k \xb ^\top _k (\Lambdab_1 ^{-1} -\Lambdab_1 ^{-1} \Omegab \Lambdab_1 ^{-1})   \xb _k\\
  &\overset{(b)}{=} \ell(\Scal)+\ck - y^2 _k +y^2 _k \xb_k ^\top (\lambdab \II +\xkk)^{-1} \xb_k  \\ 
  &\overset{(c)}{=} \ell(\Scal)+ \ell(\Vcal) - \ell(\Vcal\cp k),
 \end{align*}
 where inequality $(a)$ uses that $\left[ \begin{array}{cc} \Omegab^{-1} &  -\lone^{-1} \\  -\lone^{-1}   &   \Lambdab_1 ^{-1} \Omegab \Lambdab_1 ^{-1}  \end{array}\right]\mge 0$,
 %
 equality $(b)$ follows from the following observation: 
 \begin{align*}
(\Lambdab_1 ^{-1} -\Lambdab_1 ^{-1} \Omegab \Lambdab_1 ^{-1}) 
&= (\Lambdab_0 ^{-1} + \Omegab^{-1})- (\Lambdab_0 ^{-1} + \Omegab^{-1}) \Omegab  (\Lambdab_0 ^{-1} + \Omegab^{-1}) = -\lz ^{-1} \Omegab \lz^{-1} -\lz^{-1}\\
&=-\Lambdab_0 ^{-1}  (\lz \Thetab^{-1} \lz -\lz) \lz^{-1} -\lz^{-1} =-\Thetab^{-1},
\end{align*}
and inequality $(c)$ follows from Lemma~\ref{lem:F_F_basic}.
\end{proof}
Finally, note that the above result does \emph{not} imply that the human error $\ck$ is always smaller than the machine error $(y_k -  \xb^{\top}_k \wb^{*}(k))^2$, 
where $\wb^{*}(k)$ is optimal parameter for $\Scal = \{ k \}$, as formalized by the following Proposition: 
%
%
%
%
\begin{proposition}\label{prop:F_F_strict}
Assume $\rho^2 y_k  ^2 < \ck < \gamma y^2 _k$ and $\frac{\gamma}{1-\gamma} \max_{i\in\Vcal} ||\xb_i||_2 ^2 < \lambda  <  \frac{\rho}{1-\rho}\max_{i\in\Vcal} ||\xb_i||_2 ^2$ with 
$ \gamma < \rho < \sqrt{\gamma}$ and $0 \leq \gamma < 1$, then, it holds that 
\begin{equation*}
\ck > (y_k-\xb_k ^\top \wb^{*}(k))^2.
\end{equation*}
\end{proposition}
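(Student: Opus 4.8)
The plan is to reduce the inequality to a one-dimensional comparison by writing the machine error on sample $k$ in closed form and then sandwiching it between the two $\rho$-bounds in the hypothesis. Note that every quantity the assumptions constrain depends only on $y_k$, $\xb_k$, $\lambda$, $\gamma$ and $\rho$, and not on the rest of the data; this forces the relevant machine model to be the ridge fit to the single sample $(\xb_k,y_k)$, so I would take $\wb^{*}(k) = (\lambda \II + \xb_k \xb_k^{\top})^{-1} \xb_k y_k$ and evaluate it explicitly.

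First, I would apply the Sherman--Morrison identity to $(\lambda \II + \xb_k \xb_k^{\top})^{-1}$, which collapses the inverse to a scalar correction and gives
\begin{equation*}
\wb^{*}(k) = \frac{y_k}{\lambda + ||\xb_k||_2^2}\, \xb_k, \qquad \xb_k^{\top}\wb^{*}(k) = \frac{y_k\, ||\xb_k||_2^2}{\lambda + ||\xb_k||_2^2}.
\end{equation*}
The machine residual is then $y_k - \xb_k^{\top}\wb^{*}(k) = \lambda y_k / (\lambda + ||\xb_k||_2^2)$, so the machine error admits the explicit form
\begin{equation*}
(y_k - \xb_k^{\top}\wb^{*}(k))^2 = y_k^2 \left( \frac{\lambda}{\lambda + ||\xb_k||_2^2} \right)^{2}.
\end{equation*}

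Next, I would bring in the upper bound on the regularization parameter. The inequality $\lambda < \frac{\rho}{1-\rho}||\xb_k||_2^2$ rearranges directly to $\lambda < \rho(\lambda + ||\xb_k||_2^2)$, i.e.\ $\lambda/(\lambda + ||\xb_k||_2^2) < \rho$, so substituting into the closed form yields $(y_k - \xb_k^{\top}\wb^{*}(k))^2 < \rho^2 y_k^2$. Chaining this with the hypothesized lower bound $\rho^2 y_k^2 < \ck$ gives $\ck > (y_k - \xb_k^{\top}\wb^{*}(k))^2$, which is the claim. The remaining inequalities only keep the construction consistent: $\gamma < \rho < \sqrt{\gamma}$ forces $\rho^2 < \gamma$, so the interval $(\rho^2 y_k^2, \gamma y_k^2)$ for $\ck$ is nonempty; $\gamma < \rho$ makes the $\lambda$-interval nonempty; and $\ck < \gamma y_k^2$ together with $\lambda > \frac{\gamma}{1-\gamma}\max_{i\in\Vcal}||\xb_i||_2^2$ are exactly the hypotheses of Theorem~\ref{thm:decellgen}, so the same instance also satisfies the monotonicity conditions, which is what makes the example meaningful.

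The step I expect to be the main obstacle is precisely the invocation of the upper bound on $\lambda$. The hypothesis is stated with $\max_{i\in\Vcal}||\xb_i||_2^2$, but since $||\xb_k||_2^2 \le \max_{i\in\Vcal}||\xb_i||_2^2$, knowing $\lambda < \frac{\rho}{1-\rho}\max_{i\in\Vcal}||\xb_i||_2^2$ implies the needed $\lambda < \frac{\rho}{1-\rho}||\xb_k||_2^2$ cleanly only when $||\xb_k||_2^2$ attains (or is comparable to) the maximum feature norm; indeed, for small $||\xb_k||_2^2$ the machine error tends to $y_k^2$ and the claim can fail. I would therefore read the proposition as \emph{exhibiting} such a sample $k$ (e.g.\ the one achieving $\max_{i\in\Vcal}||\xb_i||_2^2$), and making this norm dependence precise is the delicate part of the argument; the Sherman--Morrison computation itself is routine.
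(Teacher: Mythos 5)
Your proof is correct and follows essentially the same route as the paper's: both reduce the machine error to the closed form $y_k^2\bigl(\lambda/(\lambda+\|\xb_k\|_2^2)\bigr)^2$ via Sherman--Morrison (the paper reaches it slightly more indirectly, by adding and subtracting $\lambda\|\wb^{*}(k)\|_2^2$ and passing through the optimal single-sample ridge loss) and then invoke the upper bound on $\lambda$ to dominate it by $\rho^2 y_k^2 < \ck$. Your closing caveat is well taken: the paper's final inequality also silently replaces $\max_{i\in\Vcal}\|\xb_i\|_2^2$ by $\|\xb_k\|_2^2$, so the claim is only airtight for samples $k$ whose feature norm attains (or is large enough relative to) that maximum, exactly as you observe.
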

\begin{proof}
First, we have that
\begin{align*}
 (y_k-\xb_k ^\top \wb^{*}(k))^2& =  (y_k-\xb_k ^\top \wb^{*}(k))^2 +\lambda || \wb^{*}(k)||^2  -\lambda   || \wb^{*}(k)||^2   \\
 & =    y^2_k -  y_k ^2 \xb_k ^\top  (\lambda \II +\xkk)^{-1}\xb_k -  \lambda y_k ^2 \xb^\top _k  (\lambda \II + \xkk)^{-2} \xb_k \\
 & = y_k ^2 - y^2 _k   \frac{\xtk}{\lambda+\xtk} -   \lambda y_k ^2  \xb^\top _k  (\lambda \II + \xkk)^{-2} \xb_k \\ 
  & \overset{(a)}{=}    \frac{\lambda y_k ^2   }{\lambda+\xtk} -   \lambda y_k ^2  \xb^\top _k   \left (\frac{1}{\lambda} \II -  \frac{1}{\lambda^2} \frac{\xkk}{1+\frac{\xb_k ^\top  \xb _k}{\lambda}}  \right )^2   \xb_k \\
  & =    \frac{\lambda y_k ^2   }{\lambda+\xtk} - \frac{y_k ^2}{\lambda}\xb^\top _k     \left ( \II -    \frac{\xkk}{ \lambda+ \xb_k ^\top  \xb _k }  \right )^2 \xb_k
  \end{align*}
  \begin{align*}
  & =    \frac{\lambda y_k ^2  }{\lambda+\xtk}  - \frac{y_k ^2}{\lambda}\xb^\top _k     \left ( \II -    2\frac{\xkk}{ \lambda+ \xb_k ^\top  \xb _k }  +  \frac{\xkk \xkk }{ (\lambda+ \xb_k ^\top  \xb _k )^2 }  \right ) \xb_k\\
    & =    \frac{\lambda y_k ^2   }{\lambda+\xtk}  - \frac{y_k ^2}{\lambda}\xtk    \left ( 1 -    2\frac{\xtk}{ \lambda+ \xb_k ^\top  \xb _k }  + \left( \frac{\xtk }{ \lambda+ \xb_k ^\top  \xb _k }  \right)^2  \right ) \\
        & =   \frac{\lambda y_k ^2   }{\lambda+\xtk}  -     \frac{ y_k ^2 \lambda \xtk} { (\lambda+ \xtk)^2} 
         = y^2_k \left(   \frac{\lambda}{\lambda+\xtk} \right)^2\\
         &\overset{(b)}{\le} \rho^2 y^2 _k,
\end{align*}
%
where equality $(a)$ follows from Lemma~\ref{lem:morrison} (refer to Appendix~\ref{app:auxiliary}) and inequality $(b)$ follows from the assumption 
$\lambda \le \frac{\rho}{1-\rho} \max_{i} ||\xb_i ^2|| _2 ^2$.
Then, since $\ck >\rho^2 y^2 _k $, we can conclude that $\ck> (y_k-\xb_k ^\top \wb^{*}(k))^2$.
\end{proof}

\subsection{$\alpha$-submodularity}
Given the above results, we are now ready to present and prove our main result, which characterizes the objective function
of the optimization problem defined in Eq.~\ref{eq:optimization-problem-final}:
%
%
%
\begin{theorem}\label{th:g_xi_submod}
Assume $\ck~<~\gamma~y^2 _k,~\lambda~>~\frac{\gamma}{1-\gamma}~\displaystyle{\max_{i\in\Vcal}}~||\xb_i||_2 ^2$ with $0 \leq \gamma < 1$, and 
$\sum_{i\in \Vcal} \hci \ge 1$\footnote{Note that we can always rescale the data to satisfy this last condition.}. 
Then, the function $-\log \ell(\Scal)$ is a nondecreasing $\alpha$-submodular function\footnote{A function $f(\cdot)$ is $\alpha$-submodular~\cite{gatmiry2019} iff it satisfies that $f(\Scal \cup \{k\}) - f(\Scal) \geq (1-\alpha) \left[ f(\Tcal \cup \{k\}) - f(\Tcal) \right]$ for all $\Scal \subseteq \Tcal \subset \Vcal$ and $k \in \Vcal$, where $\Vcal$ is the ground set and $\alpha$ is the generalized curvature~\cite{lehmann2006combinatorial,bogunovic2018robust,hashemi2019submodular}.} and the parameter $\alpha$ satisfies that
\begin{align} \label{eq:xistar}
 \alpha \le \alpha^* =1-\min \Big\{ &\frac{  (1-\kappa_\ell) \log \ell(\Vcal) }{ \max_{k_1,k_2} f(\Vcal\cp \{k_1,k_2\}) - f(\Vcal \cp \{k_1\})}, 
 \frac{  (1-\kappa_\ell) \log \ell(\Vcal) }{ \max_{k} \log \ell(\Vcal\cp k) - \log \ell(\Vcal)}\Big\}
\end{align}
%
%
\begin{equation*}
\text{with, }\kappa_{\ell}= \frac{\log \left[ \ell(\emptyset) - \min_k ( \ell(\Vcal \cp k) - \ell(\Vcal))\right] }{\log \ell(\emptyset)}
\end{equation*}
\end{theorem}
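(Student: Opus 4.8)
The plan is to dispatch monotonicity immediately and then reduce $\alpha$-submodularity to two one-sided estimates of the marginal gains of $F(\Scal) \defeq -\log\ell(\Scal)$. Monotonicity of $F$ is exactly Theorem~\ref{thm:decellgen}: since $\log\ell$ is strictly nonincreasing, every marginal gain $\Delta_k(\Scal) \defeq F(\Scal\cup k) - F(\Scal) = \log\ell(\Scal) - \log\ell(\Scal\cup k)$ is strictly positive. Because all marginals are positive, the $\alpha$-submodularity inequality $F(\Scal\cup k) - F(\Scal) \ge (1-\alpha)[F(\Tcal\cup k) - F(\Tcal)]$ for $\Scal\subseteq\Tcal\subset\Vcal$ is equivalent to a lower bound on the ratio $\Delta_k(\Scal)/\Delta_k(\Tcal)$. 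Writing $1-\alpha^*$ as $A/\max\{B_1,B_2\}$, where $A = (1-\kappa_\ell)\log\ell(\Vcal)$ and $B_1,B_2$ are the two denominators appearing in Eq.~\ref{eq:xistar}, it therefore suffices to prove (a) a uniform lower bound $\Delta_k(\Scal) \ge A$ and (b) a uniform upper bound $\Delta_k(\Tcal) \le \max\{B_1,B_2\}$; dividing these yields the claim.

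For the denominator bound (b), I would use the difference-of-submodular representation $\log\ell = f - g$ from Eq.~\ref{eq:difference-equation}, where by Theorem~\ref{thm:fsubdecreasing} both $f$ and $g$ are nonincreasing and submodular. For $\Tcal\cup k \subset \Vcal$, I write $\Delta_k(\Tcal) = [f(\Tcal) - f(\Tcal\cup k)] - [g(\Tcal) - g(\Tcal\cup k)]$; the second bracket is nonnegative because $g$ is nonincreasing, so $\Delta_k(\Tcal)$ is at most the $f$-decrease $f(\Tcal)-f(\Tcal\cup k)$. Submodularity of $f$ makes this decrease nondecreasing in the conditioning set, so it only grows as $\Tcal$ enlarges; since $\Tcal\cup k$ is a proper subset there is some $k_1\notin\Tcal\cup k$, and pushing $\Tcal$ up to $\Vcal\cp\{k,k_1\}$ bounds the decrease by $f(\Vcal\cp\{k,k_1\}) - f(\Vcal\cp\{k_1\}) \le B_1$. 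The remaining boundary case $\Tcal = \Vcal\cp k$ gives $\Delta_k(\Vcal\cp k) = \log\ell(\Vcal\cp k) - \log\ell(\Vcal) \le B_2$ directly. Hence $\Delta_k(\Tcal)\le\max\{B_1,B_2\}$ in every case.

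The lower bound (a) is where I expect the real work, and $\kappa_\ell$ is the key device for passing from the logarithmic scale back to $\ell$ itself. Writing $\Delta_k(\Scal) = -\log(1 - d/\ell(\Scal))$ with $d = \ell(\Scal)-\ell(\Scal\cup k) \ge 0$ the linear-scale marginal decrease, and using that $-\log(1-x)$ is increasing, it suffices to lower-bound $d/\ell(\Scal)$ uniformly. Monotonicity of $\ell$ supplies $\ell(\Scal)\le\ell(\varnothing)$ for the denominator, and the crux is to show that the marginal decrease $d$ is smallest at the full set, i.e. $d \ge \min_k(\ell(\Vcal\cp k)-\ell(\Vcal))$. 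Granting this, $\Delta_k(\Scal) \ge -\log\left(1 - \min_k(\ell(\Vcal\cp k)-\ell(\Vcal))/\ell(\varnothing)\right)$, which by the definition of $\kappa_\ell$ equals $(1-\kappa_\ell)\log\ell(\varnothing) \ge (1-\kappa_\ell)\log\ell(\Vcal) = A$, the last inequality using $\ell(\varnothing)\ge\ell(\Vcal)\ge 1$ (the normalization $\sum_{i\in\Vcal}\hci\ge 1$ is what makes the logarithms nonnegative). The main obstacle is precisely establishing that the marginal decreases of $\ell$ are minimized at the full set: unlike $\log\ell = f-g$, the function $\ell$ is not handed to us as a difference of submodular functions, so this monotonicity-of-decreases must be argued directly---presumably via Lemma~\ref{lem:F_F_basic} together with the standing conditions on $\ck$ and $\lambda$---and it is the step I would scrutinize most.

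Finally I would combine (a) and (b): for every $\Scal\subseteq\Tcal\subset\Vcal$ and $k$, dividing the two bounds gives $\Delta_k(\Scal)/\Delta_k(\Tcal) \ge A/\max\{B_1,B_2\} = 1-\alpha^*$, which is exactly the defining inequality of $\alpha$-submodularity with $\alpha \le \alpha^*$. Throughout, care is needed at the set boundary, since $f(\Vcal)$ and $g(\Vcal)$ are given by the special min-definition rather than the determinant formula; I would verify, as in the derivation of Eq.~\ref{eq:ffinalSubmod}, that the estimates in (b) remain valid when those boundary values are substituted.
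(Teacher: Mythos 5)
Your plan is essentially the paper's own proof: monotonicity from Theorem~\ref{thm:decellgen}; the numerator lower bound via the $\kappa_\ell$ device (the paper phrases it as $\log\ell(\Scal\cup k)/\log\ell(\Scal)\le\kappa_\ell$ using that $x\mapsto\log(x-a)/\log x$ is increasing, Proposition~\ref{prop:tx}, rather than your $-\log(1-d/\ell(\Scal))$ rewriting, but the two are equivalent and land on the same constant $A=(1-\kappa_\ell)\log\ell(\Vcal)$); the denominator upper bound by dropping the nonnegative $g$-decrease in $\log\ell=f-g$ and pushing $\Tcal$ to a co-doubleton by submodularity of $f$, with the boundary case $\Tcal=\Vcal\cp k$ handled separately by the second term of the $\min$ --- exactly the split in the paper's Eq.~\ref{eq:xitwob}. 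The one step you leave open, $\ell(\Scal)-\ell(\Scal\cup k)\ge\ell(\Vcal\cp k)-\ell(\Vcal)$, is indeed the crux, and you are right that Lemma~\ref{lem:F_F_basic} alone does not give it (it only covers $\Scal=\Vcal\cp k$). In the paper this inequality is not a consequence of the \emph{statement} of Theorem~\ref{thm:decellgen} either, but of its appendix \emph{proof}, whose chain of matrix identities (Proposition~\ref{prop:XX1} plus positive semidefiniteness of a $2\times 2$ block matrix built from $\Omegab$ and $\Lambdab_1^{-1}$) actually establishes the quantitative bound $\ell(\Scal\cup k)\le\ell(\Scal)+\ck-y_k^2+y_k^2\xb_k^\top(\lambda\II+\xb_k\xb_k^\top)^{-1}\xb_k=\ell(\Scal)+\ell(\Vcal)-\ell(\Vcal\cp k)$; the stated strict monotonicity is just the corollary obtained by invoking Lemma~\ref{lem:F_F_basic} at the end. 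So to complete your argument you need only cite that computation (or reproduce it); with it in hand, everything else in your outline goes through as written.
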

\begin{proof}
Using that $\sum_{i\in V} \hci >1$ and the function $\ell$ is nonincreasing, we can conclude 
that $1 < \ell(\Vcal) < \ell(\Scal)$.
Then, it readily follows from the proof of Theorem~\ref{thm:decellgen} that
\begin{align}
1< \ell(\Scal\cup k)< &\ell(\Scal)-(\ell(\Vcal\cp k)-\ell(\Vcal))
\end{align}
Hence we have,
\begin{align}
 \frac{\log \ell(\Scal\cup k)}{ \log \ell(\Scal)} &\le \frac{\log \Big(\ell(\Scal) -(\ell(\Vcal\cp k) -\ell(\Vcal))  \Big)}{ \log \ell(\Scal)} 
\overset{(a)}{\le} \left. \frac{\log \Big(\ell_{\max} -(\ell(\Vcal\cp k) -\ell(\Vcal))  \Big)}{ \log \ell(\Scal)}\right|_{\Scal^* = \argmax_{\Scal\subseteq \Vcal} \ell(\Scal) } \nonumber \\ 
&\overset{(b)}{=}  \frac{\log \Big(\ell(\emptyset) -(\ell(\Vcal\cp k) -\ell(\Vcal))  \Big)}{ \log \ell(\emptyset)} 
 \leq \kappa_{\ell} \label{eq:Fineq} 
\end{align}
where inequality $(a)$ follows from Proposition~\ref{prop:tx} (refer to Appendix~\ref{app:auxiliary}) and 
equality $(b)$ follows from Theorem~\ref{thm:decellgen}, which implies that $\ell_{\max}=\ell(\emptyset)$. 
%
%
%
Then, we have that
\begin{align}
1-\alpha& =\min_{k,\, \Scal\subseteq \Tcal \subseteq \Vcal} \frac{\log \ell(\Scal)-\log \ell(\Scal\cup k)}{\log \ell(\Tcal) - \log \ell(\Tcal\cup k)}  \nn\\ 
& \ge \min \Big\{ \min_{k, \Scal \subseteq \Tcal : |\Tcal| \le |\Vcal|-2}   \frac{\log \ell(\Scal)-\log \ell(\Scal\cup k)}{\log \ell(\Tcal)-\log \ell(\Tcal\cup k)}, 
\min_{\Scal,k}\frac{\log \ell(\Scal)-\log \ell(\Scal\cup k)}{\log \ell(\Vcal \cp k)-\log \ell(\Vcal)} \Big\} \label{eq:xitwob}
\end{align}
Next, we bound the first term as follows:
\begin{align*}
   \min_{k, \Scal \subseteq \Tcal : |\Tcal| \le |\Vcal|-2}   \frac{\log \ell(\Scal)-\log \ell(\Scal\cup k)}{\log \ell(\Tcal)-\log \ell(\Tcal\cup k)}
  &\ \overset{(a)}{\ge} \min_{k, S\subseteq T: |\Tcal| \le |\Vcal|-2}   \frac{(1-\kappa_{\ell})\log \ell(\Scal)}{\log \ell(\Tcal)-\log \ell(\Tcal\cup k)} \nn \\
 &\ \overset{(b)}{\ge} \min_{k,  |\Tcal| \le |\Vcal|-2}   \frac{(1-\kappa_{\ell}) \log \ell(\Vcal)}{\log \ell(\Tcal)-\log \ell(\Tcal\cup k)}\nn \\
&\ =  \min_{k,  |\Tcal| \le |\Vcal|-2}   \frac{(1-\kappa_{\ell}) \log \ell(\Vcal)}{f(\Tcal)-f(\Tcal\cup k) - (g(\Tcal)-g(\Tcal\cup k))}\nn\\
&\ \overset{(c)}{\ge} \min_{k,  |\Tcal| \le |\Vcal|-2}   \frac{(1-\kappa_{\ell})\log \ell(\Vcal)}{f(\Tcal)-f(\Tcal\cup k) }\nn\\
&\ \overset{(d)}{\ge}  \frac{(1-\kappa_l)\log \ell(V)}{\max_{k_1,k_2}(f(\Vcal\cp \{k_1,k_2\})-f(\Vcal\cp k_1)) },
 \end{align*}
where inequality (a) follows from Eq.~\ref{eq:Fineq}, inequality (b) follows from the monotonicity of $\log \ell(\Scal)$, and inequalities (c) and (d) follows from 
Theorem~\ref{thm:fsubdecreasing}.
%
%
Finally, we use the monotonicity of $\log \ell(\Scal)$ and Eq.~\ref{eq:Fineq} to bound the second term in Eq.~\ref{eq:xitwob} 
as follows:
\begin{align*}
\min_{\Scal,\, k}& \frac{\log \ell(\Scal)-\log \ell(\Scal\cup k)}{\log \ell(\Vcal\cp k)- \log \ell(\Vcal)}\nn \geq
\frac{(1-\kappa_{\ell}) \log \ell(\Vcal)}{\max_{k} \log \ell(\Vcal \cp k) -\log \ell(\Vcal)},
\end{align*}
which concludes the proof.
\end{proof}
\begin{algorithm}[t] 
\small
\renewcommand{\algorithmicrequire}{\textbf{Input:}}
\renewcommand{\algorithmicensure}{\textbf{Output:}}
\caption{Greedy algorithm}\label{alg:greedy}
\begin{algorithmic}[1]
\Require Ground set $\Vcal$, set of training samples $\{ (\xb_i, y_i) \}_{i \in \Vcal}$, parameters $n$ and $\lambda$.
\Ensure Solution $(\wb^{*}(\Scal), \Scal)$
\State $\Scal \leftarrow \varnothing$
\While{$|\Scal| < n$}
	\State \mbox{\% Find best sample}
	\State $k^* \leftarrow \text{argmax}_{k \in \Vcal \backslash \Scal} -\log \ell(\Scal \cup k) + \log \ell(\Scal)$ 
	\State \mbox{\% Sample is outsourced to humans}
	\State $\Scal \leftarrow \Scal \cup \{ k^* \}$
\EndWhile \\
\Return $(\wb^{*}(\Scal), \Scal)$
\end{algorithmic}
\end{algorithm}

\subsection{A greedy algorithm}
The greedy algorithm proceeds iteratively and, at each step, it assigns to the humans the sample $(\xb_k, y_k)$ that provides 
the highest marginal gain among the set of samples $\sc$ which are currently assigned to the machine. Algorithm 1 summarizes the greedy algorithm.

Since the objective function in Eq.~\ref{eq:optimization-problem-final} is $\alpha$-submodular, it readily follows from Theorem~9 in~\citet{gatmiry2019} 
that the above greedy algorithm enjoys an approximation guarantee. More specifically, we have the following Theorem:
\begin{theorem} \label{thm:approximation-guarantee}
The greedy algorithm returns a set $\Scal$ such that $-\log \ell(\Scal) \geq (1+1/(1-\alpha))^{-1}OPT$, where $OPT$ is the optimal value and $\alpha \leq \alpha^{*}$ with $\alpha^{*}$ defined 
in Eq.~\ref{eq:xistar}.
\end{theorem}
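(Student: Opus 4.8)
The plan is to recognize that the statement is an immediate consequence of the structural properties of $-\log \ell(\Scal)$ established just above, combined with a generic approximation guarantee for greedy maximization of a monotone $\alpha$-submodular function under a cardinality constraint. Concretely, Theorem~\ref{thm:decellgen} shows that $-\log \ell(\Scal)$ is (strictly) nondecreasing and Theorem~\ref{th:g_xi_submod} shows that it is $\alpha$-submodular with generalized curvature $\alpha \le \alpha^{*}$. These are exactly the two hypotheses required by Theorem~9 of Khashayar and Gomez-Rodriguez~\cite{gatmiry2019}, whose conclusion for a cardinality-constrained greedy run is precisely $-\log \ell(\Scal) \ge (1+1/(1-\alpha))^{-1}\, OPT$. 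The first step is therefore simply to verify that the set function $-\log \ell$, viewed over the ground set $\Vcal$ with budget $n$, meets these hypotheses, and then to instantiate the cited theorem. No new analysis of $\ell$ is needed beyond what the previous results already supply.

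If instead one wants a self-contained derivation, I would reproduce the classical greedy argument adapted to $\alpha$-submodularity. Writing $F \defeq -\log \ell$, letting $\Scal$ be the greedy output and $\Scal^{*}$ an optimal set with $F(\Scal^{*}) = OPT$ and $|\Scal^{*}| \le n$, and writing $\Scal_t$ for the greedy set after $t$ steps, I would start from $OPT \le F(\Scal \cup \Scal^{*})$ (monotonicity). I would then telescope $F(\Scal \cup \Scal^{*}) - F(\Scal)$ over the elements $o_1,\dots,o_m$ of $\Scal^{*} \cp \Scal$ and bound each increment by $\alpha$-submodularity, so that each term is at most $(1-\alpha)^{-1}$ times the single-element marginal $F(\Scal \cup \{o_j\}) - F(\Scal)$. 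Combining with the greedy selection rule, which guarantees that at each step the chosen marginal gain dominates that of any unchosen element, yields a recursion on the optimality gap $OPT - F(\Scal_t)$ from which the constant is extracted. Equivalently, the target bound rewrites as $(1-\alpha)\big(OPT - F(\Scal)\big) \le F(\Scal) - F(\emptyset)$, so it suffices to control the summed optimal marginals by the accumulated greedy value.

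Two points deserve care. First, since Theorem~\ref{th:g_xi_submod} only provides the \emph{upper} bound $\alpha \le \alpha^{*}$ on the curvature, I would observe that the factor $(1+1/(1-\alpha))^{-1}$ is decreasing in $\alpha$, so stating the guarantee with the true $\alpha$ gives the strongest form and any valid curvature bound may be substituted without affecting validity. Second, the boundary behaviour at $\Scal = \Vcal$, where $f$, $g$, and hence $F$ were defined through the auxiliary min-expression, must remain compatible with the monotonicity and $\alpha$-submodularity invoked in the telescoping; this was exactly the purpose of those definitions and of Eq.~\ref{eq:ffinalSubmod} and Eq.~\ref{eq:ffinalmon}, so I would invoke those facts directly rather than re-examine the boundary.

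I expect the main obstacle to be not the invocation but extracting the \emph{exact} constant $(1+1/(1-\alpha))^{-1}$: the naive telescoping recursion most naturally produces a $1-e^{-(1-\alpha)}$ factor, and matching the cleaner but weaker-looking $\tfrac{1-\alpha}{2-\alpha}$ form requires the particular accounting used in~\cite{gatmiry2019}. For this reason the cleanest route is the direct application of their Theorem~9, treating the derivation above only as a sanity check that the hypotheses genuinely deliver a guarantee of this type.
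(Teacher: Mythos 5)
Your proposal matches the paper exactly: the paper offers no standalone proof and simply notes that, since $-\log \ell(\Scal)$ is nondecreasing and $\alpha$-submodular by Theorem~\ref{th:g_xi_submod}, the guarantee follows directly from Theorem~9 of \citet{gatmiry2019}. Your additional telescoping sketch and the remarks on the direction of the curvature bound are a sound sanity check but go beyond what the paper itself provides.
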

In the above, note that, due to Theorem~\ref{thm:decellgen}, the actual (regularized) loss function is strictly nonincreasing and thus the greedy algorithm 
always goes until $|S| = n$, however, the overall accuracy may be higher for some values of $|S| < n$ as shown in Figure~\ref{fig:realErrorWithC}. 
Next, we provide a formal analysis of the time complexity of Algorithm~\ref{alg:greedy} in our particular problem setting.
\begin{proposition}
%
The computational cost of Algorithm~\ref{alg:greedy} is $O(n d^2 |\Vcal|^2)$, where $n$ is the number of samples outsourced to humans, 
$d$ is the dimensionality of the feature vectors $\xb$ and $|\Vcal|$ is the number of training samples.
\end{proposition}
\begin{proof}
At each step of the $n$ steps of the greedy algorithm, we need to solve $|\Vcal|$ regularized least square problems. Solving each of these problems take 
each $O(|\Vcal| d^2)$ using Cholesky decomposition. Therefore, the overall complexity of the greedy algorithm is $O(n |\Vcal|^2 d^2)$.

\end{proof}

In the next section, we will demonstrate that, in addition to enjoying the above approximation guarantees, the greedy algorithm performs better in practice 
than several competitive baselines.

\section{Practical Deployment} 
\label{sec:implementation}
%
In practice, to deploy a model $(w^{*}(\Scal), \Scal)$ trained using a dataset $\{ (\xb_i, y_i) \}_{i \in \Vcal}$, we need to be able to decide whether to outsource 
any (unseen) test sample $\xb \neq \xb_i$ for all $i \in \Vcal$.
To this end, we train an additional model $h_{\theta}(\xb)$ to decide which samples to outsource to a human using the training set $\{ (\xb_i, d_i) \}_{i \in \Vcal}$, 
where $d_i = +1$ if $i \in \Scal$ and $d_i = -1$ otherwise. 
In our experiments, we consider three different types of models:

\vspace{1mm}
\noindent --- \emph{Nearest neighbours (NN)}: it outsources a sample $\xb$ to a human if the nearest neighbor in the set $\Vcal$ belongs to $\Scal$ and
and pass it on to the machine otherwise, \ie, $h_{\theta}(\xb) = +1$ if $\argmin_{i \in \Vcal} || \xb_i - \xb || \in \Scal$ and $h_{\theta}(\xb) = -1$ otherwise.

\noindent --- \emph{Logistic regression (LR)}: it outsources a sample $\xb$ to a human using a logistic regression classifier trained using maximum likelihood, \ie, 
$h_{\theta}(\xb) = +1$ if $\frac{1}{1 + \exp(\theta^{T} \xb)} > c$ and $h_{\theta}(\xb) = -1$ otherwise, where $c \in (0, 1)$.

\noindent --- \emph{Multilayer perceptron (MLP)}: it outsources a sample $\xb$ to a human using a multilayer perceptron, \ie, $h_\theta(\xb) = +1$ if $f_{\theta}(\xb) > c$ and 
$h_\theta(\xb) = -1$ otherwise, where $f_{\theta}(\xb)$ is a multilayer perceptron\footnote{\scriptsize https://scikit-learn.org/stable/modules/generated/sklearn.neural$\_$network.MLPClassifier.html.} and $c \in (0, 1)$.
\vspace{1mm}

\begin{algorithm}[t]
\small
\renewcommand{\algorithmicrequire}{\textbf{Input:}}
\renewcommand{\algorithmicensure}{\textbf{Output:}}
\caption{Practical framework for ridge regression under human assistance}\label{alg:framework}
\begin{algorithmic}[1]
\Require Training samples $\Dcal$, trained model $(\wb^{*}(\Scal), \Scal)$, and (unlabeled) test samples $\Xcal$.
\Ensure Set of predictions $\Dcal'$
\State $\Dcal' \leftarrow \emptyset$
\State $h_{\theta}(\xb) \leftarrow \textsc{TrainClassifier}(\Dcal, \Scal)$
\For{$\xb \in \Xcal$}
	\If{$h_{\theta}(\xb) > 0$}
		\State  $\hat{y} \leftarrow \textsc{Outsource}(\xb)$
	\Else
		\State $\hat{y} \leftarrow \xb^{T} \wb^{*}$
	\EndIf
	\State $\Dcal' \leftarrow \Dcal' \cup \{ (\xb, \hat{y}) \}$
\EndFor \\
\Return $\Scal$
\end{algorithmic}
\end{algorithm}

\noindent Algorithm~\ref{alg:framework} summarizes the above procedure. Within the algorithm, $\textsc{TrainClassifier}(\Dcal, \Scal)$ trains the
additional model $h_{\theta}(\xb)$ and $\textsc{Outsource}(\xb)$ outsources a sample $\xb$ to a human and returns the prediction made by the 
human.
Finally, note that, as long as the feature distribution does not change during test, the above pro\-ce\-du\-re guarantees that the fraction of samples outsourced to humans during 
training and test time will be similar.
More specifically, the following proposition formalizes this result for the case of nearest neighbours:
\begin{proposition}\label{prop:tt}
Let $\{(\xb_i, y_i)\}_{i\in \Vcal}$ be a set of training samples, $\{\xb' _j\}_{j\in \Vcal'}$ a set of (unlabeled) test samples, and $n$ and $n'$ the number of training and test 
samples outsourced to humans, respectively.
If $\xb_i, \xb_j \sim \PP(\xb)$ for all $i \in \Vcal, j \in \Vcal'$, then, it holds that $\EE[n'] / |\Vcal'|= n / |\Vcal|$.
\end{proposition}
\begin{proof}
Let the feature space be $\Fcal$. Moreover we denote that
$\Xcal=\{\xb_i\}_{i\in\Vcal}$ and $\Xcal'=\{\xb' _j\}_{j\in\Vcal'}$.
Then we denote that
 \begin{align}
H_{\xb_i}=\cap_{k\in\Vcal}\{\xb \in\Fcal | || \xb_i-\xb||\le || \xb_k-\xb||\}.
 \end{align}
%
 Hence, the set of test samples,  which are nearest to $\xb_i$, is denoted as $\Xcal'\cap H_{\xb_i}$.
 Since the features in $\Xcal$ and $\Xcal'$ are  i.i.d random variables, $|\Xcal'\cap H_{\xb_i}|$ are also i.i.d random variables
 for different realizations of $\Xcal$ and $\Xcal'$.  Let us define $\vartheta=\EE[|\Xcal'\cap H_{\xb_i}|]$.
 Hence we have,
$\EE[n'] =\sum_{i\in\Scal}\EE[|\Xcal'\cap H_{\xb_i}|] = n\vartheta$ and
$\EE[|\Vcal'|-n']  = (|\Vcal|-n)\vartheta$,
which leads to the required result.
\end{proof}
%

%
\section{Experiments on Synthetic Data}
\label{sec:synthetic}
%
In this section, we experiment with a variety of~syn\-the\-tic exam\-ples. First, we look into the solution $(\wb^{*}(\Scal^{*}), \Scal^{*})$ 
provided by the greedy algorithm.
%
%
Then, we compare the performance of the greedy algorithm with several competitive baselines.
Finally, we investigate how the performance of the greedy algorithm varies with respect to the amount of human error.
%
%
 \begin{figure*}[!t]
\centering
\hspace*{1.7cm}{\includegraphics[width=0.65\textwidth]{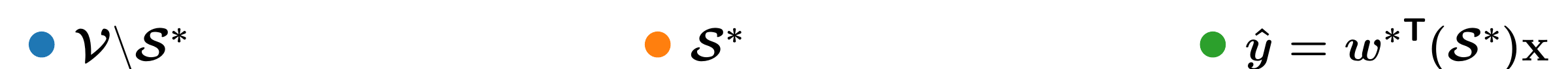}\label{fig:ltr}\vspace{0.1cm}}\\
\subfloat[$n=80$, Logistic]{\includegraphics[width=0.24\textwidth]{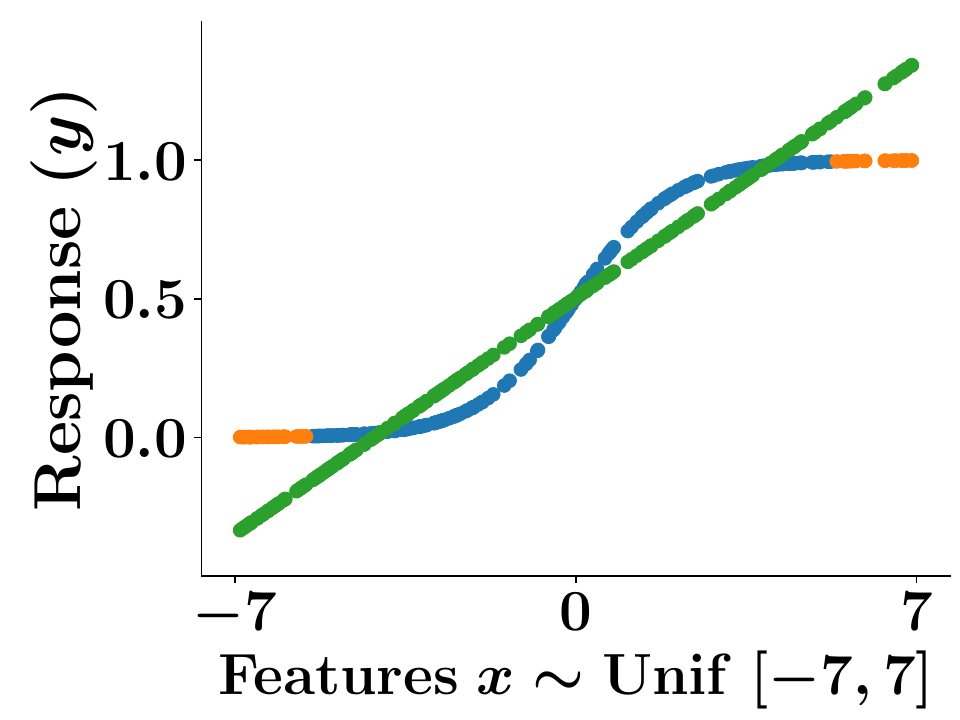}\label{fig:ltr}}\hspace*{0.2cm}
\subfloat[$n=160$, Logistic]{\includegraphics[width=0.24\textwidth]{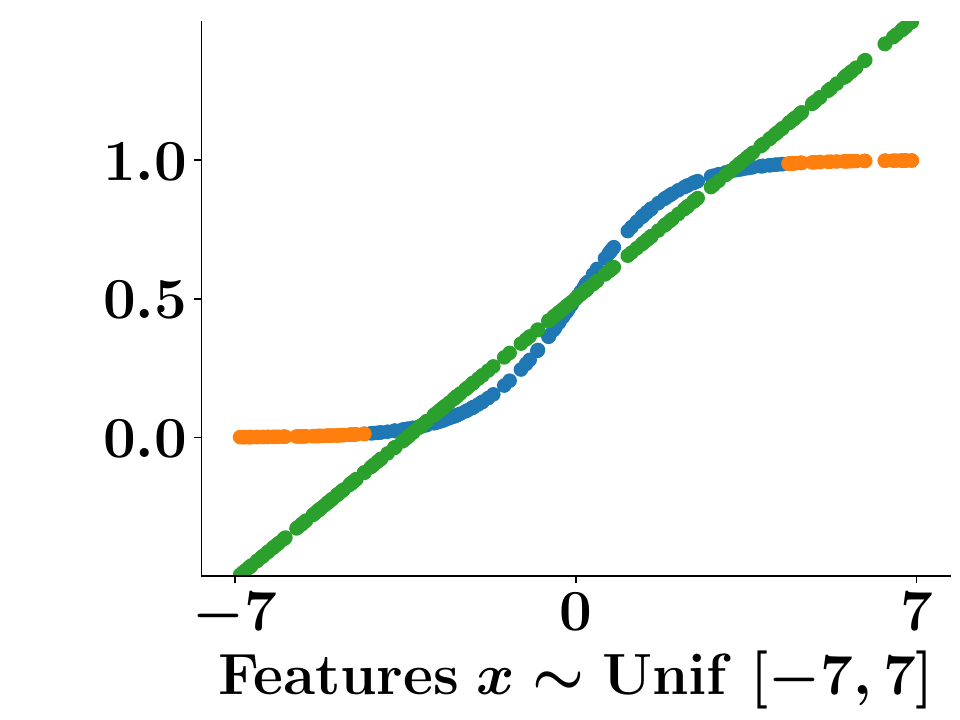}\label{fig:ltr}}\hspace*{0.2cm}
\subfloat[$n=240$, Logistic]{\includegraphics[width=0.24\textwidth]{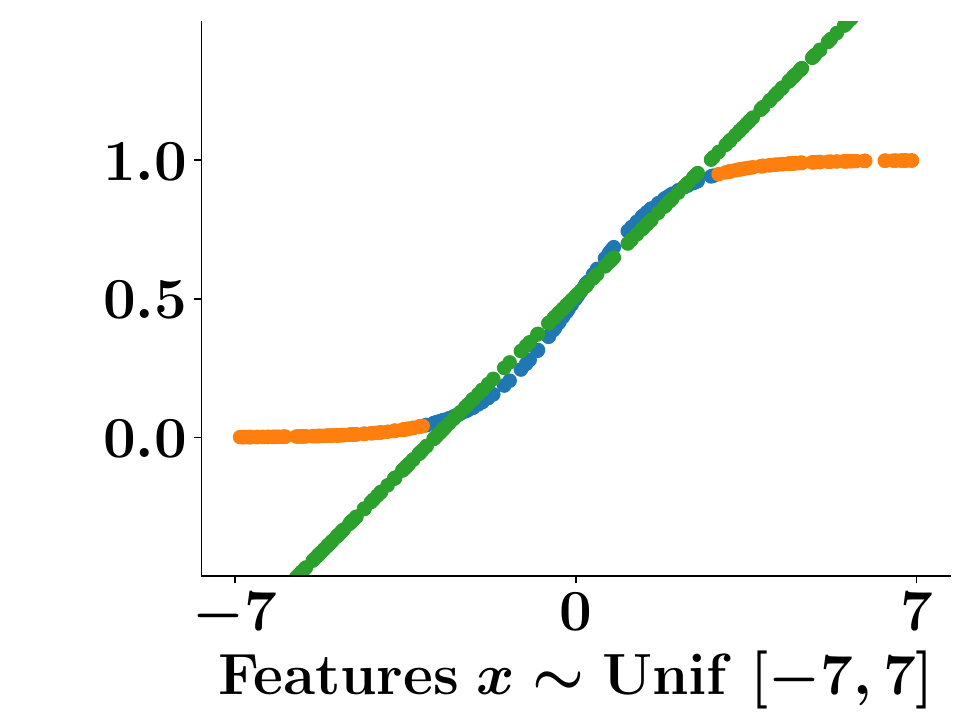}\label{fig:ltr}}\hspace*{0.2cm}
\subfloat[$n=320$, Logistic]{\includegraphics[width=0.24\textwidth]{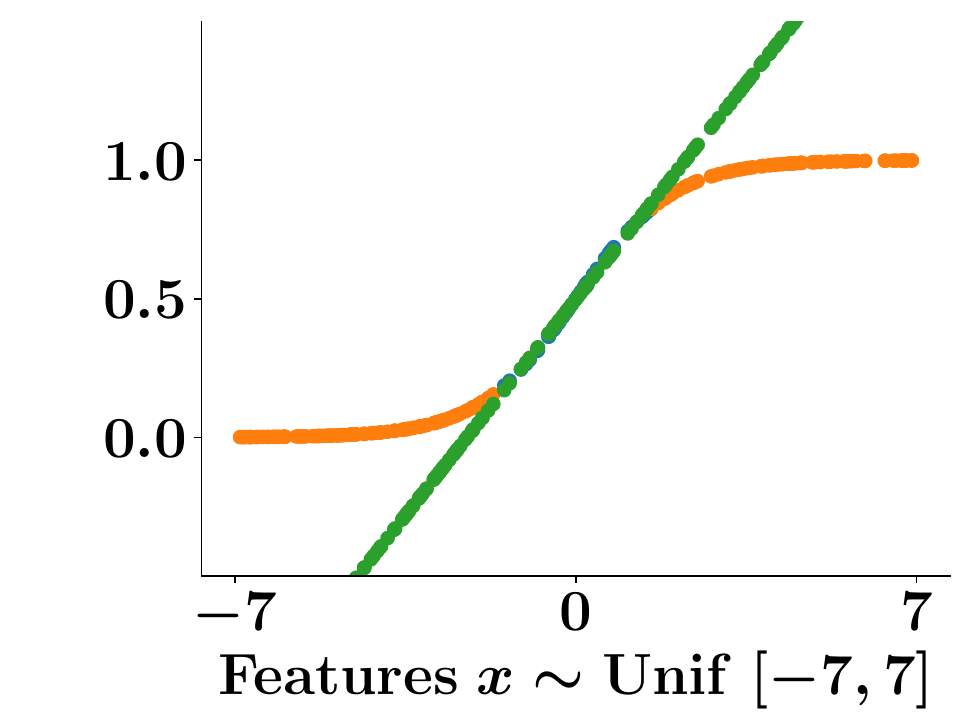}\label{fig:ltr}}\\
\subfloat[$n=80$, Gaussian]{\includegraphics[width=0.24\textwidth]{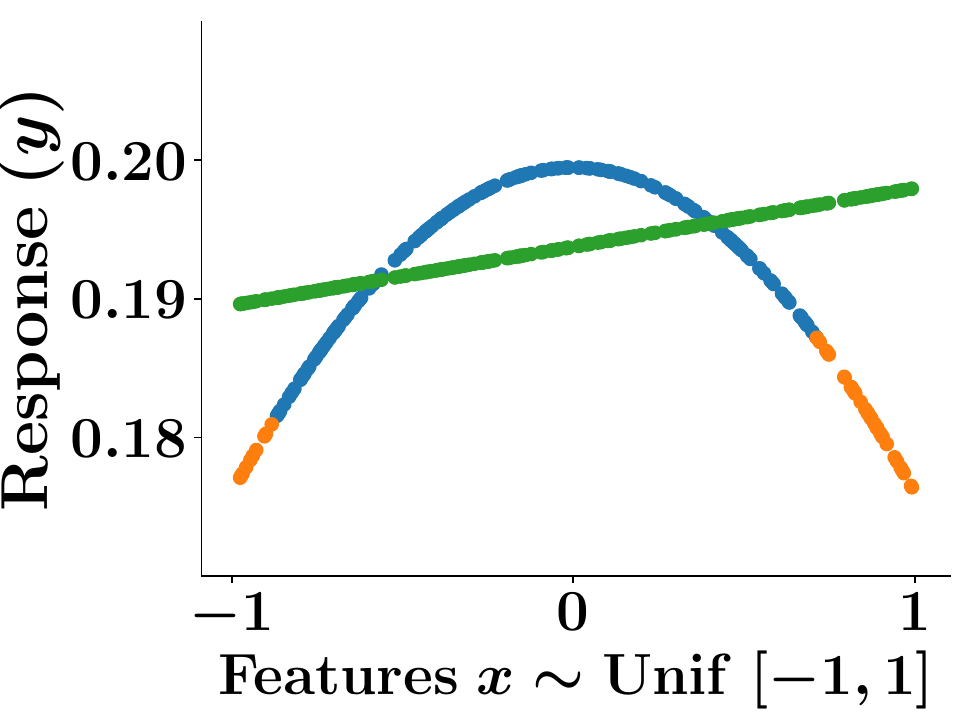}\label{fig:ltr}}\hspace*{0.2cm}
\subfloat[$n=160$, Gaussian]{\includegraphics[width=0.24\textwidth]{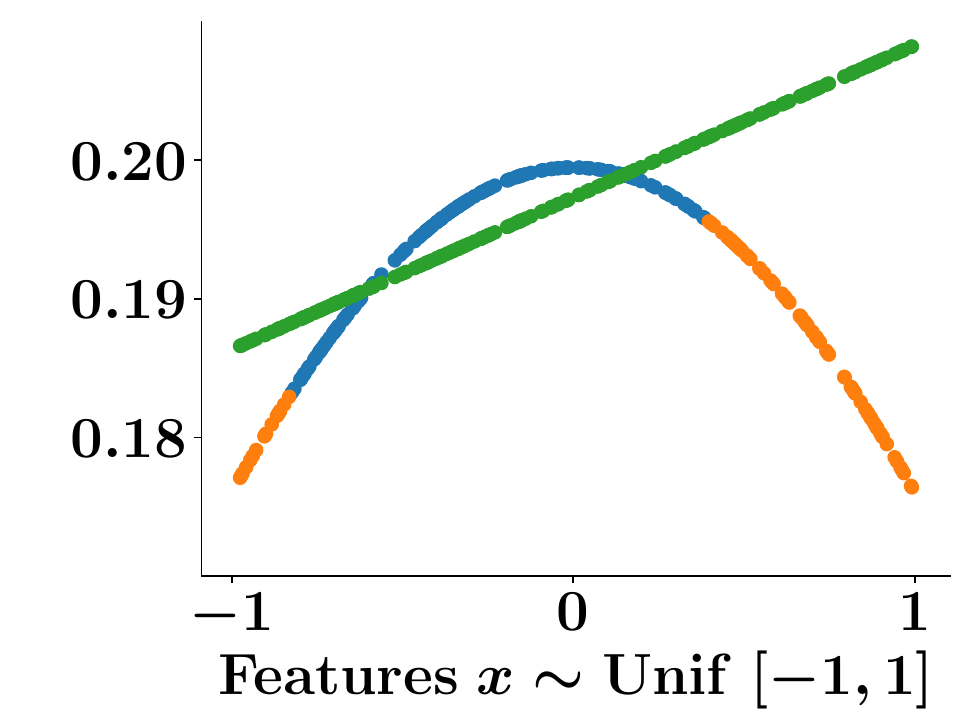}\label{fig:ltr}}\hspace*{0.2cm}
\subfloat[$n=240$, Gaussian]{\includegraphics[width=0.24\textwidth]{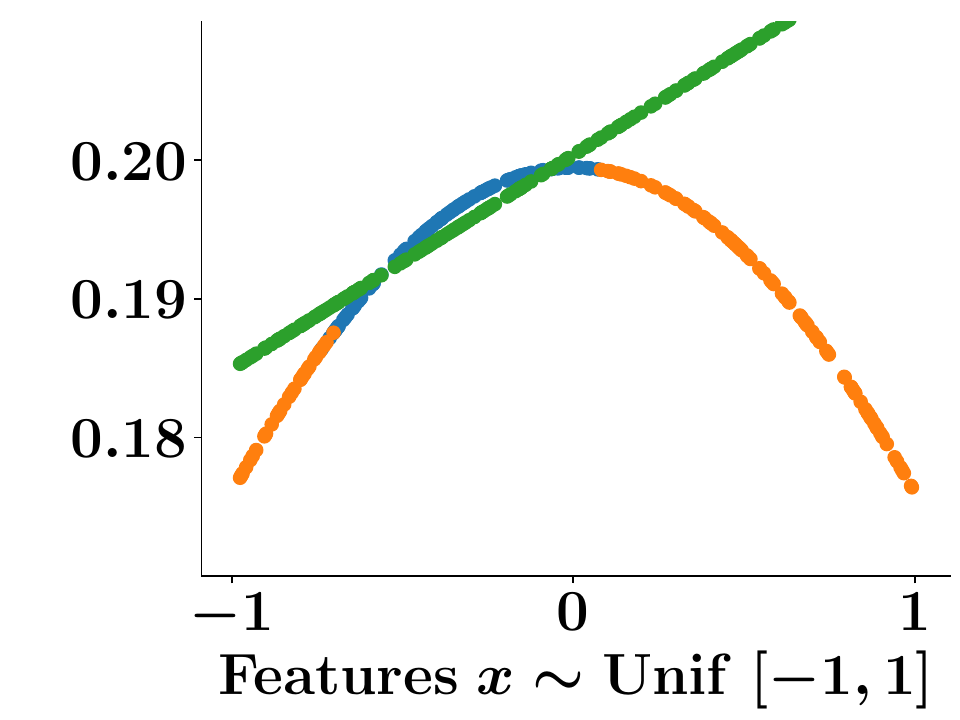}\label{fig:ltr}}\hspace*{0.2cm}
\subfloat[$n=320$, Gaussian]{\includegraphics[width=0.24\textwidth]{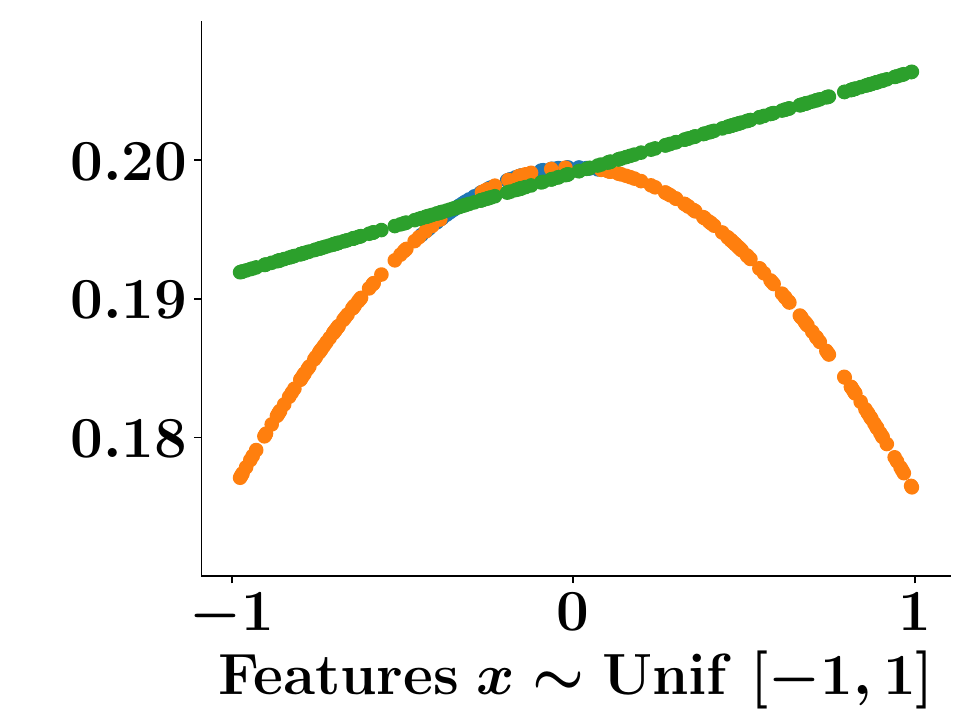}\label{fig:ltr}}
\caption{Solution $(\wb^{*}(\Scal^{*}), \Scal^{*})$ provided by our greedy algorithm for a gaussian and logistic response 
variable distribution and different number of outsourced samples $n$. In all cases, we used $d = 1$ and $\sigma_2=0.001$.
For the logistic distribution, as $n$ increases, the greedy algorithm let the machine to focus on the samples 
where the relationship between features and the response variables is more linear and outsource the remaining points to humans.
For the gaussian distribution, as $n$ increases, the greedy algorithm outsources samples on the tails of the 
distribution to humans.
}
 \label{fig:qualSyn}
\end{figure*}

\subsection{Experimental setup} For each sample $(\xb, y)$, we first generate each dimension of the feature vector $\xb \in \RR^{d}$ from a uniform distribution,
\ie, $x_i \sim U(-a,a)$, where $a$ is a given parameter, and then sample the response variable $y$ from a Gaussian distribution $\Ncal({\mathbf{1}^\top \xb}/d, \sigma_1 ^2)$ 
or a logistic distribution $1/(1+\exp(-\mathbf{1}^{\top} \xb/d))$. 

%
Moreover, we sample the associated human error from a Gaussian distribution, \ie, $c(\xb,y)\sim \Ncal(0,\sigma_2 ^2)$.
In each experiment, we use $|\Vcal| = 400$ training samples and we compare the performance of the greedy algorithm with four competitive baselines on
a held-out set of $100$ test samples:

\vspace{1mm}
\noindent --- An iterative heuristic algorithm (DS) for ma\-xi\-mi\-zing the difference of submodular functions by~\citet{iyer2012algorithms}.

\noindent --- A greedy algorithm (Distorted greedy) for maximizing $\gamma$-weakly submodular functions by~\citet{harshaw2019submodular}\footnote{Note 
that any $\alpha$-submodular function is $\gamma$-weakly submodular~\cite{gatmiry2019}.}.

\noindent --- A natural extension of the algorithm (Triage) by~\citet{raghu2019algorithmic}, originally developed for classification under human assistance,
which first solve the standard ridge regression problem for the entire training set and then outsources to humans the top $n$ samples sorted in decreasing 
order of the difference between machine and human error.
%


\noindent --- A natural extension of the iterative algorithm for robust least square regression by~\citet{bhatia2017consistent}, which includes L2 regularization (CRR). 
Within the algorithm, the $||\mathbf{b}||_0 = n$ samples identified as \emph{corrupt} are outsourced to humans.

\vspace{1mm}

 \begin{figure}[!t]
\centering
\hspace*{0.5cm}{\includegraphics[width=0.55\textwidth]{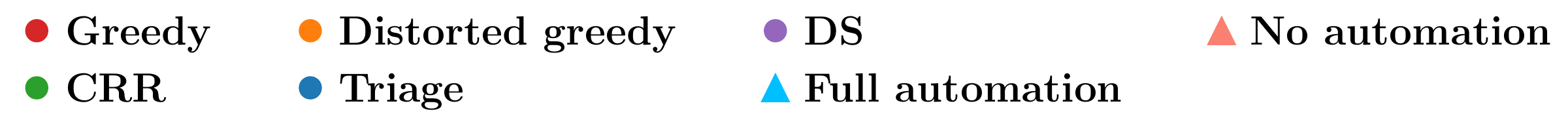}\label{fig:ltr}\vspace{-0.2cm}}\\
\subfloat[Gaussian]{\includegraphics[width=0.28\textwidth]{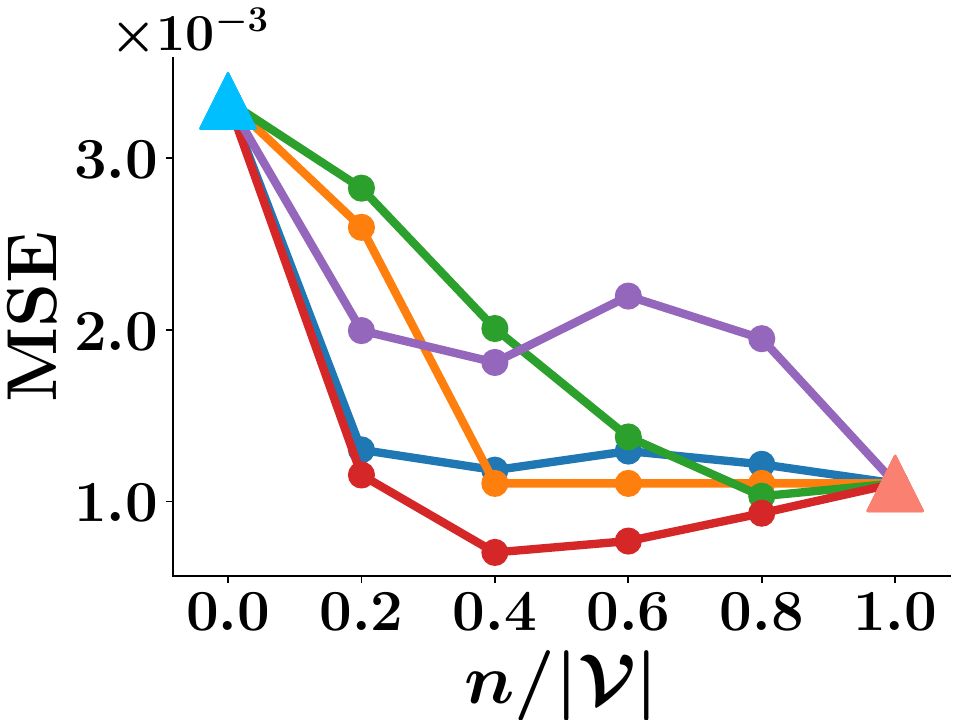}\label{fig:gtest}}\hspace*{0.5cm}
\subfloat[Logistic]{\includegraphics[width=0.28\textwidth]{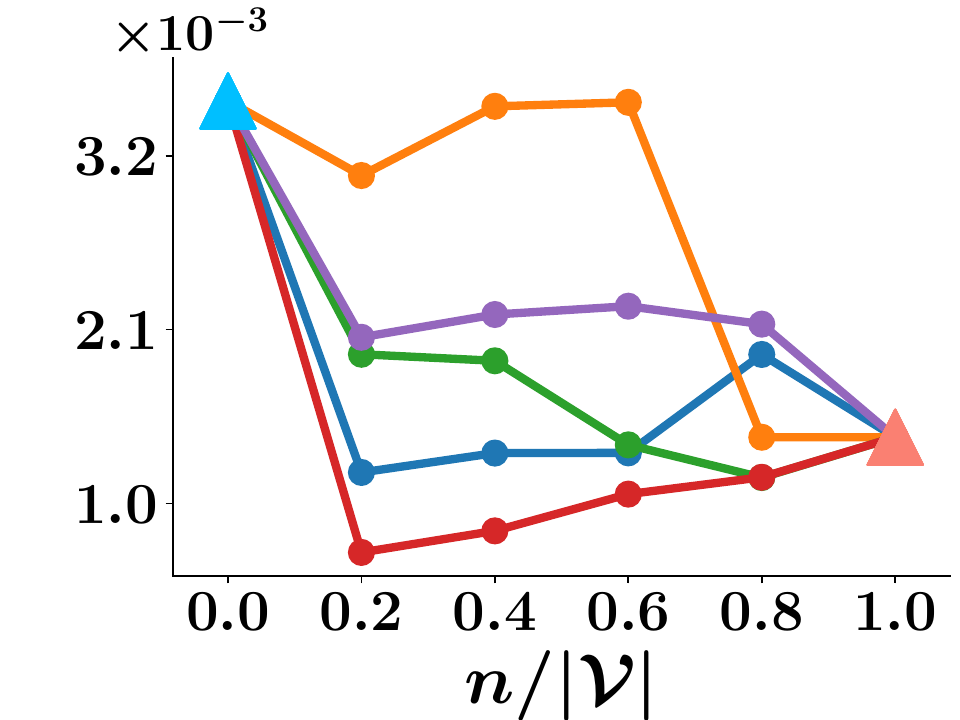}\label{fig:lte}} 
\caption{Mean squared error (MSE) against number of outsourced samples $n$ for the proposed greedy algorithm, DS~\cite{iyer2012algorithms}, 
distorted greedy~\cite{harshaw2019submodular}, Triage~\cite{raghu2019algorithmic} and CRR~\cite{bhatia2017consistent} on synthetic data. 
In all cases, we used the MLP model $h_{\theta}(\xb)$, $d=5$, $\sigma_2=10^{-3}$, $\lambda = 5 \cdot 10^{-3}$ and, for clarity, we explicitly 
highlight the performance under no automation and full automation.
The greedy algorithm consistently outperforms the baselines 
across the entire range of automation levels. For most automation levels, the competitive advantage provided by the greedy
algorithm is statistically significant (Welch'{}s t-test, $p$-value $= 10^{-3}$).}
\label{fig:quantSyn}
\end{figure} 
 \begin{figure}[!t]
\centering

\hspace*{0.5cm}{\includegraphics[width=0.55\textwidth]{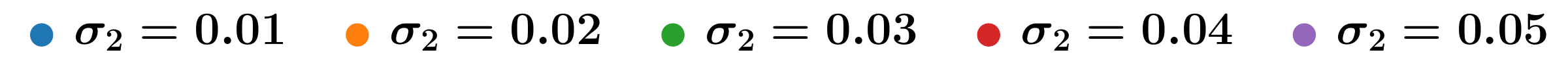}\label{fig:ltr}\vspace{-0.2cm}}\\
\subfloat[Gaussian]{\includegraphics[width=0.28\textwidth]{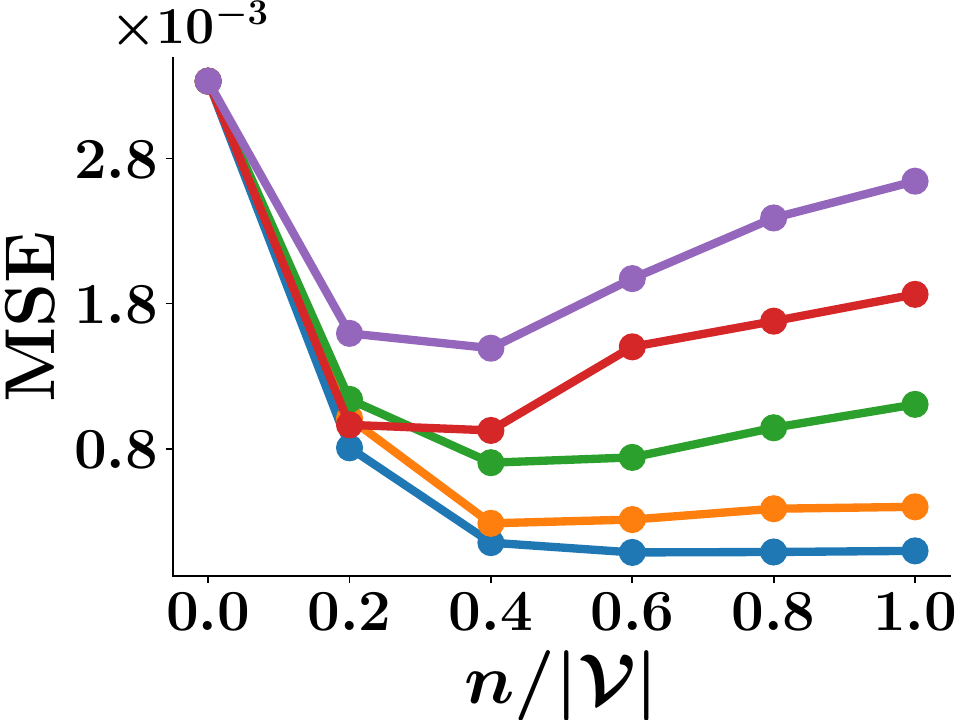}\label{fig:gtest}}\hspace*{0.5cm}
\subfloat[Logistic]{\includegraphics[width=0.28\textwidth]{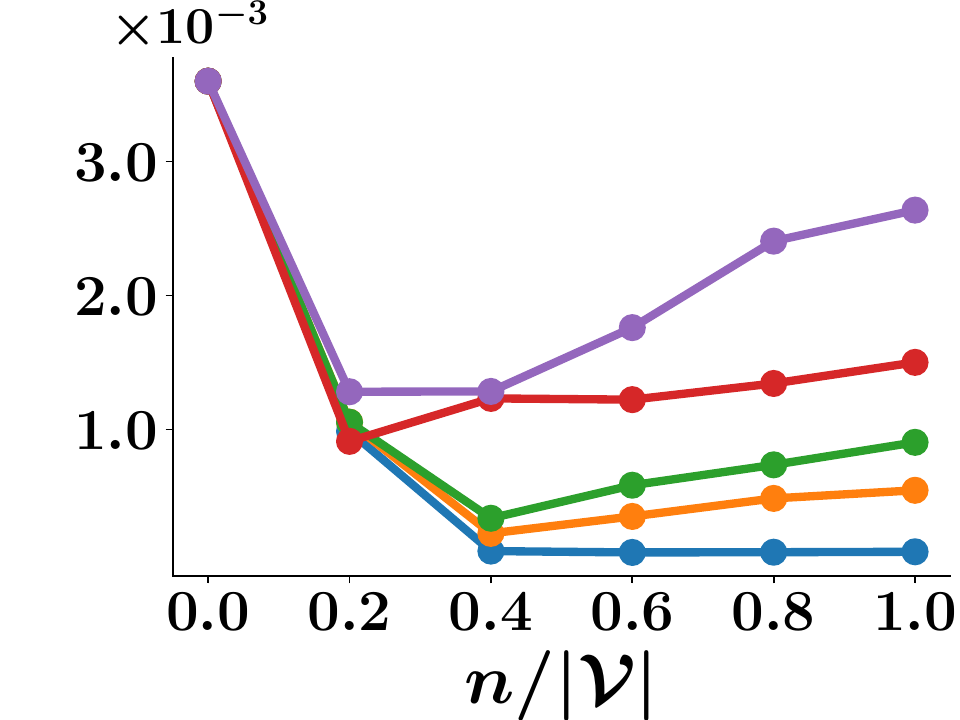}\label{fig:lte}} 
\caption{Mean squared error (MSE) achieved by the proposed greedy algorithm against the number of outsourced samples $n$ for 
different levels of human error ($\sigma_2$) on synthetic data. In all cases, we used the MLP model $h_{\theta}(\xb)$, $d=5$ and $\lambda = 5 \cdot 10^{-3}$.
%
For low levels of human error, the overall mean squared error decreases monotonically with respect to the number of outsourced samples.
In contrast, for high levels of human error, it is not beneficial to outsource samples to humans.}
 \label{fig:SynHumanVariation}
\end{figure}

Moreover, given the solutions provided by the above methods, we independently train three different (additional) models $h_{\theta}(\xb)$---nearest 
neighbors (NN), logistic regression (LR) and multilayer perceptron (MLP)---to decide which samples on the held-out set to outsource to humans. 
To train the LR and MLP models, we used the python package sklearn. 
More specifically, for the MLP model, we used the Adam optimizer with learning rate $10^{-3}$, set the hidden layer size as $1000$ and the number of 
epochs as 500 and, for the LR model, we used the liblinear solver and set the number of maximum epochs as $100$.
 \begin{figure}[!t]
\centering
\hspace*{0.5cm}{\includegraphics[width=0.55\textwidth]{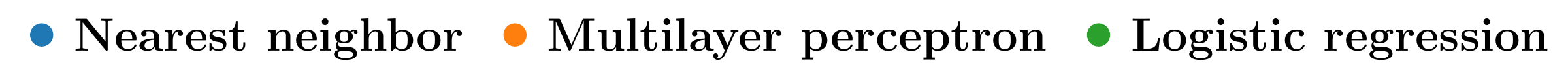}\label{fig:ltr}\vspace{-0.2cm}}\\
\subfloat[Gaussian]{\includegraphics[width=0.28\textwidth]{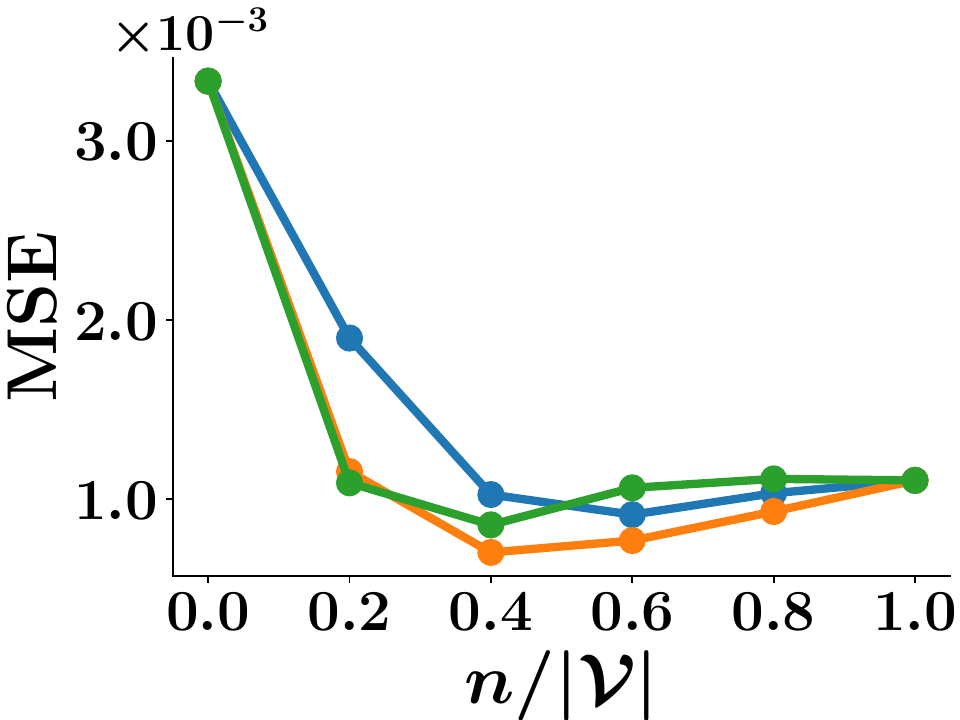}\label{fig:ghmodels}}\hspace*{0.5cm}
\subfloat[Logistic]{\includegraphics[width=0.28\textwidth]{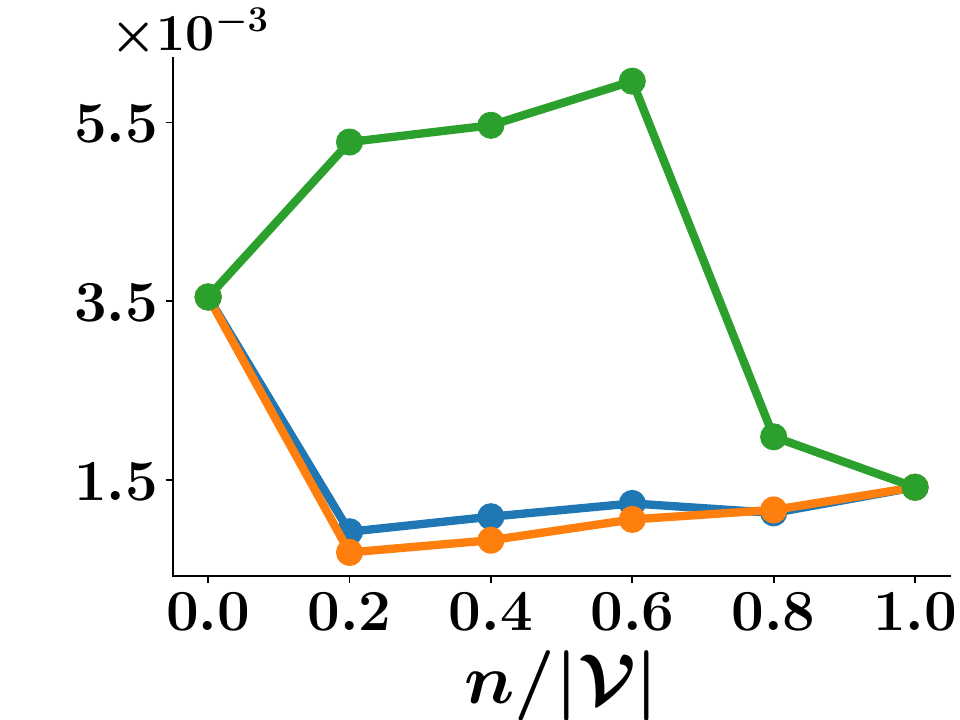}\label{fig:lhmodels}} 
\caption{Mean squared error (MSE) against number of outsourced samples $n$ for the proposed greedy algorithm and three different
models $h_{\theta}(\xb)$ on synthetic data. 
In all cases, we used $d=5$, $\sigma_2=10^{-3}$ and $\lambda = 5 \cdot 10^{-3}$.
NN and MLP achieve a comparable performance and they beat LR across a majority of automation levels for the Logistic dataset.
}
\label{fig:hmodels}
\end{figure}

\subsection{Results}
First, we look into the solution $(\wb^{*}(\Scal^{*}), \Scal^{*})$ provided by the greedy algorithm both for the Gaussian and logistic 
distributions and a different number of outsourced samples $n$. 
Figure~\ref{fig:qualSyn} summarizes the results, which reveal several interesting insights. For the logistic distribution, as $n$ increases, the 
greedy algorithm let the machine to focus on the samples where the relationship between features and the response variables is more linear 
and outsource the remaining points to humans.
For the Gaussian distribution, as $n$ increases, the greedy algorithm outsources samples on the tails of the distribution to humans.

Second, we compare the performance of the greedy algorithm against four competitive baselines in terms of mean squared error (MSE) on a held-out set. 
Figure~\ref{fig:quantSyn} summarizes the results, where we use the MLP model $h_{\theta}(\xb)$ for all methods. The results show that the greedy algorithm 
consistently outperforms the baselines for all automation levels and, for most automation levels, the competitive advantage provided by the greedy algorithm is statistically significant 
(Welch'{}s t-test, $p$-value $= 10^{-3}$). 
Also, using the greedy algorithm, we find automation levels under which humans and machines working together achieve better 
performance than machines on their own (Full automation) as well as humans on their own (No automation).
We obtained qualitatively similar results using the NN and LR models $h_{\theta}(\xb)$ (refer to Appendix~\ref{app:syn-expts}).

Next, we investigate how the performance of our greedy algorithm varies with respect to the amount of human error.
Figure~\ref{fig:SynHumanVariation} summarizes the results, where we again use the MLP model $h_{\theta}(\xb)$. The results show that, for low levels of human error, 
the overall mean squared error decreases monotonically with respect to the number of outsourced samples to humans.
In contrast, for high levels of human error, it is not beneficial to outsource samples.

Finally, we compare the performance of the greedy algorithm in terms of mean squared error (MSE) on a held-out set for the above mentioned 
models $h_{\theta}(\xb)$, \ie, nearest neighbors (NN), logistic regression (LR) and multilayer perceptron (MLP). Figure~\ref{fig:hmodels} 
summarizes the results, which show that NN and MLP achieve a comparable performance and they beat LR across a majority of automation levels 
for the Logistic dataset.
%

\section{Experiments on Real Data}
\label{sec:real}
In this section, we experiment with four real-world datasets from two important applications, medical diagnosis and content moderation. 
%
First, we look closely at the samples that our algorithm outsources to humans to better understand to which extent our algorithm has the ability to learn 
the underlying relationship between a given sample and its corresponding human and machine error.
Then, we compare the performance of the greedy algorithm with the same competitive baselines as in the experiments on synthetic data.
%
%
Finally, we perform a detailed sensitivity analysis that evaluate the robustness of our algorithm with respect to the choice of additional model $h_{\theta}(\xb)$ as well as with respect to several hyperparameters used in pre-processing of the four real-world datasets.
 \begin{figure}[!t]
\centering
\subfloat[Easy sample]{\includegraphics[width=0.20\textwidth]{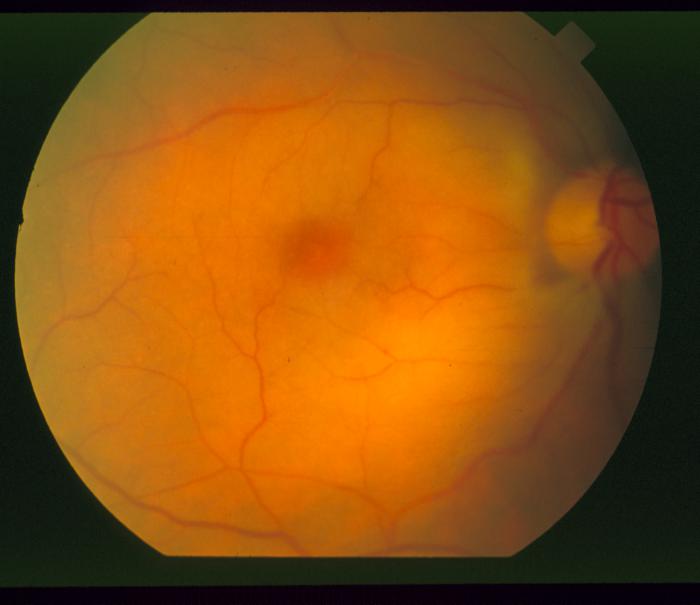}\label{fig:im4}}\hspace*{2cm}
\subfloat[Difficult sample]{\includegraphics[width=0.20\textwidth]{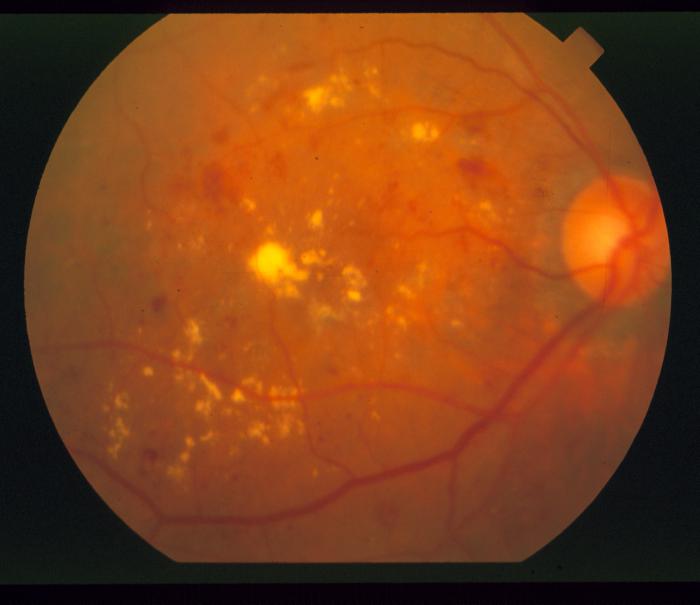}\label{fig:im58}}  
\caption{An easy and a difficult sample image from the Stare-D dataset. Both images are given a score of severity zero for the Drusen disease, which is 
characterized by pathological yellow spots. The easy sample does not contain yellow spots and thus it is easy to predict its score. In contrast, the difficult 
sample contains yellow spots, which are manifested not from Drusen, but diabetic retinopathy, and thus it is challenging to accurately predict its score.
As a result, the greedy algorithm decides to outsource the difficult sample to humans, whereas it lets the machine decide about the easy one. 
} \vspace{-3mm}
\label{fig:examples}
\end{figure}

\begin{figure}[!t]
\centering
\hspace*{0.5cm}{\includegraphics[width=0.5\textwidth]{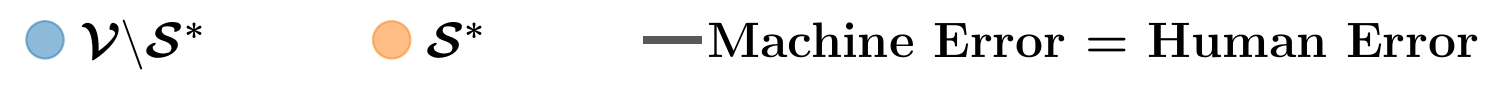}\label{fig:ltr}\vspace{-0.2cm}}\\
\subfloat{\includegraphics[width=0.22\textwidth]{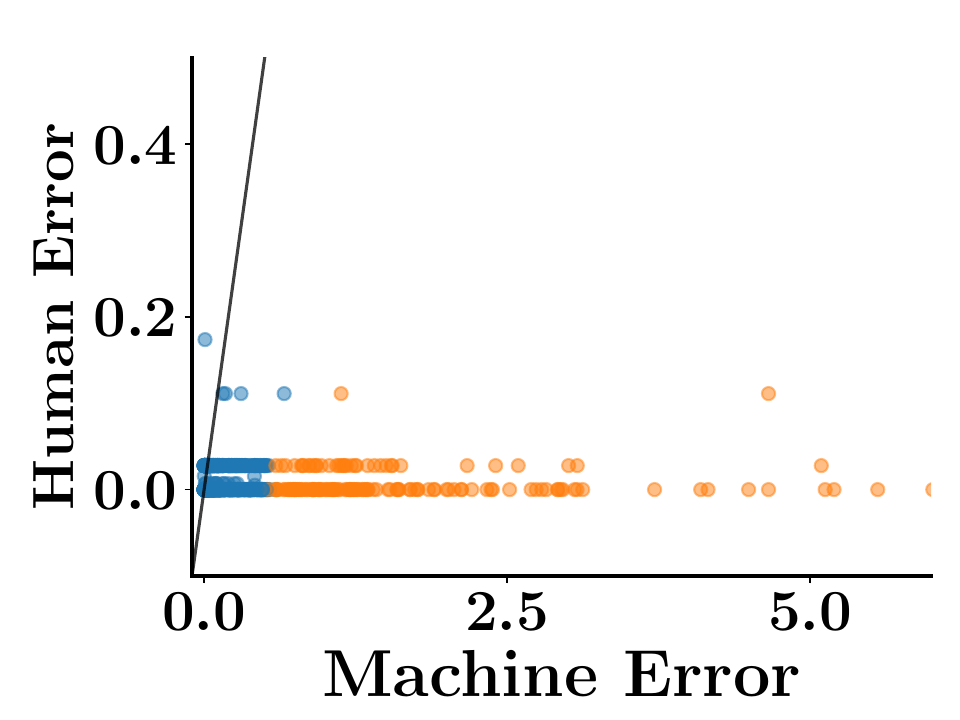}\label{fig:gtests}}\hspace*{0.2cm}
\subfloat{\includegraphics[width=0.22\textwidth]{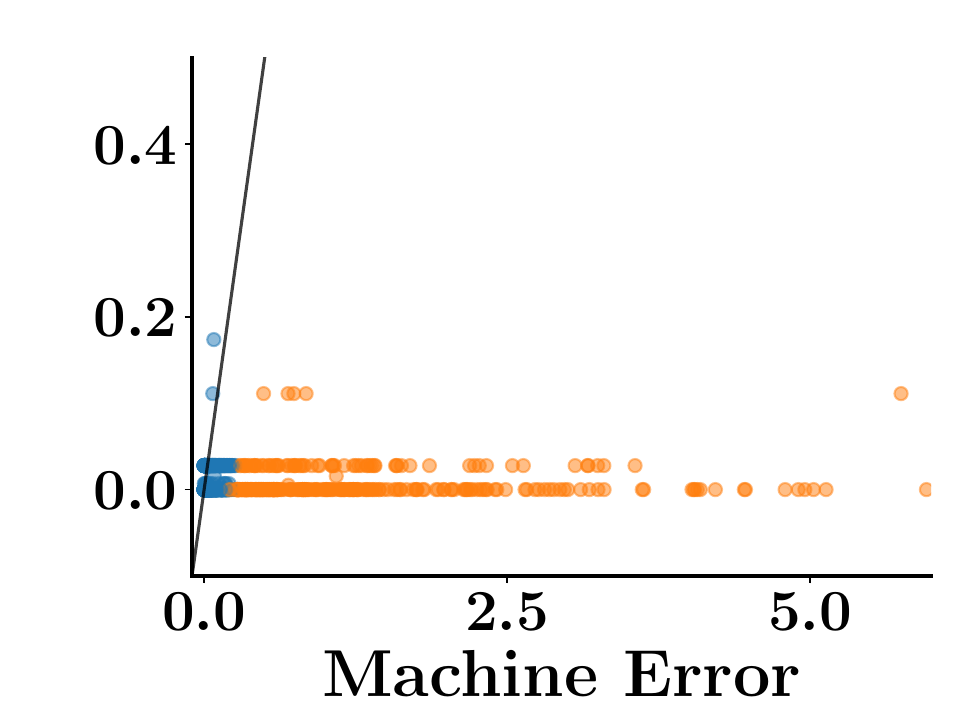}\label{fig:ltes1}} \hspace*{0.2cm}
\subfloat{\includegraphics[width=0.22\textwidth]{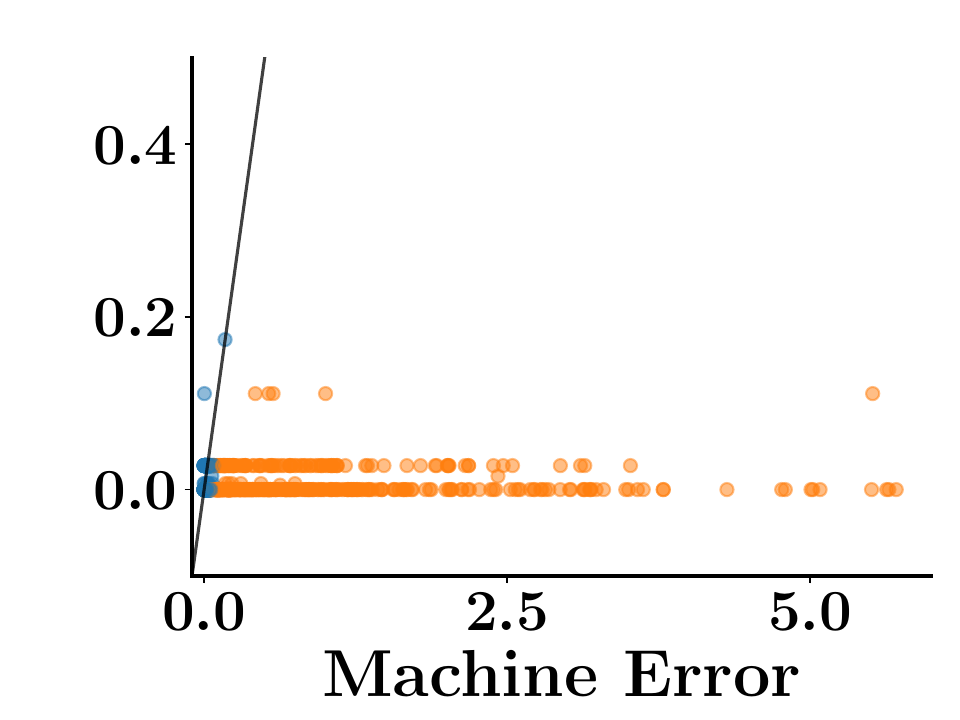}\label{fig:lte22}} \hspace*{0.2cm}
\subfloat{\includegraphics[width=0.22\textwidth]{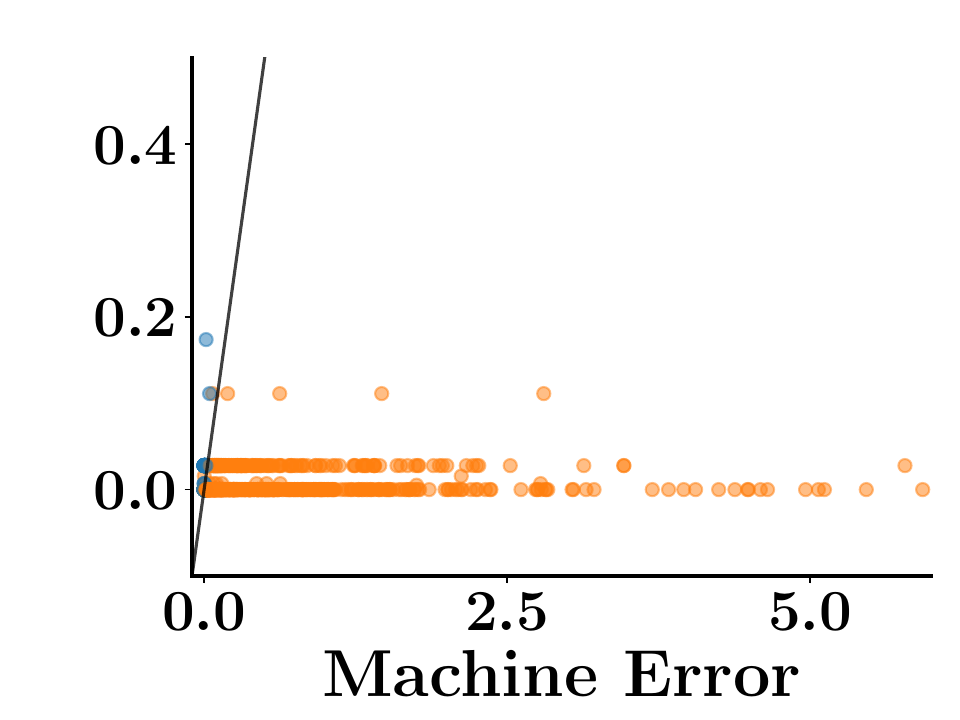}\label{fig:lte23}} \\
\subfloat{\includegraphics[width=0.22\textwidth]{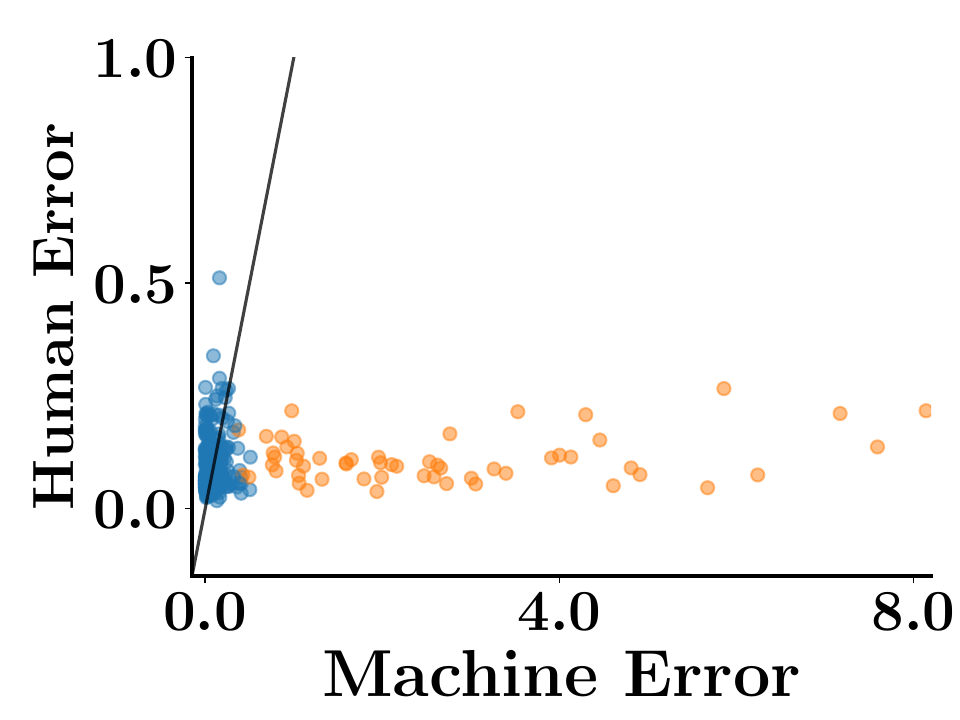}\label{fig:gtests}}\hspace*{0.2cm}
\subfloat{\includegraphics[width=0.22\textwidth]{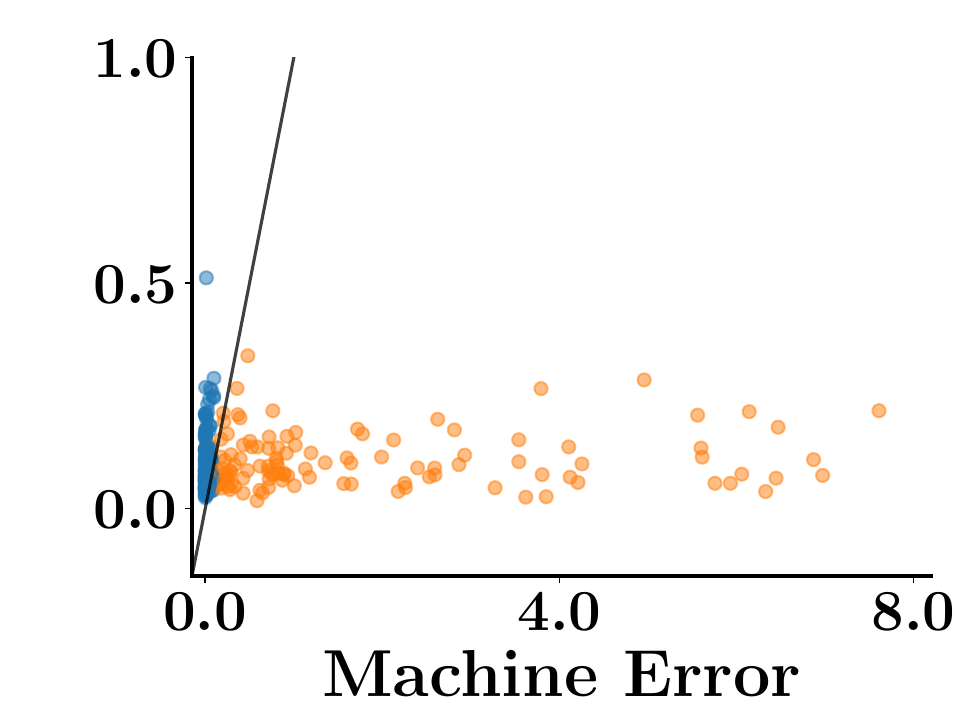}\label{fig:ltes1}} \hspace*{0.2cm}
\subfloat{\includegraphics[width=0.22\textwidth]{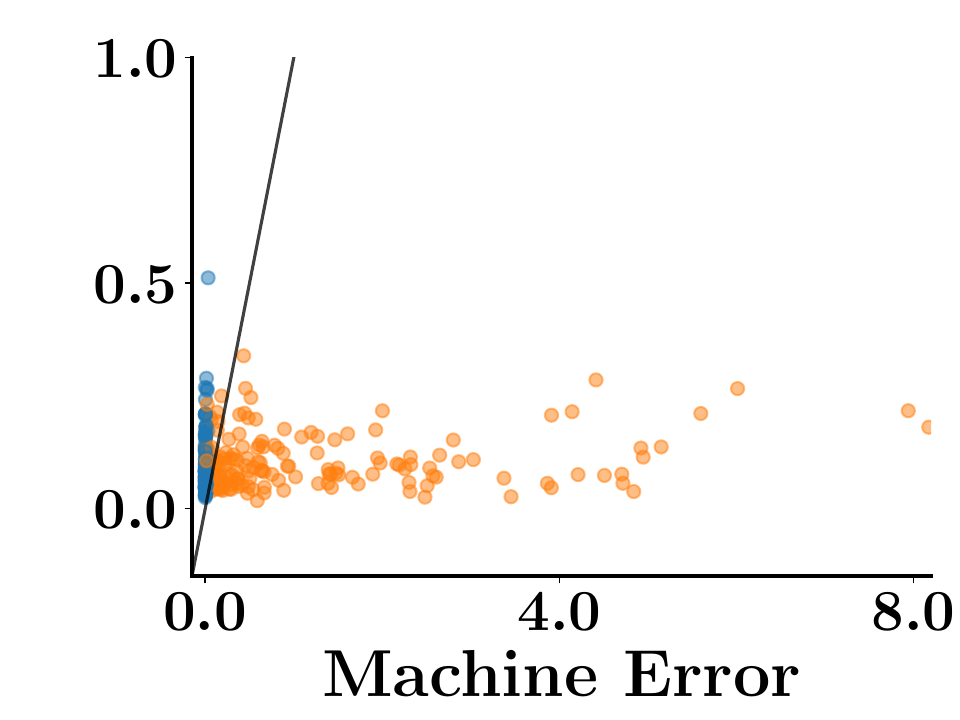}\label{fig:lte22}} \hspace*{0.2cm}
\subfloat{\includegraphics[width=0.22\textwidth]{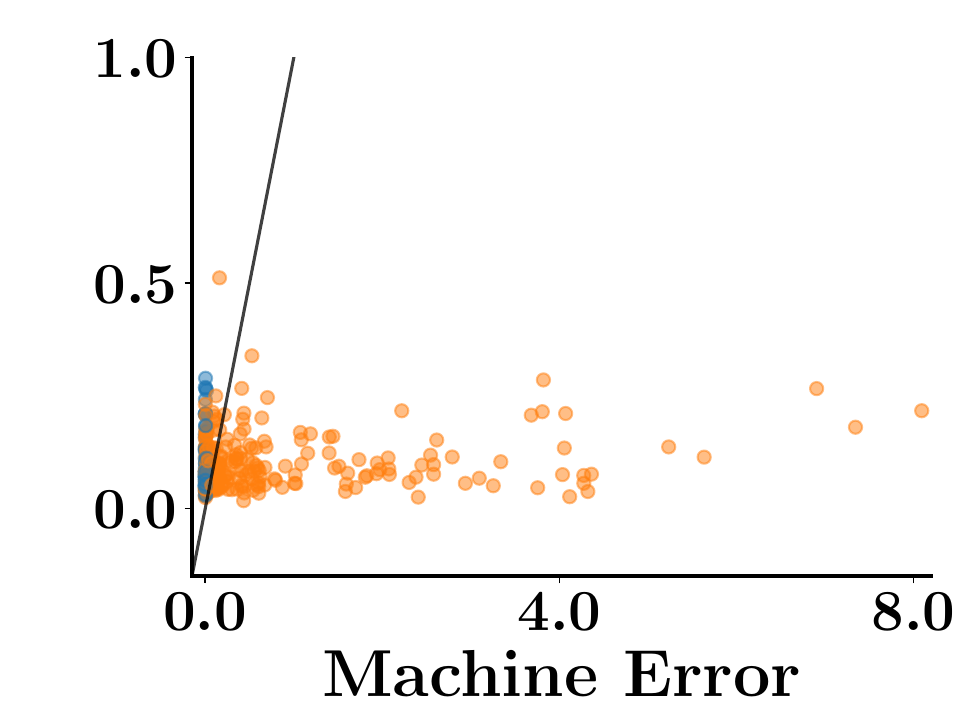}\label{fig:lte23}} \\
\subfloat{\includegraphics[width=0.22\textwidth]{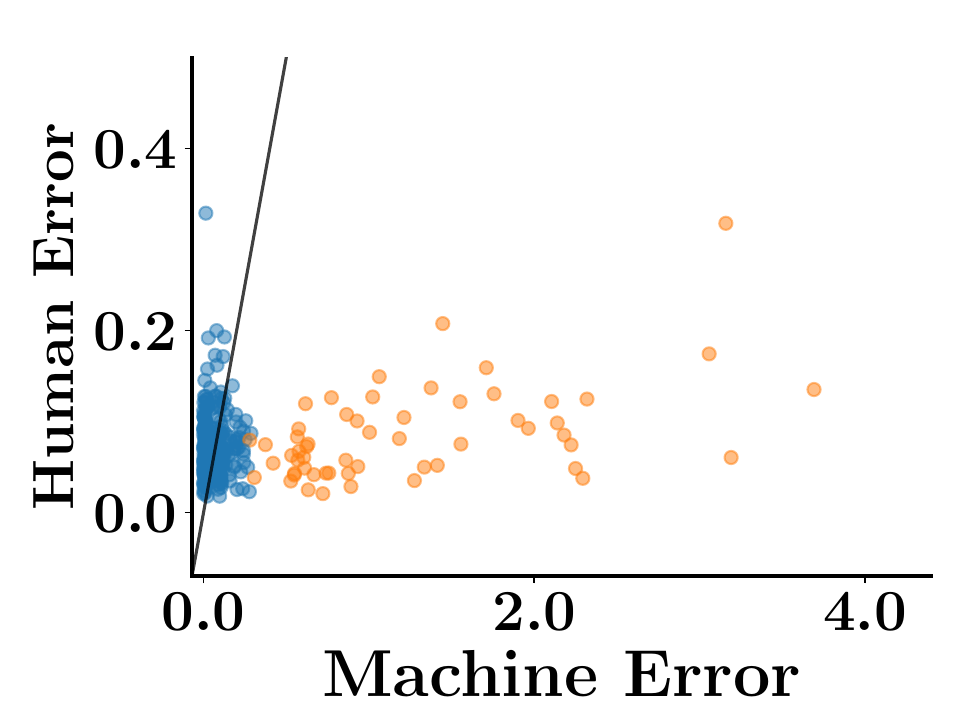}\label{fig:gtests}}\hspace*{0.2cm}
\subfloat{\includegraphics[width=0.22\textwidth]{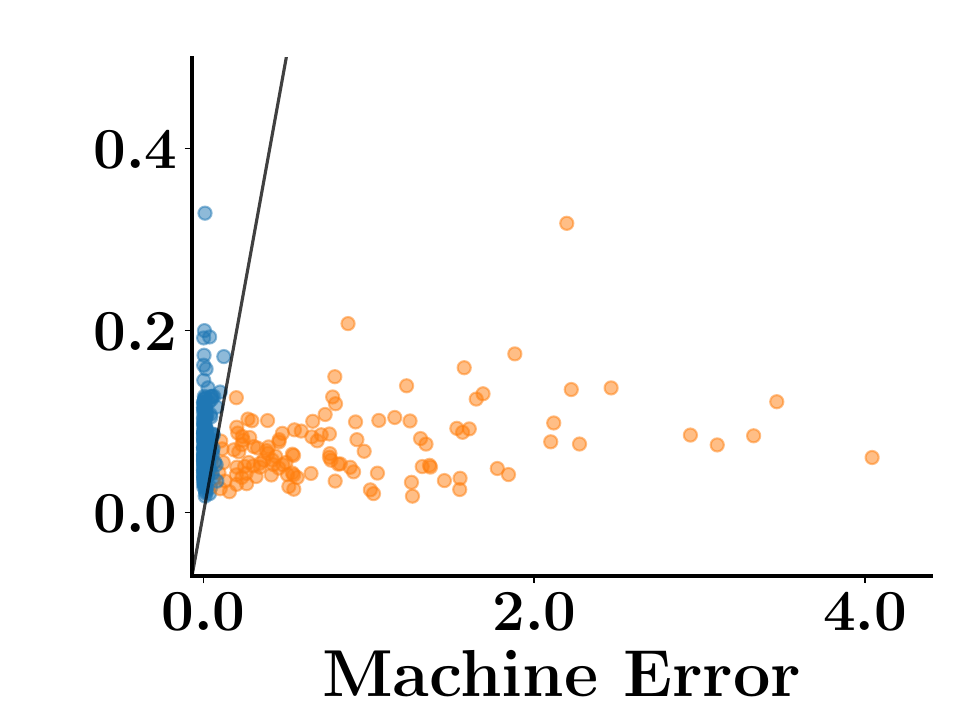}\label{fig:ltes1}} \hspace*{0.2cm}
\subfloat{\includegraphics[width=0.22\textwidth]{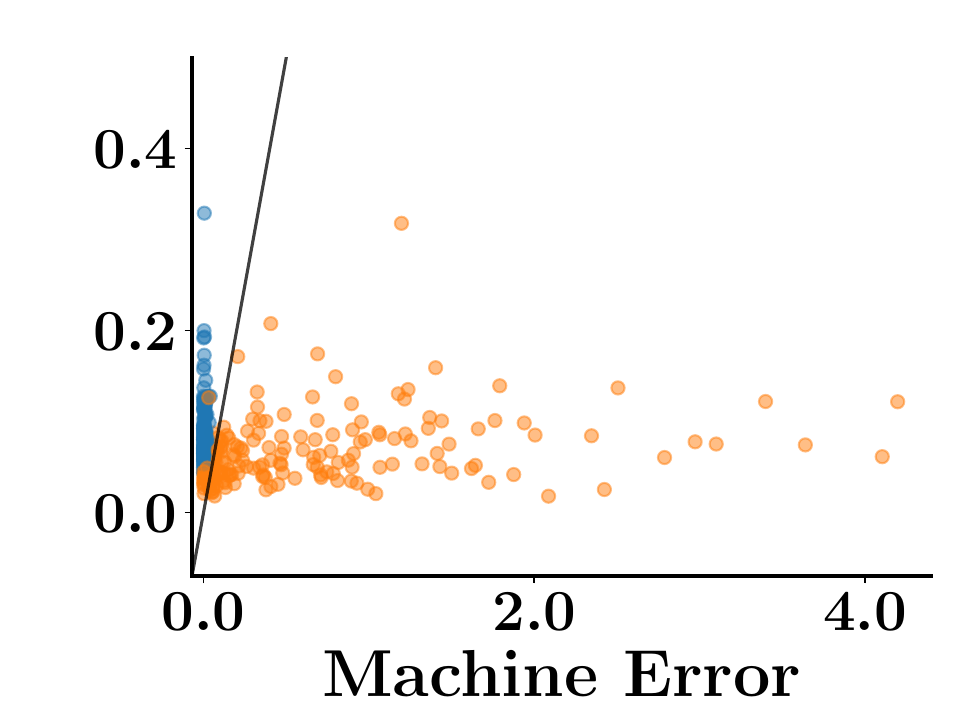}\label{fig:lte22}} \hspace*{0.2cm}
\subfloat{\includegraphics[width=0.22\textwidth]{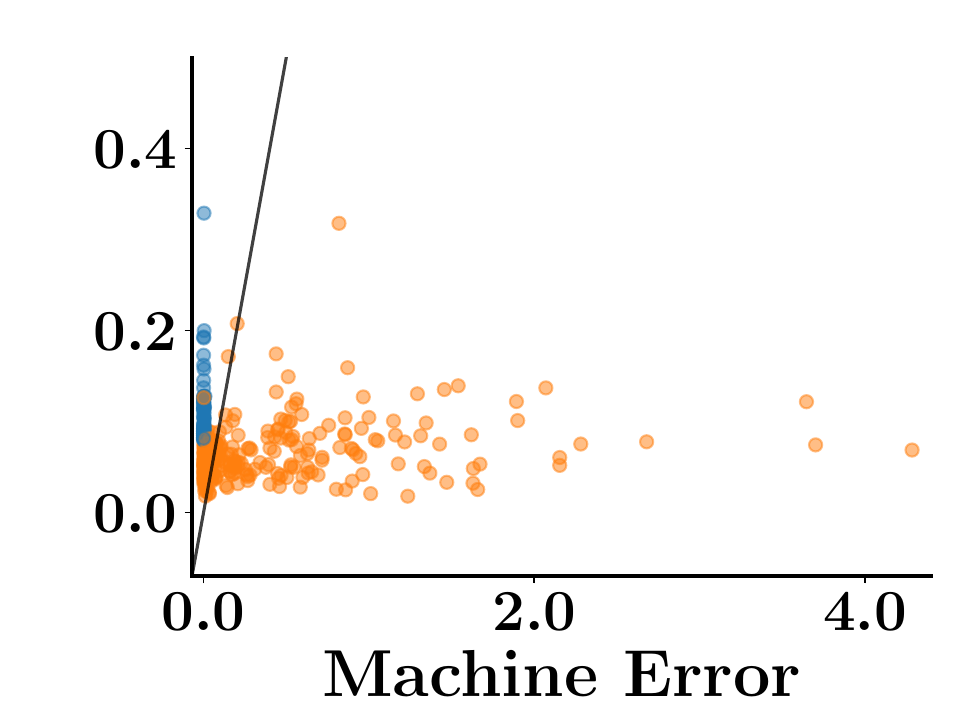}\label{fig:lte23}}\\ 
\subfloat[$|\Scal|=0.2|\Vcal|$]{\setcounter{subfigure}{1} \includegraphics[width=0.22\textwidth]{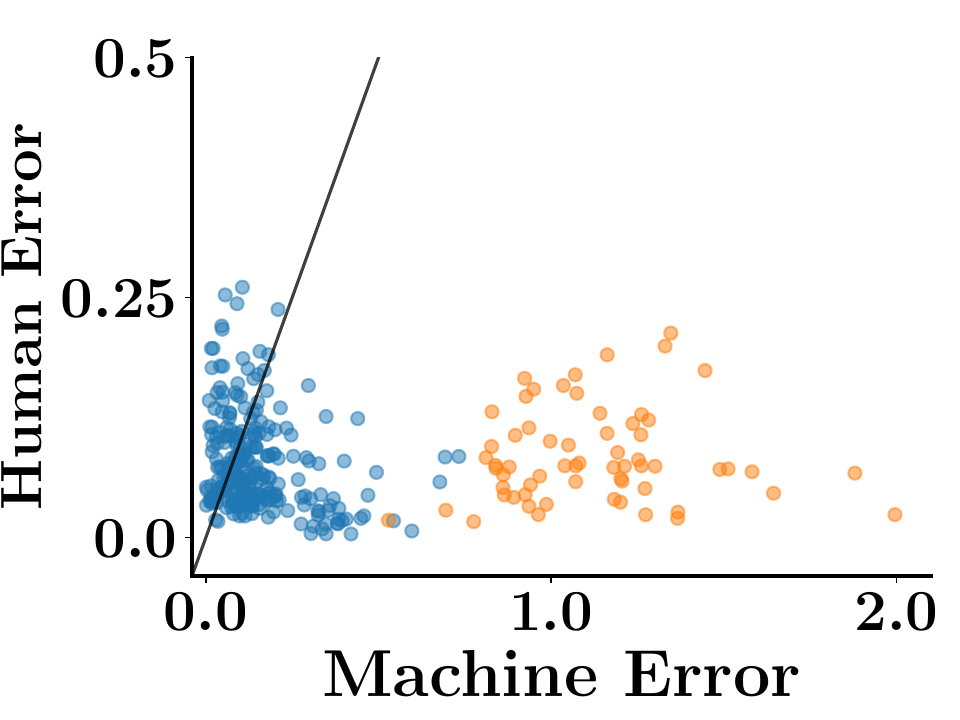}\label{fig:gtests}}\hspace*{0.2cm}
\subfloat[$|\Scal|=0.4|\Vcal|$]{\includegraphics[width=0.22\textwidth]{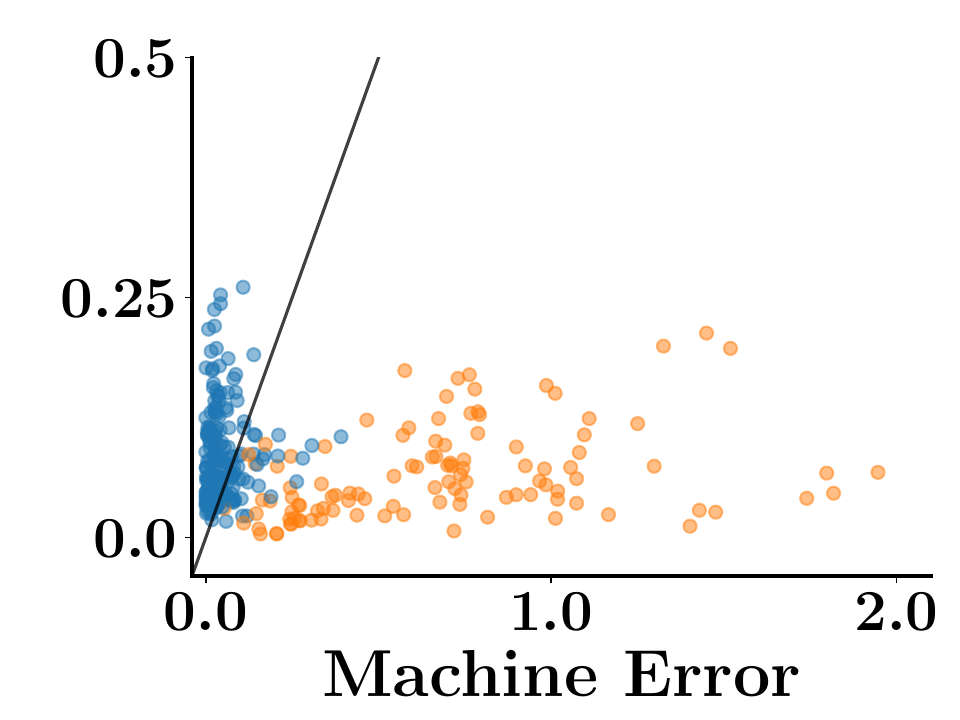}\label{fig:ltes1}} \hspace*{0.2cm}
\subfloat[$|\Scal|=0.6|\Vcal|$]{\includegraphics[width=0.22\textwidth]{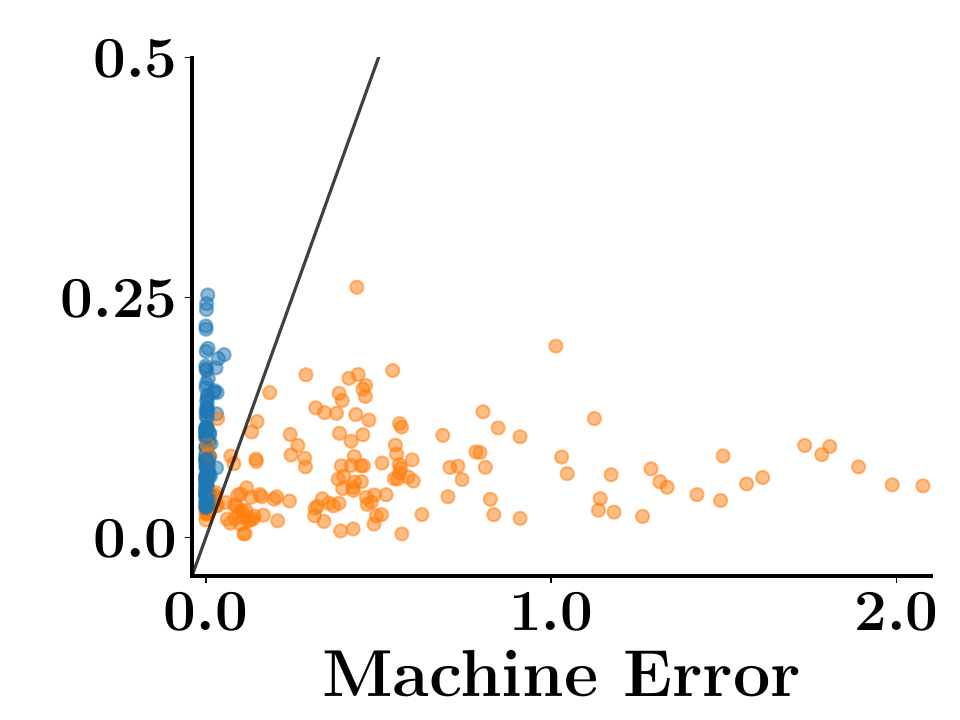}\label{fig:lte22}} \hspace*{0.2cm}
\subfloat[$|\Scal|=0.8|\Vcal|$]{\includegraphics[width=0.22\textwidth]{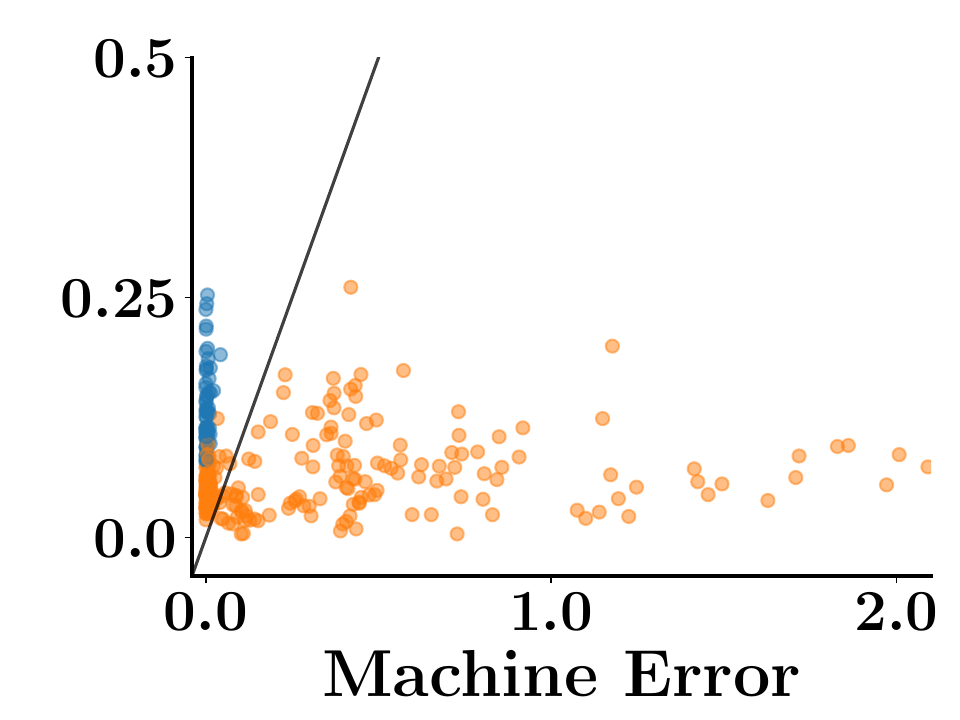}}
\caption{Human and machine error of all training samples on the Hatespeech (first row), Stare-H (second row), Stare-D (third row) and Messidor (third row) datasets 
for different automation levels.
Our algorithm outsources to humans those samples in which the machine error would have been the highest if it had to predict their response variables.
} \vspace{-3mm}
\label{fig:scatter-human-machine}
\end{figure}

\subsection{Experimental setup} We experiment with one dataset for content moderation and three datasets for medical diagnosis, which are
publicly available~\cite{hateoffensive,decenciere_feedback_2014,hoover2000locating}. More specifically:
\begin{itemize}
\item[(i)] \textbf{Hatespeech: } It consists of $\sim$$25000$ tweets
containing words, phrases and lexicons used in hate speech. Each tweet is given several scores 
by three to five annotators from Crowdflower, which measure the severity of hate speech. 

\item[(ii)] \textbf{Stare-H:} It consists of $\sim$$400$ retinal images. Each image is given a score by one single expert, on a five point scale, which measures the
severity of a retinal hemorrhage. 

\item[(iii)] \textbf{Stare-D:} It contains the same set of images from Stare-H. However, in this dataset, each image is given a score by a single expert, on a six point 
scale, which measures the severity of the Drusen disease.

\item[(iv)] \textbf{Messidor:} It contains $400$ eye images. Each image is given score by one single expert, on a four point scale, which measures the severity of 
an edema.
\end{itemize}
We first generate a $m'=100$ dimensional feature vector using fasttext~\cite{ft} for each sample in the Hatespeech dataset,  $m'=1000$ dimensional feature vector using Resnet~\cite{resnet} for each sample in the Stare-H, Stare-D, and a $m'=4096$-dimensional feature vector using VGG~\cite{simonyan2014very} for each 
sample in the Messidor dataset. Then, we use the top $m=50$ features, as identified by PCA, as $\xb$ in our experiments.
For the image datasets, the response variable $y$ is just the available score by a single expert and the human predictions are sampled from a categorical distribution 
$s \sim \text{Cat}(\pb_{\xb, y})$, where $\pb_{\xb, y} \sim \text{Dirichlet}(\alphab_{\xb, y})$ are the probabilities of each potential score value $s$ for a sample with features 
$\xb$ and $\alphab_{\xb, y}$ is a vector parameter that controls the human accuracy. 
Here, for each sample $(\xb, y)$, the element of $\alphab_{\xb, y}$ corresponding to the score $s = y$ has the highest value.
%
%
%
For the Hatespeech dataset, the response variable $y$ is the mean of the scores provided by the annotators and the human predictions are picked uniformly at random
from the available individual scores given by each annotator.
%
%
In each dataset, we compute the human error as $c(\xb,y)=\EE(y-s)^2$ for each sample $(\xb,y)$ and set the same value of $\lambda$ across all 
competitive methods.
%
%
Finally, in each experiment, we use 80\% samples for training and 20\% samples for testing.
%
%
%

\subsection{Results}  
\label{sec:expts-real-results}

We first look closely at the samples that our algorithm outsources to humans to better understand to which extent our algorithm has the ability to learn 
the underlying relationship between a given sample and its corresponding human and machine error.
Intuitively, human assistance should be required for those samples which are difficult (easy) for a machine (a human) to decide about. 
Figure~\ref{fig:examples} provides an illustrative example of an easy and a difficult sample image.
While both sample images are given a score of severity zero for the Drusen disease, one of them contains yellow spots, which are often a 
sign of Drusen disease\footnote{In this particular case, the patient suffered diabetic retinopathy, which is also characterized by yellow spots.}, and 
is therefore difficult to predict. 
In this particular case, the greedy algorithm outsourced the difficult sample to humans and let the machine decide about the easy one.
%
%
Does this intuitive assignment happen consistently? 
 \begin{figure}[!t]
\centering
\hspace*{0.5cm}{\includegraphics[width=0.6\textwidth]{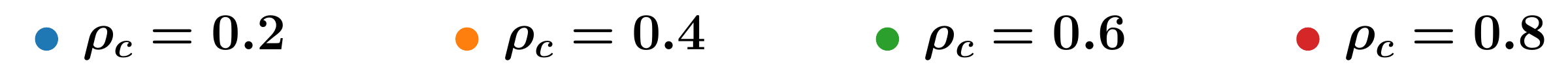}\label{fig:ltr}\vspace{-0.2cm}}\\
\subfloat[Stare-H]{\includegraphics[width=0.28\textwidth]{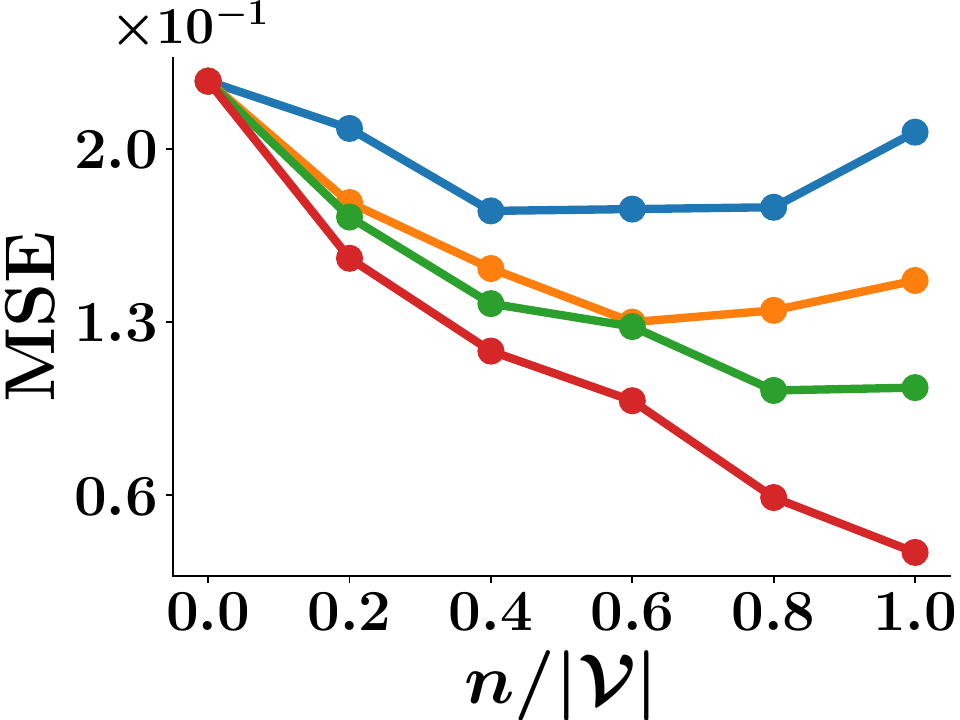}\label{fig:ltes1}}\hspace*{1cm}  
\subfloat[Stare-D]{\includegraphics[width=0.28\textwidth]{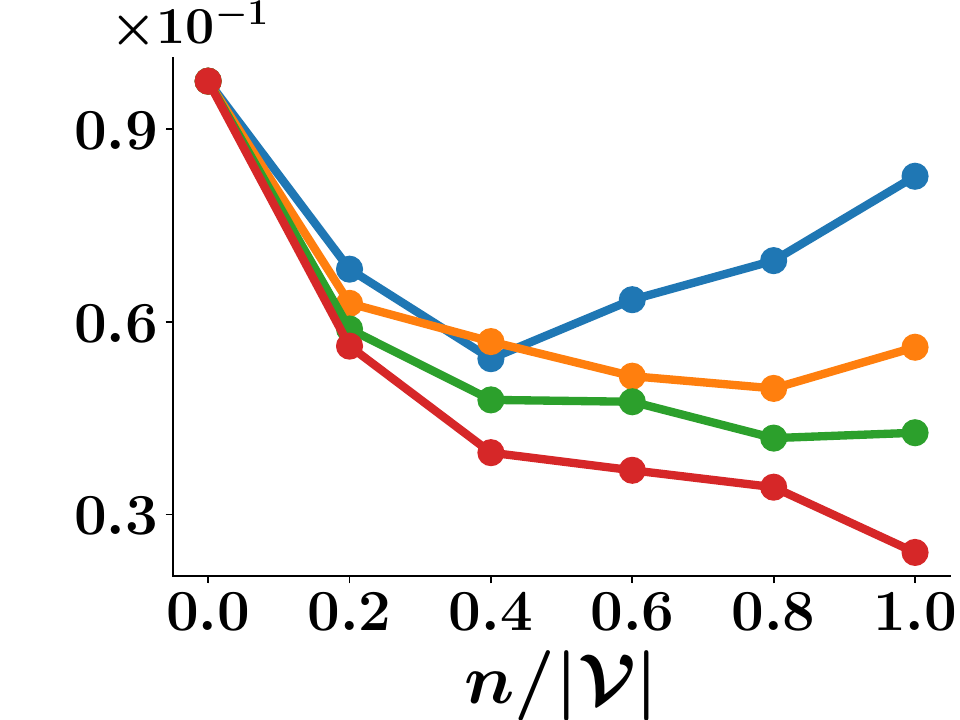}\label{fig:lte22}}\hspace*{1cm}  
\subfloat[Messidor]{\includegraphics[width=0.28\textwidth]{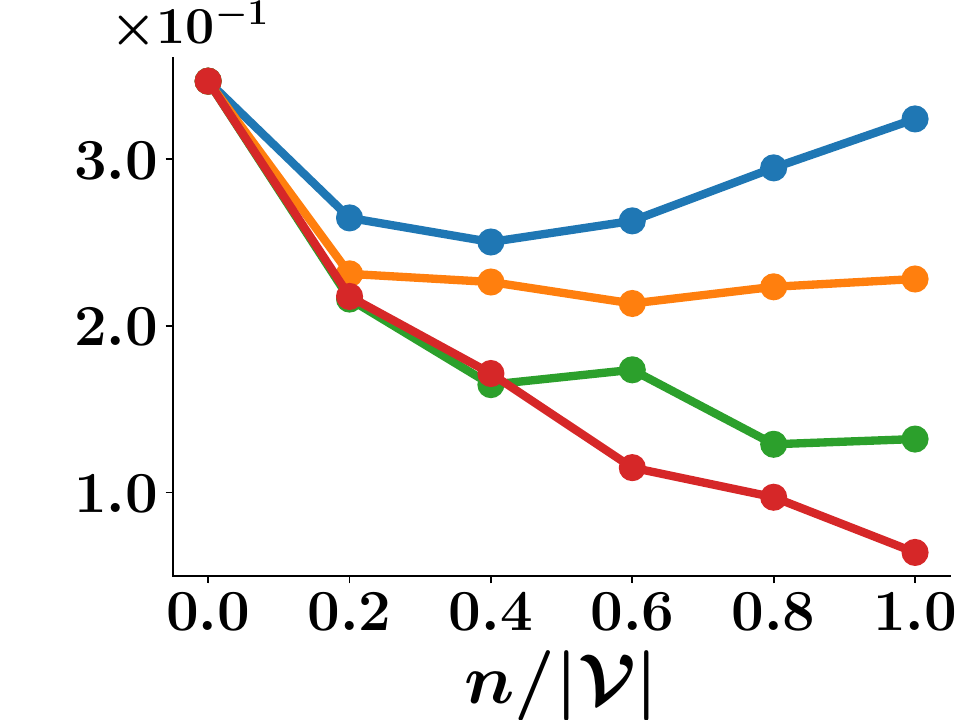}\label{fig:lte23}}
\caption{Mean squared error (MSE) achieved by the proposed greedy algorithm against the number of outsourced samples $n$ under different distributions
of human errors on three real-world datasets.
Under each distribution of human error, human error is low for a fractions $\rho_c$ of the samples and high for the remaining fraction $1-\rho_c$.
In all cases, we use the LR model $h_{\theta}(\xb)$ and set $c(\xb, y) = 10^{-4}$ for samples in which the human error is low and we set $c(\xb, y) = 0.25, 0.1, 0.4$ respectively for Stare-H, Stare-D and Messidor for samples in which the human error is high.
As long as there are samples that humans can predict with low error, the greedy algorithm does outsource them to humans and thus the 
overall performance improves. 
However, whenever the fraction of outsourced samples is higher than the fraction of samples with low human error, the performance degrades. 
This results in a characteristic U-shaped curve.
}\vspace{-3mm}
 \label{fig:realErrorWithC}
\end{figure}

To answer the above question, we run two complementary experiments. 
In a first experiment, we compare the human and machine error for each (training and test) sample, independently on whether it was outsourced to humans. 
Figure~\ref{fig:scatter-human-machine} summarizes the results for samples in the training set, which show that our algorithm outsources to humans those 
samples in which the machine error would have been the highest if it had to predict their response variables. Figure~\ref{fig:scatter-human-machine-test} 
in Appendix~\ref{app:expl-real} summarizes the results for samples in the test set, where the same qualitative observation holds.
In a second experiment, we run our greedy algorithm on the Messidor, Stare-H and Stare-D datasets under different distributions of human error and assess
to which extent the greedy algorithm outsources to humans those samples they can predict more accurately.
More specifically, we sample the human predictions from a non-uniform categorical distribution with parameters $\alpha_{\xb, y}$ under which human error 
is low for a fraction $\rho_c$ of the samples and high for the remaining fraction $1-\rho_c$.
Figure~\ref{fig:realErrorWithC} shows the performance of the greedy algorithm for different $\rho_c$ values on a held-out set, where we use the LR model 
$h_{\theta}(\xb)$ for all methods. 
We observe that, as long as there are samples that humans can predict with low error, the greedy algorithm outsources them to humans and thus the 
overall performance improves. 
However, whenever the fraction of outsourced samples is higher than the fraction of samples with low human error, the performance degrades, and this
leads to a characteristic U-shaped curve. 
The results from both experiments suggest that our algorithm has the ability to learn the underlying relationship between a given sample and its corresponding 
human and machine error.
 \begin{figure*}[!t]
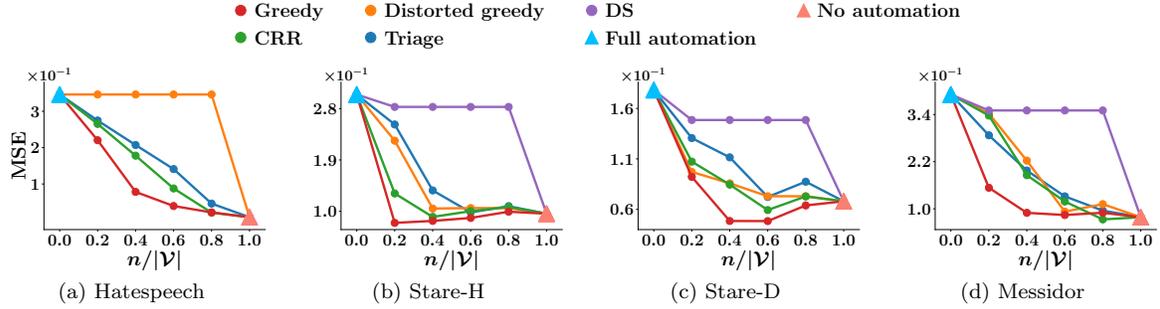

\centering
\hspace*{0.5cm}{\includegraphics[width=0.6\textwidth]{legend_notbold}\label{fig:ltr}\vspace{-0.2cm}}\\
\subfloat[Hatespeech]{\includegraphics[width=0.22\textwidth]{hatespeechKvsError_new_LR_classifier}\label{fig:gtests}}\hspace*{0.2cm}
\subfloat[Stare-H]{\includegraphics[width=0.22\textwidth]{starehKvsError_new_LR_classifier}\label{fig:ltes1}} \hspace*{0.2cm}
\subfloat[Stare-D]{\includegraphics[width=0.22\textwidth]{staredKvsError_new_LR_classifier}\label{fig:lte22}} \hspace*{0.2cm}
\subfloat[Messidor]{\includegraphics[width=0.22\textwidth]{messidorKvsError_new_LR_classifier}\label{fig:lte23}} 
\caption{Mean squared error (MSE) against number of outsourced samples $n$ for the proposed greedy algorithm, DS~\cite{iyer2012algorithms}, distorted greedy~\cite{harshaw2019submodular}, triage~\cite{raghu2019algorithmic} and CRR~\cite{bhatia2017consistent} on four real-world datasets.
In all cases, we use the LR model $h_{\theta}(\xb)$ and, for clarity, we explicitly highlight the performance under no automation and full automation. 
In Stare-H, Stare-D and Messidor, for each sample $(\xb, y)$, the element of $\alphab_{\xb, y}$ corresponding to the score $s = y$ has the highest value. For example, 
if $y = 0$, we set $\alpha_{\xb,y} = [6, 3, 2, 2, 1]$ for Panel (b), $\alpha_{\xb,y} = [6, 3, 2, 2, 1, 1]$ for Panel (c), and $\alphab_{\xb,y} = [6, 3, 1]$ for Panel (d).
The greedy algorithm outperforms the baselines 
%
%
across a majority of automation levels. For most automation levels, the competitive advantage provided by the greedy algorithm is statistically significant 
(Welch'{}s t-test, $p$-value $= 10^{-3}$). We omitted the DS algorithm for the Hatespeech dataset because it did not scale. 
%
}  
 \label{fig:quantrealone}
\end{figure*}
\vspace{5mm}
 \begin{figure}[!t]
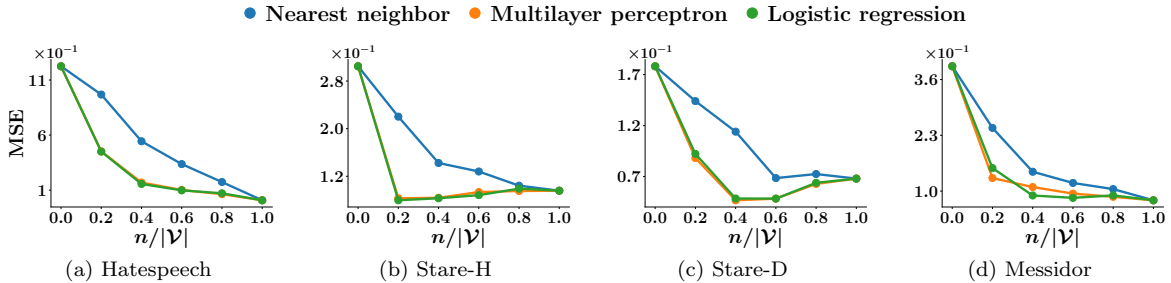

\centering
\hspace*{0.5cm}{\includegraphics[width=0.6\textwidth]{legend_vary_test}\label{fig:ltr}\vspace{-0.2cm}}\\
\subfloat[Hatespeech]{\includegraphics[width=0.22\textwidth]{hatespeechKvsError_vary_testmethod}\label{fig:gtests}}\hspace*{0.2cm}
\subfloat[Stare-H]{\includegraphics[width=0.22\textwidth]{starehKvsError_vary_testmethod}\label{fig:ltes1}} \hspace*{0.2cm}
\subfloat[Stare-D]{\includegraphics[width=0.22\textwidth]{staredKvsError_vary_testmethod}\label{fig:lte22}} \hspace*{0.2cm}
\subfloat[Messidor]{\includegraphics[width=0.22\textwidth]{messidorKvsError_vary_testmethod}\label{fig:lte23}} 
\caption{Sensitivity analysis with respect to the choice of additional model $h_{\theta}(\xb)$. 
%
In all panels, we measure performance in terms of mean squared error (MSE) against number of outsourced samples $n$.
In Stare-H, Stare-D and Messidor, for each sample $(\xb, y)$, the element of $\alphab_{\xb, y}$ corresponding to the score $s = y$ has the highest value, similarly as 
in Figure~\ref{fig:quantrealone}. 
The results show that LR performs best, followed closely by MLP, and NN performs worst.
} 
\label{fig:hmodels-real}
\end{figure}

Second, we compare the performance of the greedy algorithm in terms of mean squared error (MSE) on a held-out set against the same 
competitive baselines used in the experiments on synthetic data.
%
%
Figure~\ref{fig:quantrealone} summarizes the results, where we use the LR model $h_{\theta}(\xb)$ for all methods. The results show that the greedy 
algorithm outperforms the baselines across a majority of automation levels.
Moreover, for most automation levels, the competitive advantage provided by the greedy algorithm is statistically significant (Welch'{}s t-test, $p$-value 
$= 10^{-3}$). Here, we note that error reaches the steady state around $n/|\Vcal|=0.3$. Hence, reducing the clinical workload as a factor of 2 still provides 
a similar performance to the case when all samples are outsourced to medical experts.
%
We obtained qualitatively similar results using the NN and MLP models $h_{\theta}(\xb)$ (refer to Appendix~\ref{app:h-real}). 
\begin{figure}[!t]
\centering
\hspace*{0.5cm}{\includegraphics[width=0.6\textwidth]{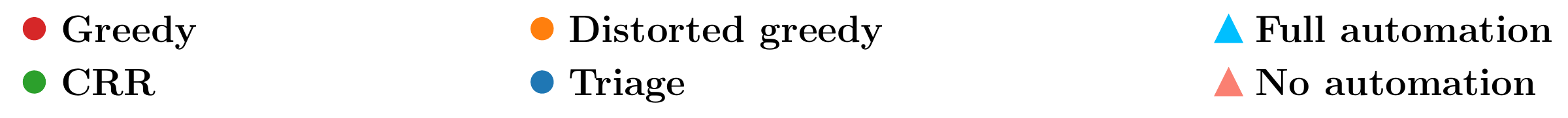}\label{fig:ltr}\vspace{-0.2cm}}\\

 \subfloat[Sensitivity with respect to the training set size $|\Vcal|$ on the Hatespeech dataset]{
        \scriptsize
	\stackunder[5pt]{\includegraphics[width=0.20\textwidth]{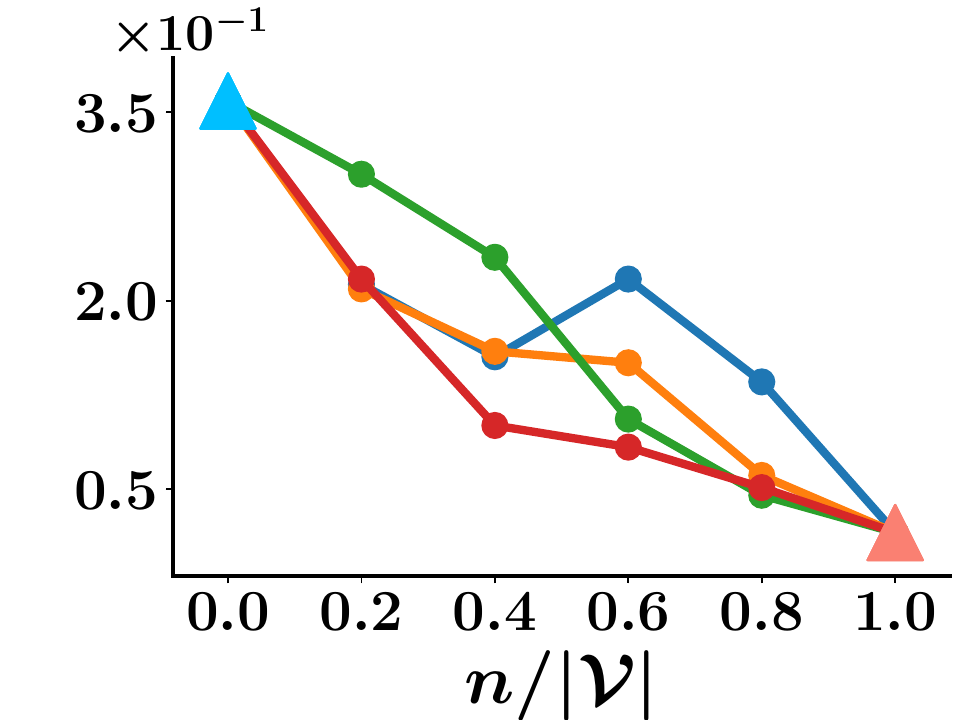}}{$|\Vcal| = 100$}
	\stackunder[5pt]{\includegraphics[width=0.20\textwidth]{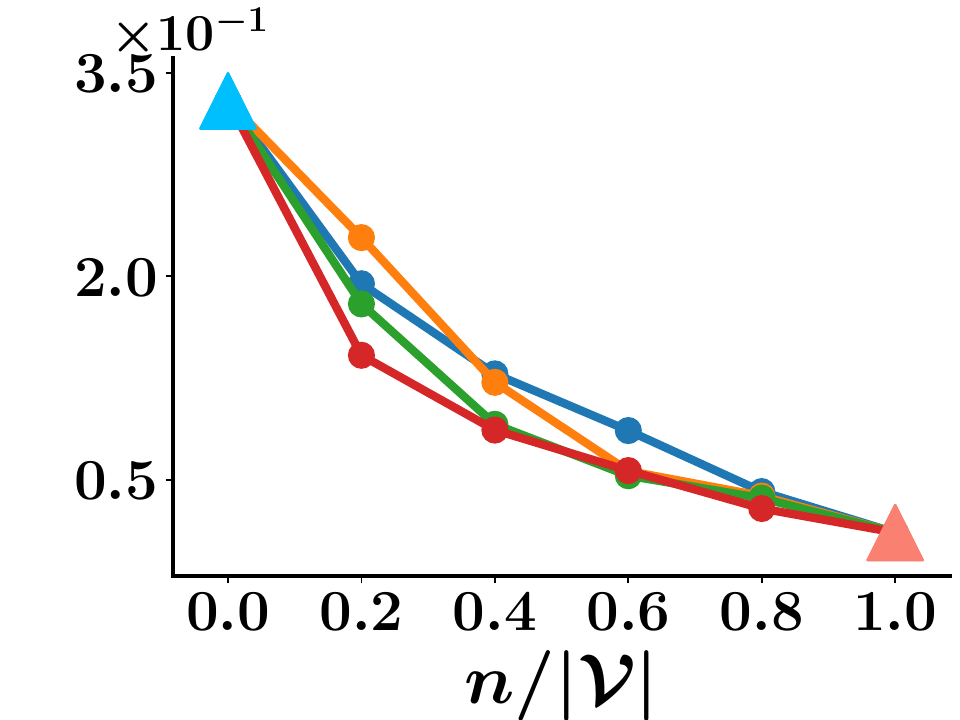}}{$|\Vcal| = 500$}
	\stackunder[5pt]{\includegraphics[width=0.20\textwidth]{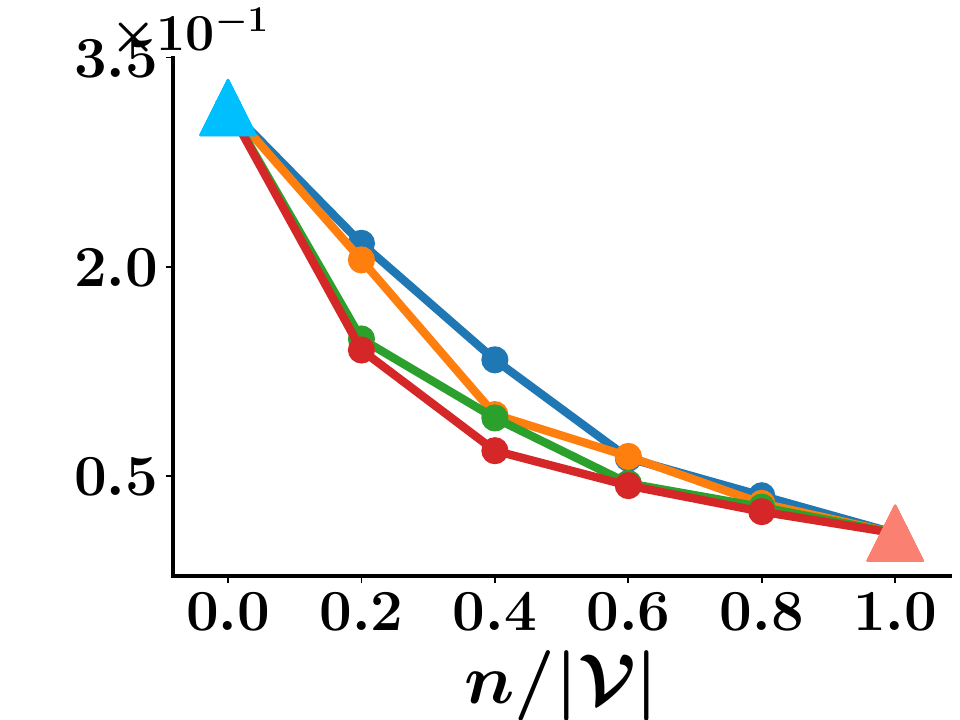}}{$|\Vcal| = 1000$}
	\stackunder[5pt]{\includegraphics[width=0.20\textwidth]{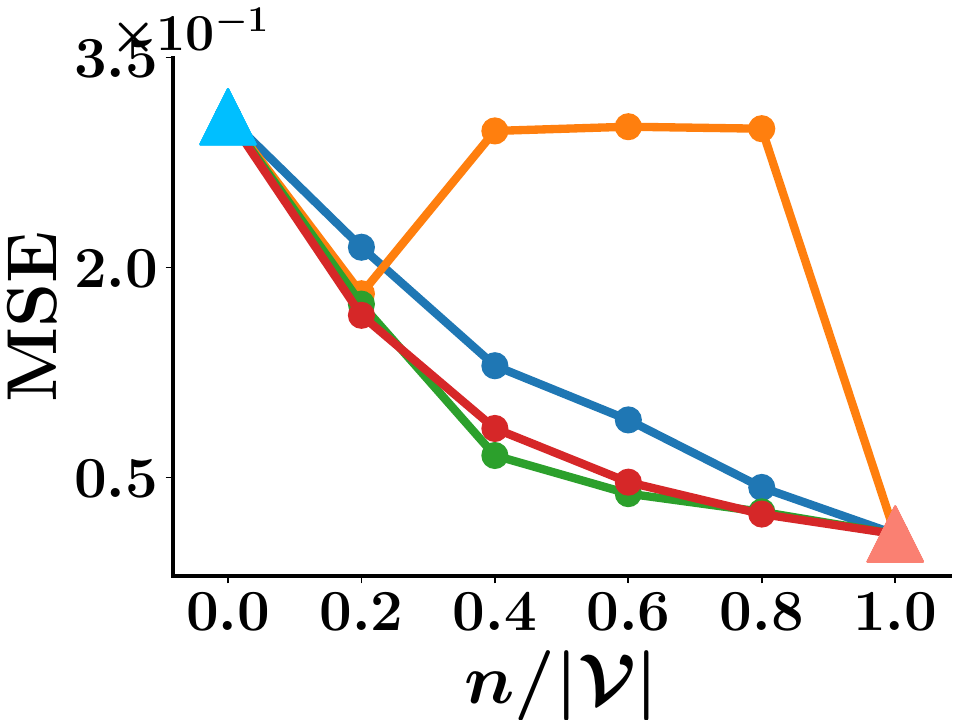}}{$|\Vcal| = 2000$}
	\stackunder[5pt]{\includegraphics[width=0.20\textwidth]{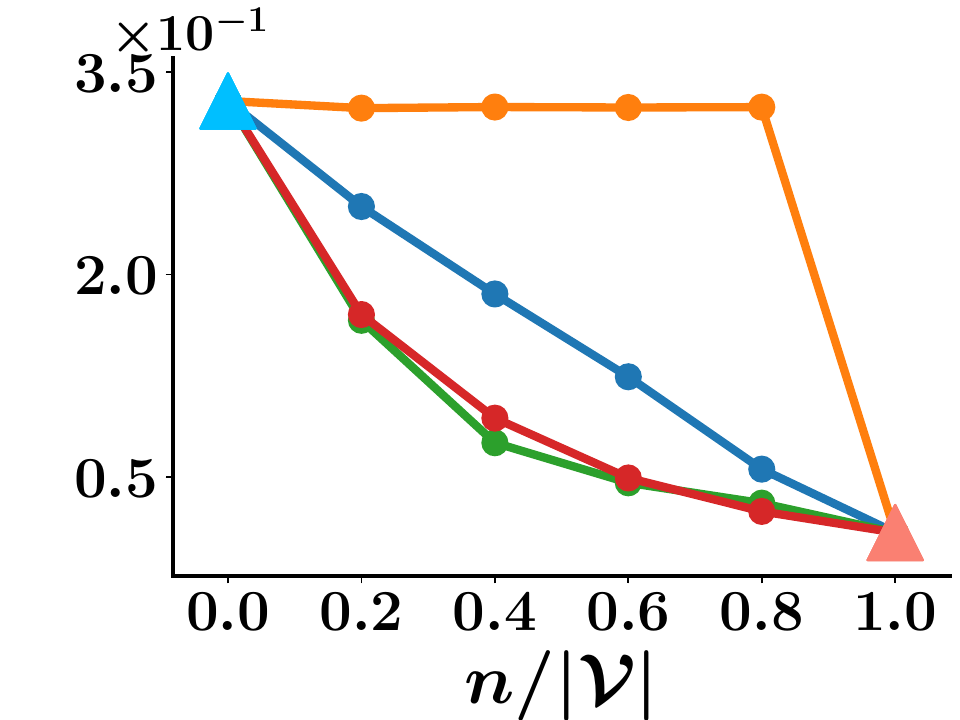}}{$|\Vcal| = 5000$}
   } \\
\vspace{3mm}

\subfloat[Sensitivity with respect to the output dimensions of the VGG net on the Messidor dataset]{
        \scriptsize
	\stackunder[5pt]{\includegraphics[width=0.20\textwidth]{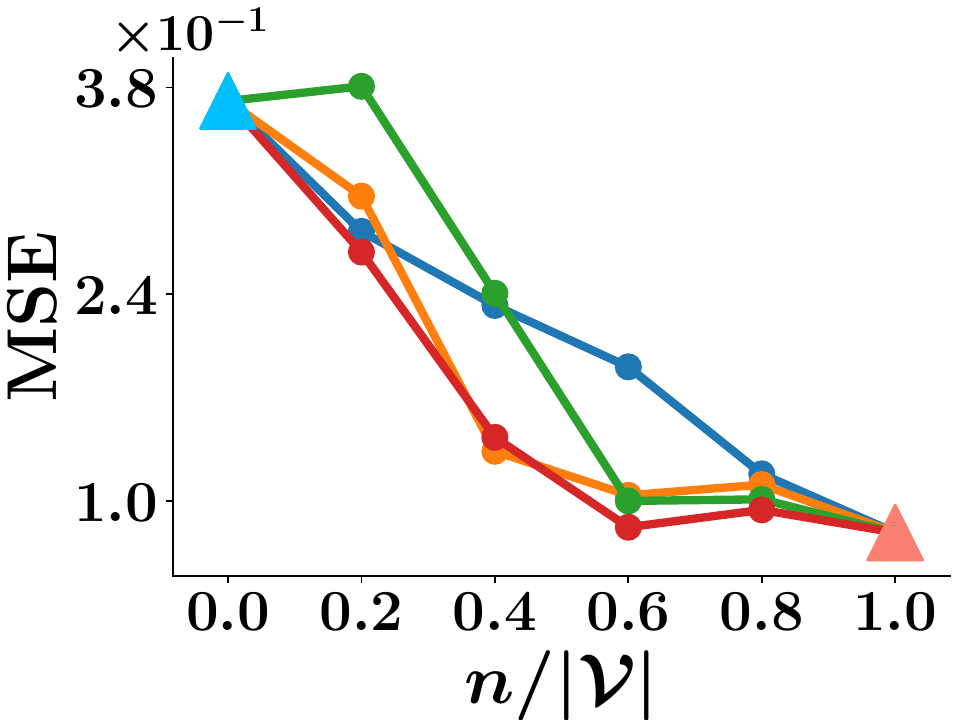}}{$m^\prime = 1024$}
	\stackunder[5pt]{\includegraphics[width=0.20\textwidth]{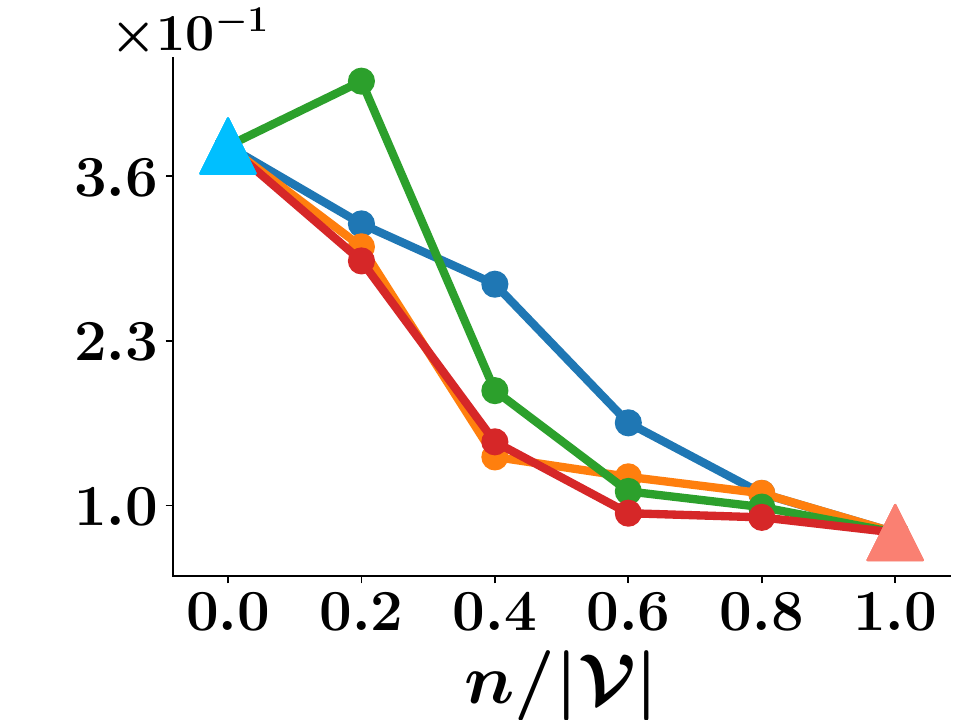}}{$m^\prime = 2048$}
	\stackunder[5pt]{\includegraphics[width=0.20\textwidth]{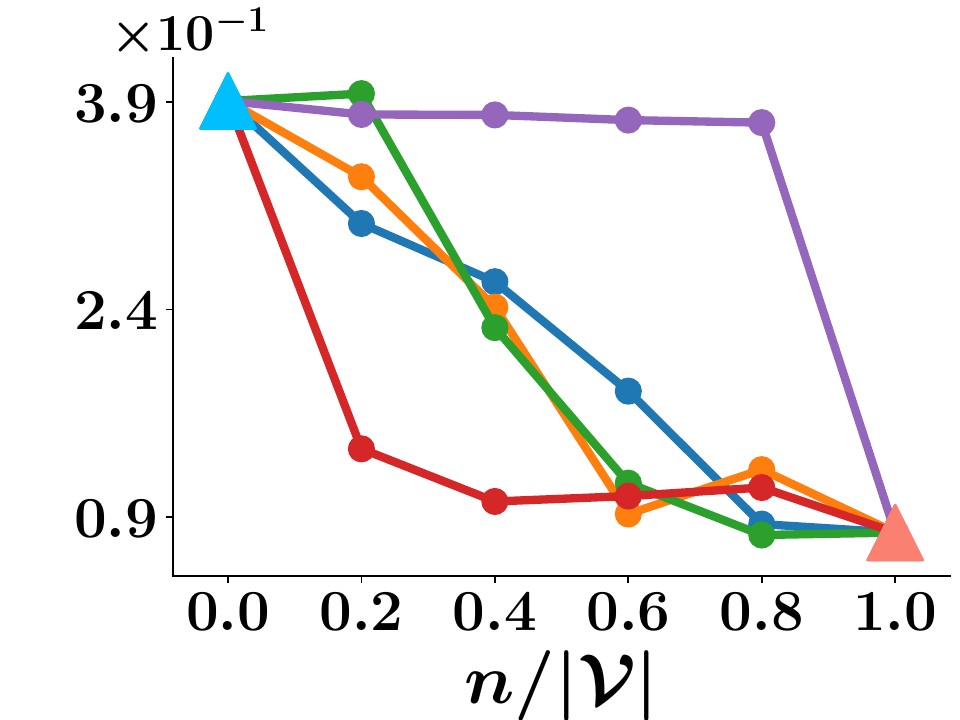}}{$m^\prime = 4096$}
	\stackunder[5pt]{\includegraphics[width=0.20\textwidth]{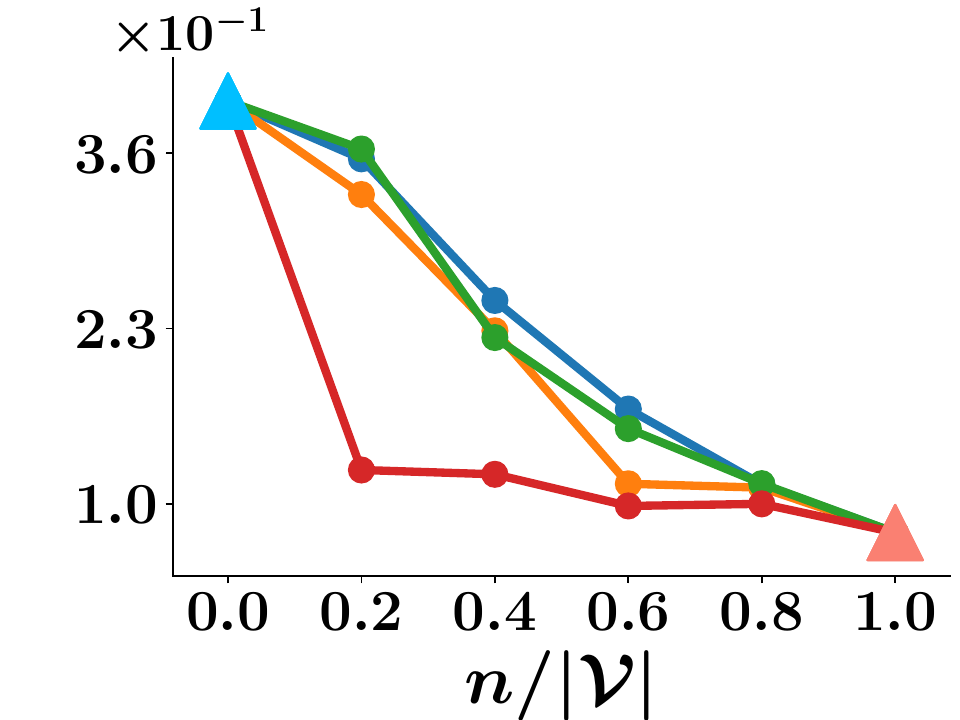}}{$m^\prime = 6144$}
	\stackunder[5pt]{\includegraphics[width=0.20\textwidth]{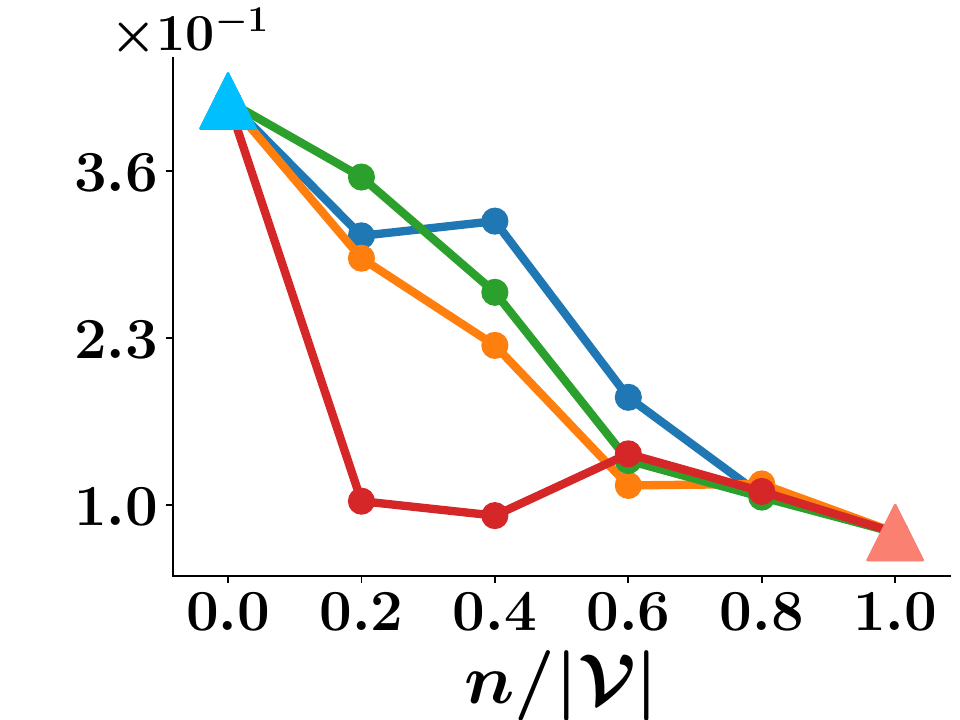}}{$m^\prime = 8192$}
   } \\
\vspace{3mm}
\subfloat[Sensitivity with respect to the output dimensions of PCA on the Messidor dataset]{
        \scriptsize
	\stackunder[5pt]{\includegraphics[width=0.20\textwidth]{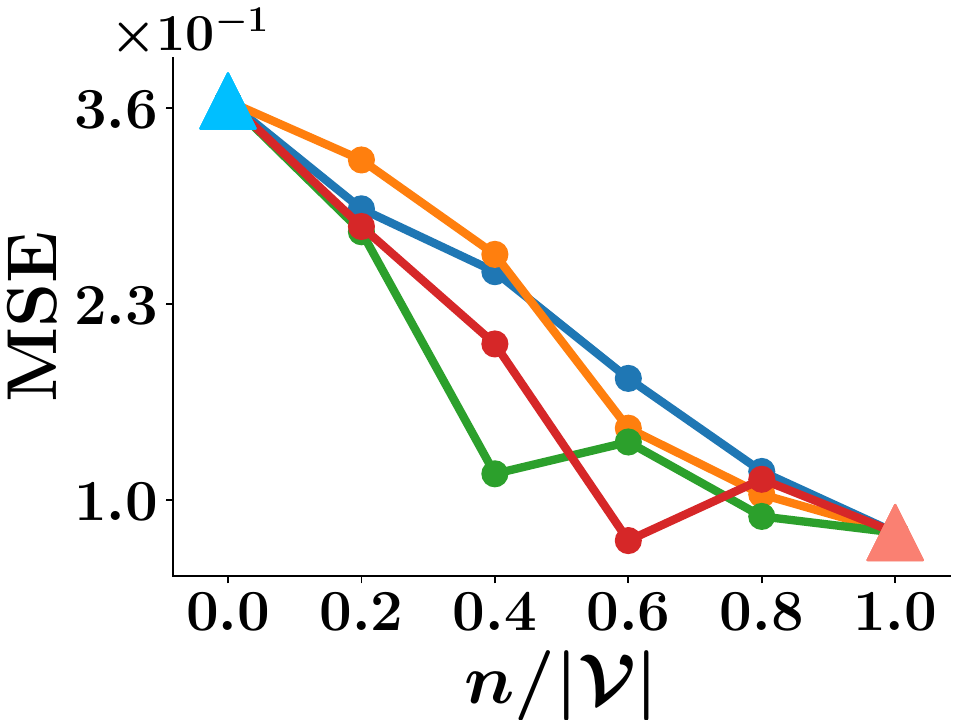}}{$m = 25$}
	\stackunder[5pt]{\includegraphics[width=0.20\textwidth]{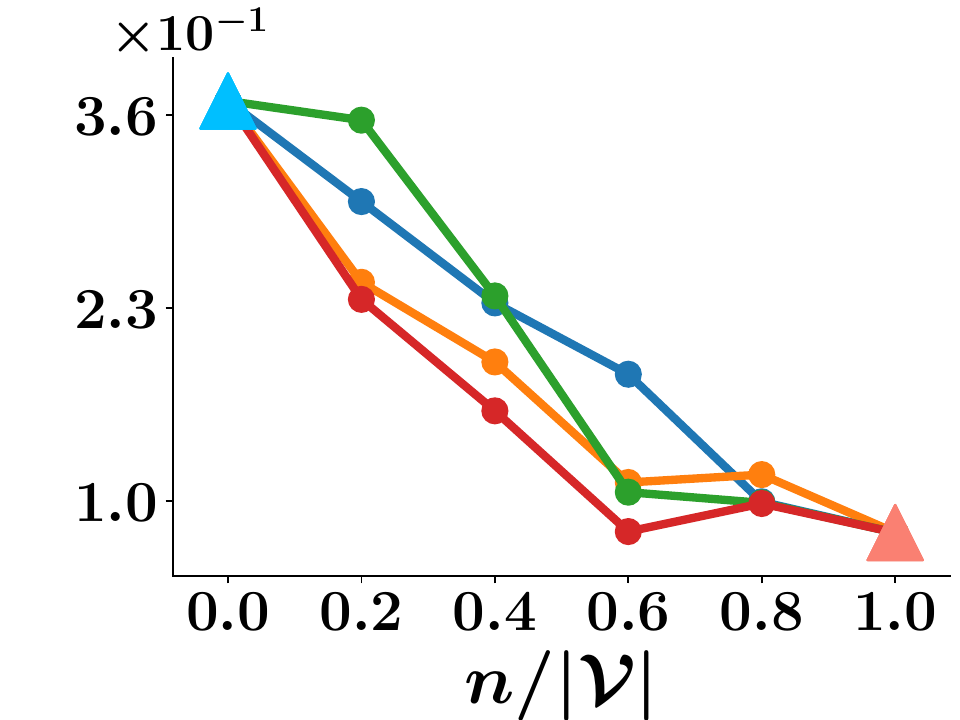}}{$m = 50$}
	\stackunder[5pt]{\includegraphics[width=0.20\textwidth]{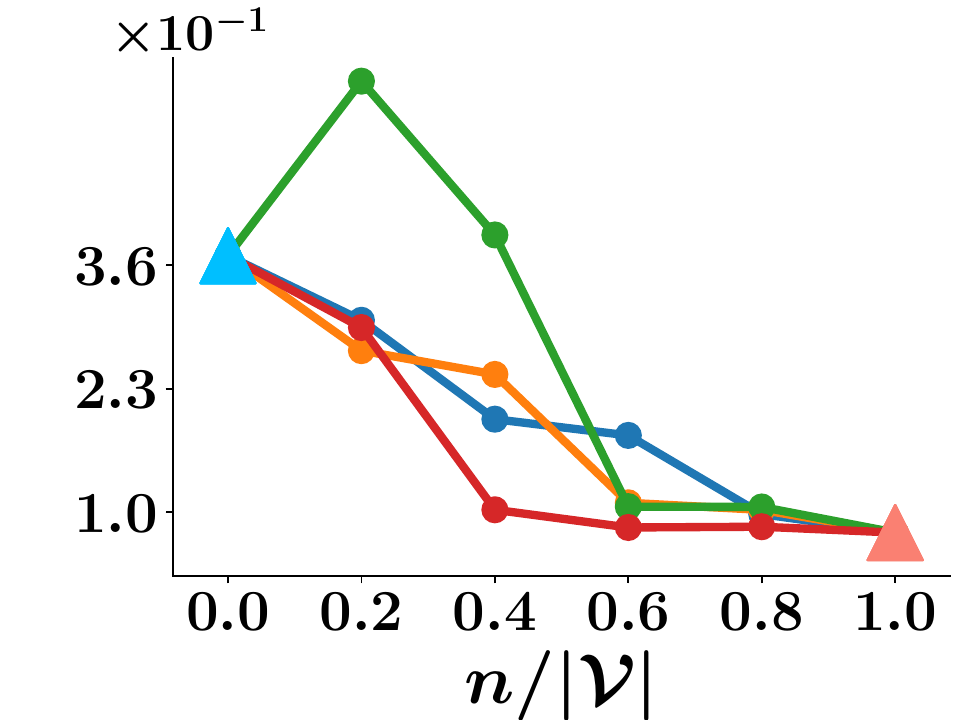}}{$m = 75$}
	\stackunder[5pt]{\includegraphics[width=0.20\textwidth]{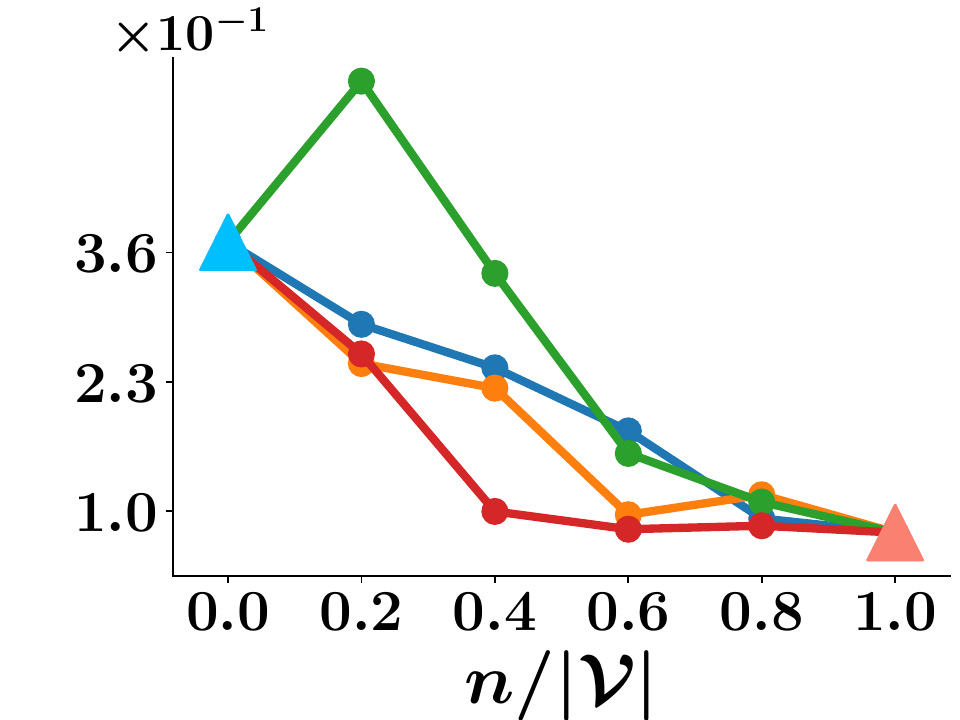}}{$m = 100$}
	\stackunder[5pt]{\includegraphics[width=0.20\textwidth]{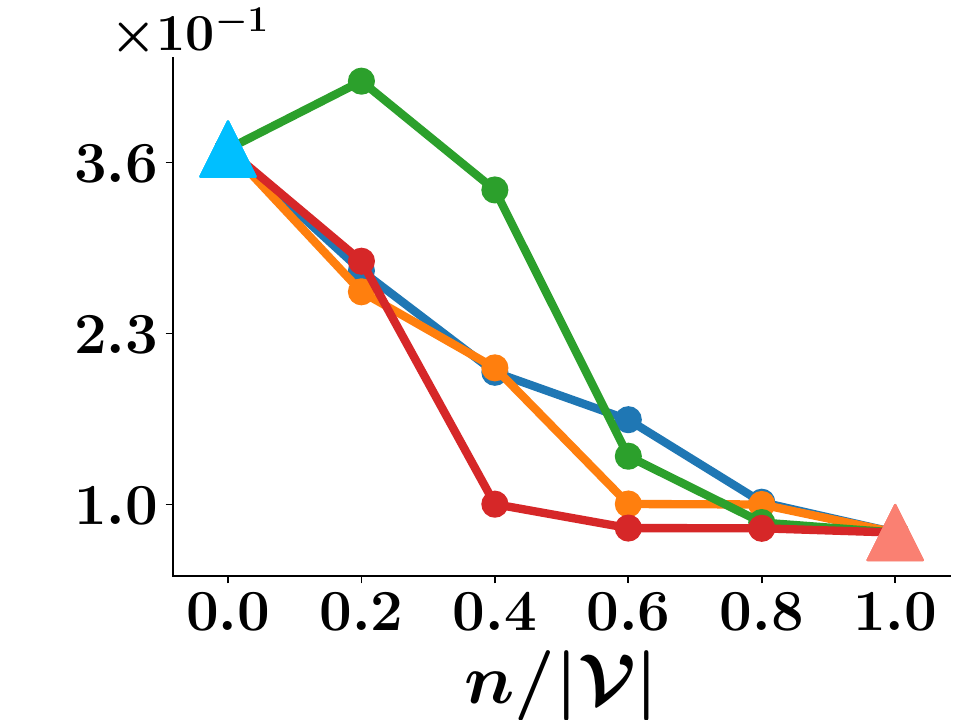}}{$m = 125$}
   }
\caption{Sensitivity analysis with respect to the hyperparameters used in our experiments. 
In all panels, we measure performance in terms of mean squared error (MSE) against number of outsourced samples $n$ and use a MLP model $h_{\theta}(\xb)$.
We found qualitatively similar results for the LR and NN models $h_{\theta}(\xb)$.
}\vspace{-2mm}
\label{fig:sensitivity-hyperparameters}
\end{figure}

Third, we perform a detailed sensitivity analysis to evaluate the robustness of our algorithm with respect to the choice of additional model $h_{\theta}(\xb)$ as 
well as with respect to several hyperparameters used in our experiments.
In terms of the choice of additional model, Figure~\ref{fig:hmodels-real} summarizes the results, which show that, in contrast with the experiments on synthetic 
data (refer to Figure~\ref{fig:hmodels}), LR performs best, followed closely by MLP, and NN performs worst.
In terms of hyperparameters used in our experiments, we analyze the sensitivity with respect to: (i) $|\Vcal|$, \ie, the size of the ground set in Hatespeech dataset; 
(ii) the dimension ($m^\prime$) of the intermediate embedding provided by VGG; and, (iii) the dimension ($m$) of the output features provided by the PCA for the
Messidor dataset. 
Figure~\ref{fig:sensitivity-hyperparameters} summarizes the results, which show that our algorithm outperforms the baselines for most of the hyperparameter values.
We did perform further experiments on sensitivity, \eg, sensitivity with respect to $m$ for the Hatespeech dataset, however, we we do not report them since we obtained 
qualitatively similar results.

\section{Conclusions}
\label{sec:conclusions}
%
%
In this paper, we have initiated the development of machine learning models that are optimized to operate under different automation 
levels.
We have focused on ridge regression under human assistance and have shown that a simple greedy algorithm is able to find a solution with 
nontrivial approximation guarantees.
Using both synthetic and real-world data, we have shown that this greedy algorithm has the ability to learn the underlying relationship 
between a given sample and its corresponding human and machine error, it outperforms several competitive algorithms, and it is robust with respect
to several design choices and hyperparameters used in the experiments. 
Moreover, we have also shown that humans and machines working together 
can achieve a considerably better performance than each of them would achieve on their own.

Our work also opens many interesting venues for future work.
For example, it would be very interesting to advance the development of other more sophisticated machine learning models, both for regression and 
classification, under different automation levels.
It would be helpful to find tighter lower bounds on the parameter $\alpha$, which better characterize the good empirical performance.
It would be very interesting to study sequential decision making scenarios under human assistance in a variety of scenarios, \eg, autonomous driving under 
different automation levels~\cite{meresht2020learning}.
In our particular problem setting, the computational cost of the greedy algorithm is quadratic on the size of the training set. To lower this cost down, one can think
of adapting highly efficient streaming algorithms for maximizing weakly submodular functions~\cite{elenberg2017streaming}.
In our work, we have assumed that we know the human error for every training sample. It would be interesting to tackle the problem in an online learning setting,
where one needs to balance exploration, \eg, the estimation of the human error, and exploitation, \eg, low error over time.
Finally, it would be interesting to assess the performance of ridge regression under human assistance using interventional experiments on a real-world application.

\bibliographystyle{plainnat}
\bibliography{refs} 
\clearpage
\newpage

\appendix
\section{Auxiliary Lemmas and Propositions} \label{app:auxiliary}

\begin{proposition}\label{prop1}
Assume $c(\xb_k,y_k)\le \gamma y^2 _k$ and $\lambda\ge \frac{\gamma}{1-\gamma}  || \xb_k || _2 ^2$. Then,
it holds that
\begin{equation*}
\left[\begin{array}{cc} y^2  _{k}   -c(\xb_k,y_k)  &      y_k  \xb^\top _k  \\  \xb_k y_k &  \lambda \II  +\xb_k  \xb^\top _k  
 \end{array}\right] \mge  0
 \end{equation*} 
\end{proposition}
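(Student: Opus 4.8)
The plan is to certify positive semidefiniteness via the Schur complement of the bottom-right block, which is positive definite. First I would note that $\lambda \II + \xkk$ is positive definite whenever $\lambda > 0$, since its eigenvalues are $\lambda$ (with multiplicity $d-1$) and $\lambda + \xtk$; under the stated hypothesis this is the relevant regime. A block-triangular congruence then decouples the matrix into $\mathrm{diag}(S,\, \lambda \II + \xkk)$, where the scalar Schur complement is
\[
S = \big(y^2_k - \ck\big) - y^2_k \, \xb_k^\top (\lambda \II + \xkk)^{-1} \xb_k ,
\]
so that, by the Schur-complement characterization of positive semidefiniteness (the positive-definite counterpart of the determinant identity in Lemma~\ref{lem:schur}), the claim reduces to showing $S \ge 0$.

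Next I would evaluate the quadratic form $\xb_k^\top (\lambda \II + \xkk)^{-1} \xb_k$ in closed form using the Sherman--Morrison identity (Lemma~\ref{lem:morrison}), exactly as in the proof of Lemma~\ref{lem:F_F_basic}. This yields $\xb_k^\top (\lambda \II + \xkk)^{-1} \xb_k = \xtk / (\lambda + \xtk)$, whence
\[
S = y^2_k \Big( 1 - \frac{\xtk}{\lambda + \xtk} \Big) - \ck = y^2_k \, \frac{\lambda}{\lambda + \xtk} - \ck .
\]

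Finally I would discharge the two hypotheses. Using $\ck \le \gamma y^2_k$, it suffices to prove $y^2_k \, \lambda / (\lambda + \xtk) \ge \gamma y^2_k$, i.e.\ $\lambda / (\lambda + \xtk) \ge \gamma$; multiplying through by $\lambda + \xtk > 0$ and rearranging (using $\gamma < 1$, so $1 - \gamma > 0$), this is exactly $\lambda \ge \tfrac{\gamma}{1-\gamma} \, ||\xb_k||_2^2$, the second hypothesis. Hence $S \ge 0$ and the block matrix is positive semidefinite. I do not anticipate any real obstacle: the Sherman--Morrison simplification is the only nontrivial computation and has already been carried out elsewhere in the paper, so the main thing to be careful about is simply flagging $\lambda > 0$ at the outset, so that both the matrix inversion and the Schur criterion are valid.
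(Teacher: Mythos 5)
Your proof is correct, and it is a genuine (if close) variant of the paper's argument: both proofs certify positive semidefiniteness via a Schur complement, but you complement the bottom-right block $\lambda \II + \xb_k\xb_k^\top$, whereas the paper complements the scalar top-left block $y_k^2 - c(\xb_k,y_k)$. Your route reduces everything to the single scalar inequality
\begin{equation*}
y_k^2\,\frac{\lambda}{\lambda + \xb_k^\top\xb_k} - c(\xb_k,y_k) \;\ge\; 0,
\end{equation*}
which you discharge with Sherman--Morrison exactly as in the proof of Lemma~\ref{lem:F_F_basic}; this makes the dependence on the two hypotheses very transparent and reuses a computation already in the paper. The paper's route instead leaves a $d\times d$ matrix inequality, $\lambda\II - \tfrac{\gamma}{1-\gamma}\xb_k\xb_k^\top \mge 0$, and closes it with the observation that the rank-one matrix $\xb_k\xb_k^\top$ has sole nonzero eigenvalue $\|\xb_k\|_2^2$ --- no matrix inversion needed. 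The two arguments are dual and of essentially equal length; yours has the small advantage of not requiring the separate eigenvalue remark, and the small disadvantage of needing the block you invert to be nonsingular. You rightly flag that your Schur criterion needs $\lambda > 0$ (the paper's version symmetrically needs $y_k^2 - c(\xb_k,y_k) > 0$); in the degenerate case $\lambda = 0$, $\gamma = 0$ the matrix collapses to $\left[\begin{smallmatrix} y_k \\ \xb_k\end{smallmatrix}\right]\left[\begin{smallmatrix} y_k & \xb_k^\top\end{smallmatrix}\right] \mge 0$ directly, so nothing is lost.
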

\begin{proof}
We use the Schur complement property for positive-definiteness~\cite[Page 8]{lmibook}\\
on the matrix $ \left[\begin{array}{cc} y^2  _{k}   -c(\xb_k,y_k)  &      y_k  \xb^\top _k  \\  \xb_k y_k &  \lambda \II  +\xb_k  \xb^\top _k  
 \end{array}\right]$, \ie,
 \begin{equation*}
\lambda \II +\xb _k \xb ^\top _k -\xb_k \xb^\top  _k (y^2 _k /(y^2 _k -c(\xb_k,y_k))) \mge  \lambda \II-\frac{\gamma}{1-\gamma} \xb _k \xb^\top _k.
 \end{equation*} 
Given that $\xb_k \xb^\top _k$ is a rank one matrix, it has only one non-zero eigenvalue.  Hence it is same as $\tr (\xb_k \xb^\top _k)=||\xb _k||^2 _2$, which along with the assumed bound on $\lambda$ proves that 
\begin{equation*}
\lambda \II +\xb _k \xb ^\top _k -\xb_k \xb^\top  _k (y^2 _k /(y^2 _k -c(\xb_k,y_k))) \mge 0. 
\end{equation*}
Then, from the Schur complement method, 
we have that
\begin{equation*}
\left[\begin{array}{cc} y^2  _{k}   -c(\xb_k,y_k)  &      y_k  \xb^\top _k  \\  \xb_k y_k &  \lambda \II  +\xb_k  \xb^\top _k  
 \end{array}\right] \mge  0.
\end{equation*}
\vspace{-7mm}
\end{proof}
\vspace{-4mm}
\begin{proposition}\label{prop2}
Let $\Ab$ and $\Bb$ be two positive definite matrices such that $\Ab\mge \Bb$. Then, it holds that $\det(\Ab)\ge \det(\Bb)$.
\end{proposition}
\begin{proof}
Let $\Ab=\Lb\Lb^{\top}$ be the Cholesky factorization and note that, since $\Ab$ is \emph{strictly} positive definite, $\Lb$ has an inverse. Then,
it follows that
\begin{align*}
\Ab \mge \Bb\implies& \II\mge \Lb^{-1} \Bb \Lb^{-\top}\mge 0 \implies 1 > \text{eig}_i (  \Lb^{-1} \Bb \Lb^{-\top} )>0 \ \forall \  \text{eigenvalues }\text{eig}_i\\
\implies & 1> \prod_i  \text{eig}_i (  \Lb^{-1} \Bb \Lb^{-\top} ) \implies 1> \det (  \Lb^{-1} \Bb \Lb^{-\top} ) \\
\implies & 1> (1/\det(\Ab)) (\det (\Bb))
\end{align*}
which immediately gives the required result.
\end{proof}
\vspace{-6mm}
\begin{lemma}[Sherman-Morrison formula~\cite{sherman1950adjustment}]\label{lem:morrison}
Assume $\Ab$ is an invertible matrix. Then, the following equality holds:
\begin{equation}
(\Ab+\ub\vb^\top)^{-1}=\Ab ^{-1}-\frac{\Ab ^{-1}\ub\vb^\top \Ab ^{-1}}  {1+ \vb^\top \Ab^{-1} \ub}
\end{equation}
\end{lemma}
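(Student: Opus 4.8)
The plan is to prove this classical identity by direct verification rather than by deriving the right-hand side from scratch: I will multiply the candidate inverse by $\Ab+\ub\vb^\top$ and check that the product equals the identity $\II$. Since $\Ab+\ub\vb^\top$ is square, exhibiting a one-sided inverse is enough to certify it is \emph{the} inverse, so the whole argument collapses to a single algebraic expansion. This is the most economical route, and it avoids having to motivate the particular form of the correction term.

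Concretely, I would expand
\begin{align*}
(\Ab+\ub\vb^\top)\left(\Ab^{-1}-\frac{\Ab^{-1}\ub\vb^\top\Ab^{-1}}{1+\vb^\top\Ab^{-1}\ub}\right)
\end{align*}
by distributing into four terms. Two of them simplify at once: $\Ab\Ab^{-1}=\II$, and the cross term $\ub\vb^\top\Ab^{-1}$ survives untouched. The observation that makes everything cancel is that $\vb^\top\Ab^{-1}\ub$ is a \emph{scalar}; writing $\beta\defeq\vb^\top\Ab^{-1}\ub$, the two remaining (rank-one) terms can each be factored as a scalar multiple of $\ub\vb^\top\Ab^{-1}$, namely $-\tfrac{1}{1+\beta}\,\ub\vb^\top\Ab^{-1}$ and $-\tfrac{\beta}{1+\beta}\,\ub\vb^\top\Ab^{-1}$, where in the last term I have pulled the scalar $\beta=\vb^\top\Ab^{-1}\ub$ out from the middle of the product $\ub\vb^\top\Ab^{-1}\ub\vb^\top\Ab^{-1}$.

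Collecting the coefficients of the common rank-one factor $\ub\vb^\top\Ab^{-1}$ then gives $1-\tfrac{1}{1+\beta}-\tfrac{\beta}{1+\beta}=0$, so all rank-one contributions vanish and the product is exactly $\II$. This establishes the formula. The only point requiring care is the implicit hypothesis that the denominator $1+\vb^\top\Ab^{-1}\ub$ is nonzero, which is precisely the condition guaranteeing that $\Ab+\ub\vb^\top$ is invertible; under this assumption there is no genuine obstacle, as the proof is a one-line expansion once the scalar nature of $\vb^\top\Ab^{-1}\ub$ is exploited.
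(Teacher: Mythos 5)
Your proof is correct. Note, however, that the paper itself does not prove this lemma at all: it is imported as a classical result with a citation to Sherman and Morrison (1950), so there is no internal argument to compare against. Your direct verification is the standard textbook proof, and it is complete: the expansion into four terms, the extraction of the scalar $\beta = \vb^\top\Ab^{-1}\ub$ from the middle of the rank-one product, and the cancellation
\begin{equation*}
1-\frac{1}{1+\beta}-\frac{\beta}{1+\beta}=0
\end{equation*}
are exactly what is needed, and your remark that a one-sided inverse of a square matrix is automatically two-sided closes the argument cleanly. You also caught a genuine imprecision in the paper's statement: the hypothesis $1+\vb^\top\Ab^{-1}\ub\neq 0$ is missing there, and as you observe it is equivalent (via the determinant identity $\det(\Ab+\ub\vb^\top)=\det(\Ab)\,(1+\vb^\top\Ab^{-1}\ub)$) to invertibility of $\Ab+\ub\vb^\top$, so it cannot be dispensed with. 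This matters for the paper's use of the lemma: in the proofs of Lemma~\ref{lem:F_F_basic} and Proposition~\ref{prop:F_F_strict} it is applied with $\Ab=\lambda\II$ and $\ub=\vb=\xb_k$, where the denominator $1+\xb_k^\top\xb_k/\lambda$ is strictly positive, so the application is sound even though the stated lemma elides the hypothesis.
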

\vspace{-4mm}

\begin{proposition}\label{prop:XX1}
Assume $\Ab$ and $\Bb$ are invertible matrices. Then, the following equality holds:
\begin{equation}
(\Ab-\Bb)^{1}=\Ab ^{-1}+(\Ab \Bb^{-1} \Ab -\Ab)^{-1}
\end{equation}
\end{proposition}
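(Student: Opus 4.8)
The plan is to establish the identity by direct algebraic verification, built around a single factorization; no spectral or variational machinery is needed. First I would record the implicit well-posedness requirement: for the left-hand side $(\Ab-\Bb)^{-1}$ to be meaningful we need $\Ab-\Bb$ invertible in addition to the stated invertibility of $\Ab$ and $\Bb$. (In the application inside the proof of Theorem~\ref{thm:decellgen} one takes $\Ab=\Lambdab_0$ and $\Bb=\Thetab$, so that $\Ab-\Bb=\Lambdab_1$ is exactly the invertible matrix whose inverse is being expanded, and this hypothesis is automatically in force.)

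The crux is the factorization
\begin{equation*}
\Ab\Bb^{-1}\Ab-\Ab = \Ab\Bb^{-1}\Ab - \Ab\Bb^{-1}\Bb = \Ab\Bb^{-1}(\Ab-\Bb),
\end{equation*}
where the middle step rewrites the trailing $\Ab$ as $\Ab\Bb^{-1}\Bb$ and the last step pulls $\Ab\Bb^{-1}$ out on the left. Since $\Ab$, $\Bb^{-1}$, and $\Ab-\Bb$ are each invertible, their product is invertible, and inverting gives
\begin{equation*}
(\Ab\Bb^{-1}\Ab-\Ab)^{-1} = (\Ab-\Bb)^{-1}(\Ab\Bb^{-1})^{-1} = (\Ab-\Bb)^{-1}\Bb\Ab^{-1}.
\end{equation*}
This incidentally shows that $\Ab\Bb^{-1}\Ab-\Ab$ is automatically invertible once $\Ab$, $\Bb$, and $\Ab-\Bb$ are, so that the right-hand side of the claimed identity is well defined.

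It then remains to substitute this expression into the right-hand side and collapse it. Writing $\Ab^{-1}=(\Ab-\Bb)^{-1}(\Ab-\Bb)\Ab^{-1}$ so as to expose the common left factor $(\Ab-\Bb)^{-1}$ and right factor $\Ab^{-1}$, I would combine the two terms as
\begin{equation*}
\Ab^{-1}+(\Ab-\Bb)^{-1}\Bb\Ab^{-1} = (\Ab-\Bb)^{-1}\big[(\Ab-\Bb)+\Bb\big]\Ab^{-1} = (\Ab-\Bb)^{-1}\Ab\Ab^{-1} = (\Ab-\Bb)^{-1},
\end{equation*}
which is exactly the claimed equality. There is no substantive obstacle here: the only point demanding care is the bookkeeping of invertibility—namely observing that $\Ab-\Bb$ must be assumed invertible for the statement to parse, and that, given this, the factorization forces $\Ab\Bb^{-1}\Ab-\Ab$ to be invertible as well—so that every inverse appearing above genuinely exists.
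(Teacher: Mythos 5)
Your proof is correct, and it is a cleaner variant of the paper's own direct algebraic argument. The paper pulls out of a hat the additive identity $(\Ab \Bb^{-1} \Ab -\Ab)= (\Ab \Bb^{-1} \Ab -\Ab)\Ab^{-1} (\Ab-\Bb)+(\Ab-\Bb)$, verifies it, and then pre- and post-multiplies by $(\Ab \Bb^{-1} \Ab -\Ab)^{-1}$ and $(\Ab-\Bb)^{-1}$ to rearrange it into the claim. You instead start from the multiplicative factorization $\Ab\Bb^{-1}\Ab-\Ab = \Ab\Bb^{-1}(\Ab-\Bb)$, which yields the explicit closed form $(\Ab\Bb^{-1}\Ab-\Ab)^{-1} = (\Ab-\Bb)^{-1}\Bb\Ab^{-1}$, and then collapse the right-hand side by factoring out $(\Ab-\Bb)^{-1}$ and $\Ab^{-1}$. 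The two routes are equally elementary, but yours has two advantages: it explains \emph{why} the identity holds (the correction term is literally $(\Ab-\Bb)^{-1}\Bb\Ab^{-1}$), and it makes the invertibility bookkeeping explicit—namely that the statement only parses if $\Ab-\Bb$ is invertible (a hypothesis the paper omits; with $\Ab=\Bb$ the claim is vacuous/false as stated), and that this hypothesis automatically makes $\Ab\Bb^{-1}\Ab-\Ab$ invertible, so both inverses on the right-hand side exist. The paper's proof silently assumes both of these facts when it multiplies by the two inverses. Your remark that the hypothesis is in force in the application (with $\Ab=\Lambdab_0$, $\Bb=\Thetab$, $\Ab-\Bb=\Lambdab_1$ in the proof of Theorem~\ref{thm:decellgen}) is also accurate.
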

\begin{proof}
We observe that,
$(\Ab \Bb^{-1} \Ab -\Ab)= (\Ab \Bb^{-1} \Ab -\Ab)\Ab^{-1} (\Ab-\Bb)+(\Ab-\Bb)$. Pre-multiply by $(\Ab \Bb^{-1} \Ab -\Ab)^{-1}$ and post-multiply by $(\Ab-\Bb)^{-1}$ on both sides to get the result.
\end{proof}
\vspace{-4mm}

\begin{proposition}\label{prop:tx}
 The function $t(x)=\frac{\log (x-a)}{\log x}$ is increasing for $x>a+1$.
\end{proposition}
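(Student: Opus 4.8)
The plan is to prove $t$ is increasing by a direct derivative computation that reduces the sign question to a clean monotonicity statement about the elementary function $u \mapsto u\log u$. First I would differentiate via the quotient rule to obtain
\begin{equation*}
t'(x) = \frac{\frac{1}{x-a}\log x - \frac{1}{x}\log(x-a)}{(\log x)^2}.
\end{equation*}
The hypothesis $x > a+1$ forces $x > 1$ (here $a\ge 0$, as in our application where $a = \ell(\Vcal \cp k) - \ell(\Vcal) > 0$ by Theorem~\ref{thm:decellgen}), so the denominator $(\log x)^2$ is strictly positive and can be discarded. Thus the sign of $t'(x)$ equals the sign of the numerator $N(x) = \frac{\log x}{x-a} - \frac{\log(x-a)}{x}$.

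Next I would clear denominators. Since $x > a+1$ gives $x > 0$ and $x-a > 1 > 0$, the factor $x(x-a)$ is strictly positive, so $N(x)$ has the same sign as
\begin{equation*}
x(x-a)\,N(x) = x\log x - (x-a)\log(x-a) = h(x) - h(x-a),
\end{equation*}
where $h(u) = u\log u$. Hence it suffices to establish $h(x) > h(x-a)$.

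To conclude I would invoke the monotonicity of $h$. Since $h'(u) = \log u + 1 > 1 > 0$ for every $u > 1$, the function $h$ is strictly increasing on $(1,\infty)$. The hypothesis $x > a+1$ guarantees both $x-a > 1$ and $x > a+1 > 1$, so both arguments lie in $(1,\infty)$; combined with $x > x-a$ (which holds because $a > 0$) this yields $h(x) > h(x-a)$, hence $N(x) > 0$ and $t'(x) > 0$, as required.

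The computation is routine; the only point requiring care is the domain bookkeeping. The main obstacle is to verify that, under the single hypothesis $x > a+1$, both $x$ and $x-a$ fall strictly above $1$, so that the two logarithms in $t$ are well defined and positive and $h'$ is positive at every point of the interval $[x-a,\,x]$. This is precisely where the threshold $a+1$ (rather than merely $a$) is needed, and it matches the range relevant in Theorem~\ref{th:g_xi_submod}, where $x$ ranges over the values $\ell(\Scal) > 1$ with $\ell(\Scal) - a = \ell(\Scal) - (\ell(\Vcal \cp k) - \ell(\Vcal)) > 1$.
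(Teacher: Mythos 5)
Your proof is correct and follows essentially the same route as the paper: both reduce the sign of $t'(x)$ to the positivity of $x\log x - (x-a)\log(x-a)$, which the paper simply asserts in a single displayed derivative while you justify it via the strict monotonicity of $u\mapsto u\log u$ on $(1,\infty)$. Your additional bookkeeping---making explicit that $a>0$ and that both $x$ and $x-a$ exceed $1$ under the hypothesis $x>a+1$---fills in exactly what the paper leaves implicit, so no further changes are needed.
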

\begin{proof}
 \begin{align}
  dt(x)/dx= \frac{x \log x- (x-a) \log (x-a) }{ x (x-a) (\log x)^2}>0
 \end{align}
\vspace{-4mm}
\end{proof}

\clearpage
\newpage
\section{Performance with the LR and NN models $h_{\theta}(\xb)$ on synthetic data} 
\label{app:syn-expts}
%
 \begin{figure}[!h]
\centering
\hspace*{0.5cm}{\includegraphics[width=0.55\textwidth]{legend_notbold}\label{fig:ltr}\vspace{-0.2cm}}\\
\subfloat[Gaussian, LR]{\includegraphics[width=0.22\textwidth]{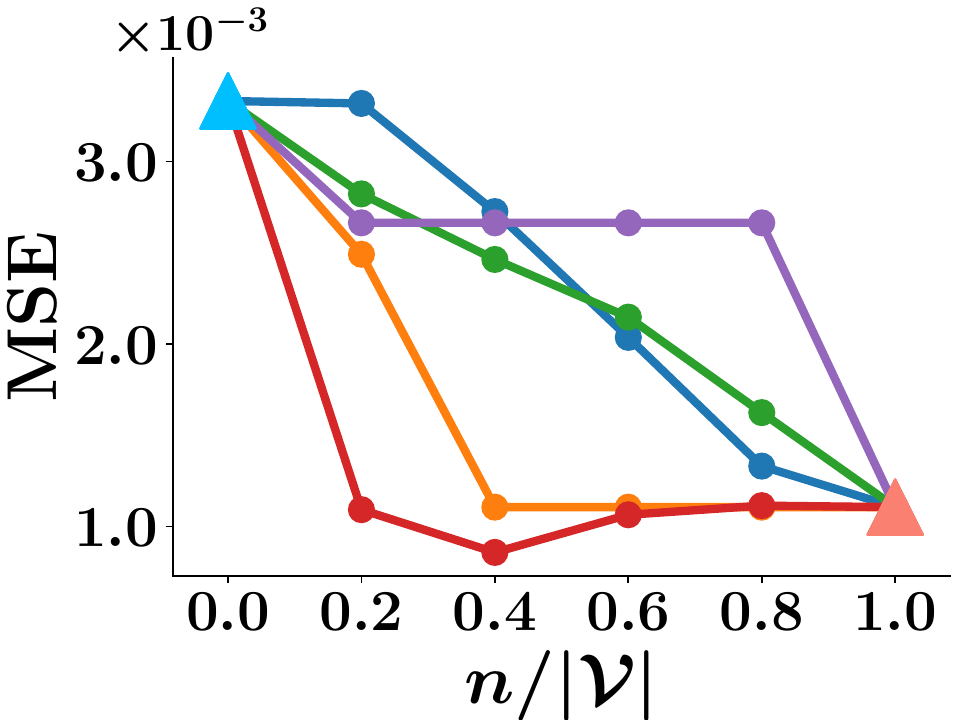}\label{fig:gtest}}\hspace*{0.5cm}
\subfloat[Logistic, LR]{\includegraphics[width=0.22\textwidth]{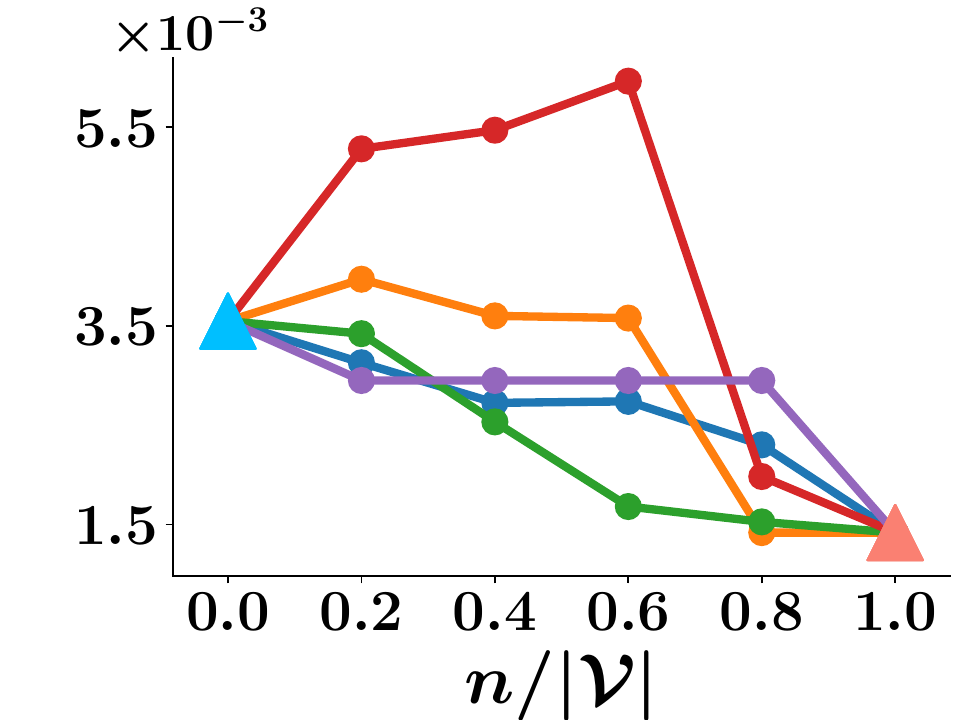}\label{fig:lte}} 
\subfloat[Gaussian, NN]{\includegraphics[width=0.22\textwidth]{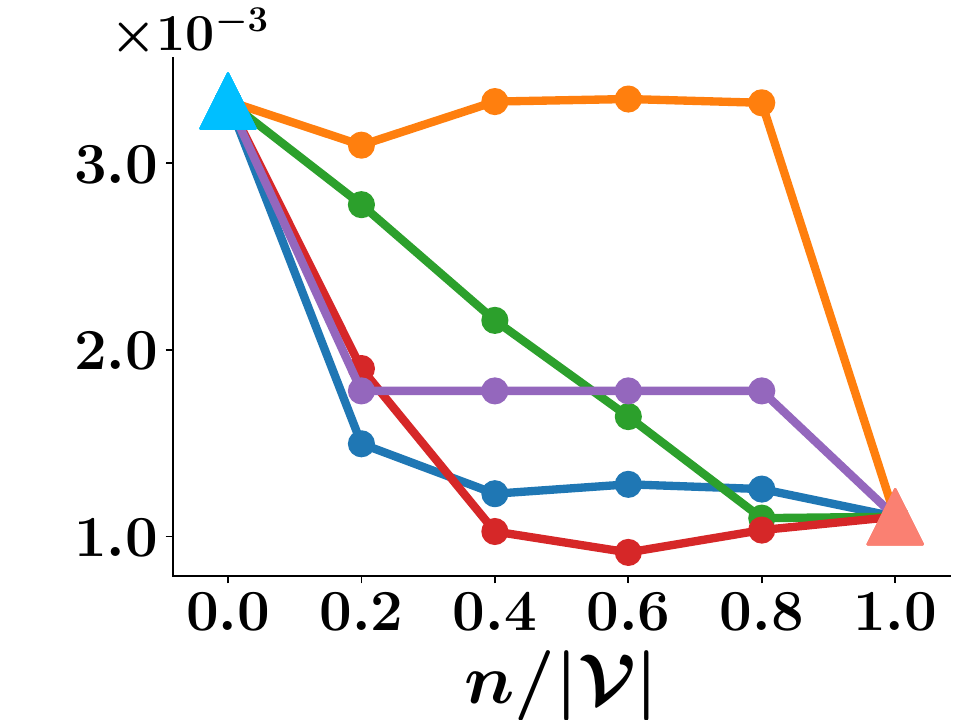}\label{fig:gtest}}\hspace*{0.5cm}
\subfloat[Logistic, NN]{\includegraphics[width=0.22\textwidth]{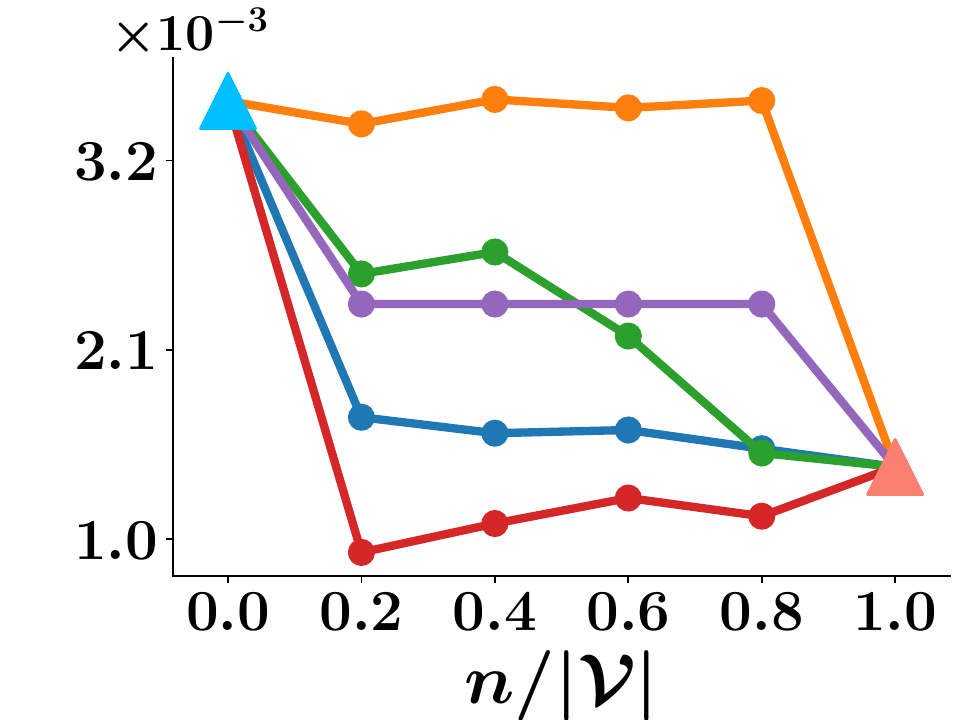}\label{fig:lte}} 
\caption{Mean squared error (MSE) against number of outsourced samples $n$ for the proposed greedy algorithm and four baselines with 
the LR and NN models $h_{\theta}(\xb)$ on synthetic data. 
In all cases, we used $d=5$, $\sigma_2=10^{-3}$ and $\lambda = 5 \cdot 10^{-3}$.
For most automation levels, the competitive advantage provided by the greedy algorithm is statistically significant (Welch'{}s t-test, $p$-value $= 10^{-3}$).
}
\label{fig:quantSyn-app-nn-add}
\end{figure} 
\section{Performance with the MLP and NN models $h_{\theta}(\xb)$ on real data}
\label{app:h-real}

\begin{figure}[!h]
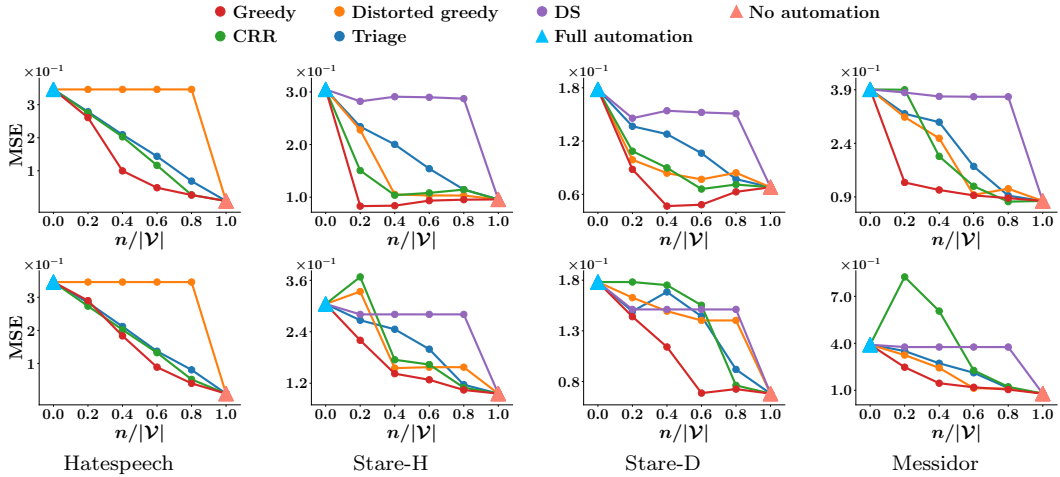

\centering
\captionsetup[subfigure]{labelformat=empty}
\hspace*{0.5cm}{\includegraphics[width=0.55\textwidth]{legend_notbold}\label{fig:ltr}\vspace{-0.2cm}}\\
\subfloat{\includegraphics[width=0.20\textwidth]{hatespeechKvsError_new_MLP_classifier}\label{fig:gtests}}\hspace*{0.2cm}
\subfloat{\includegraphics[width=0.20\textwidth]{starehKvsError_new_MLP_classifier}\label{fig:ltes1}} \hspace*{0.2cm}
\subfloat{\includegraphics[width=0.20\textwidth]{staredKvsError_new_MLP_classifier}\label{fig:lte22}} \hspace*{0.2cm}
\subfloat{\includegraphics[width=0.20\textwidth]{messidorKvsError_new_MLP_classifier}\label{fig:lte23}} \\
\vspace{-3mm}
\subfloat[Hatespeech]{\includegraphics[width=0.20\textwidth]{hatespeechKvsError_new}\label{fig:gtests}}\hspace*{0.2cm}
\subfloat[Stare-H]{\includegraphics[width=0.20\textwidth]{starehKvsError_new}\label{fig:ltes1}} \hspace*{0.2cm}
\subfloat[Stare-D]{\includegraphics[width=0.20\textwidth]{staredKvsError_new}\label{fig:lte22}} \hspace*{0.2cm}
\subfloat[Messidor]{\includegraphics[width=0.20\textwidth]{messidorKvsError_new}\label{fig:lte23}} 
\caption{Mean squared error (MSE) against number of outsourced samples $n$ for the proposed greedy algorithm and four baselines with
the MLP model (first row) and NN model (second row) $h_{\theta}(\xb)$ on four real-world datasets.
In Stare-H, Stare-D and Messidor, for each sample $(\xb, y)$, the element of $\alphab_{\xb, y}$ corresponding to the 
score $s = y$ has the highest value, similarly as in Figure~\ref{fig:hmodels-real}.
For most automation levels, the competitive advantage provided by the greedy
algorithm is statistically significant (Welch'{}s t-test, $p$-value $= 10^{-3}$).
%
}
 \label{fig:quantrealone-app-nn}
\end{figure}
 \newpage
\section{Human and machine error for test samples in real data} 
\label{app:expl-real}
\begin{figure}[!h]
\centering
\hspace*{0.5cm}{\includegraphics[width=0.5\textwidth]{subsets}\label{fig:ltr}\vspace{-0.2cm}}\\
\subfloat{\includegraphics[width=0.22\textwidth]{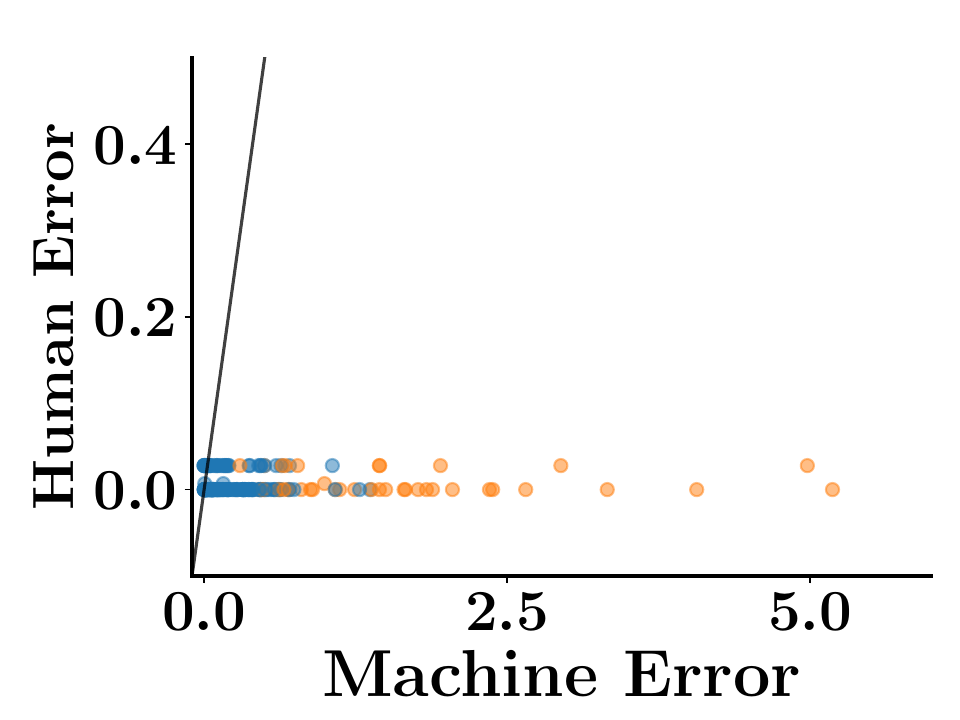}\label{fig:gtests}}\hspace*{0.2cm}
\subfloat{\includegraphics[width=0.22\textwidth]{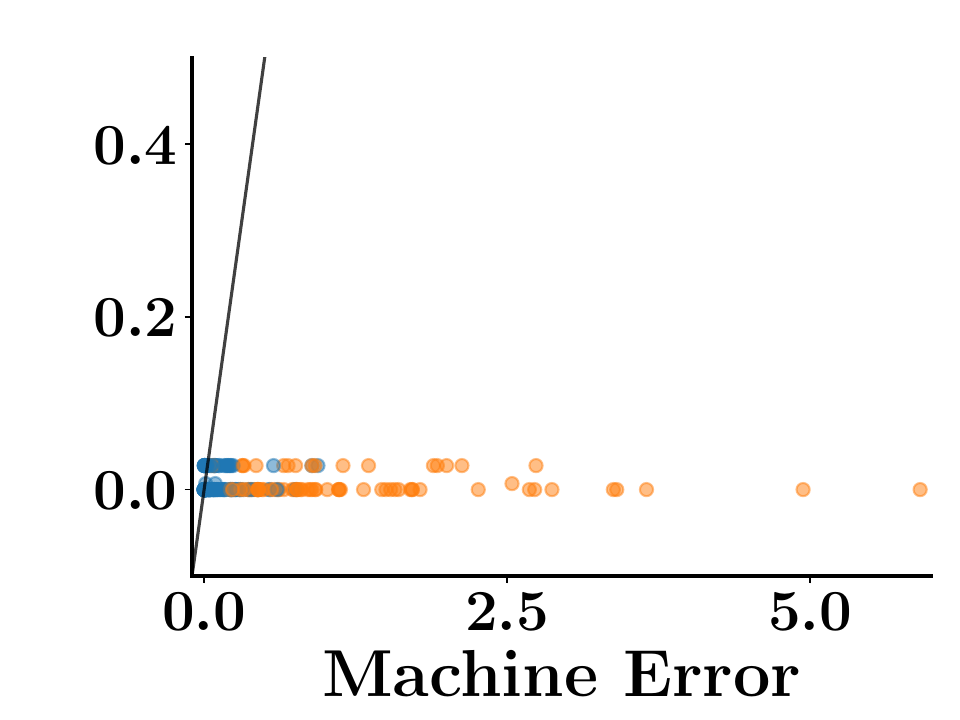}\label{fig:ltes1}} \hspace*{0.2cm}
\subfloat{\includegraphics[width=0.22\textwidth]{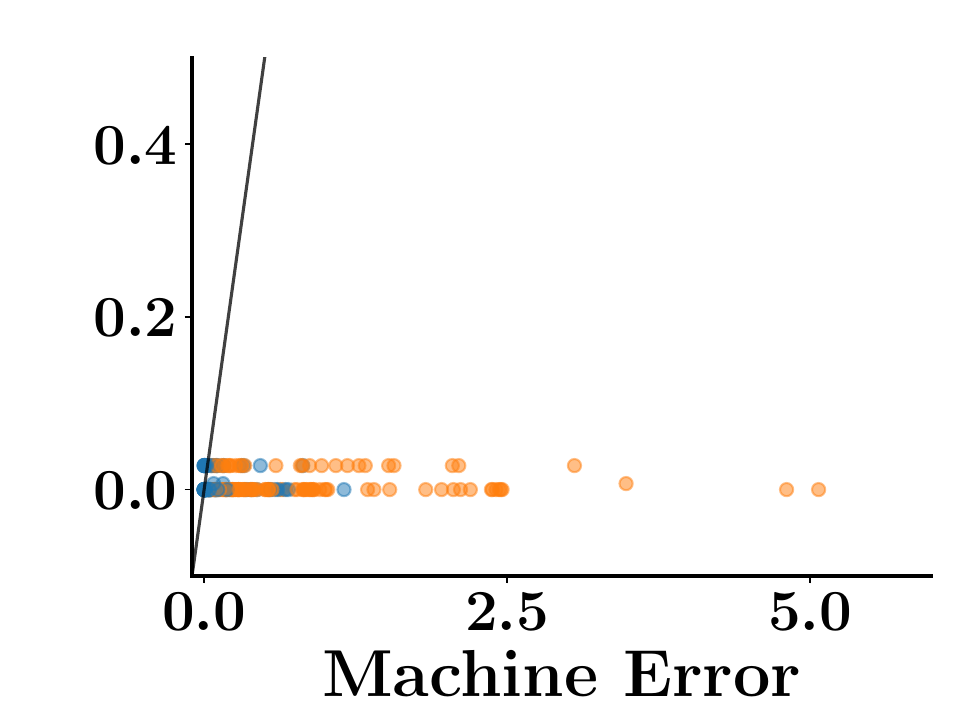}\label{fig:lte22}} \hspace*{0.2cm}
\subfloat{\includegraphics[width=0.22\textwidth]{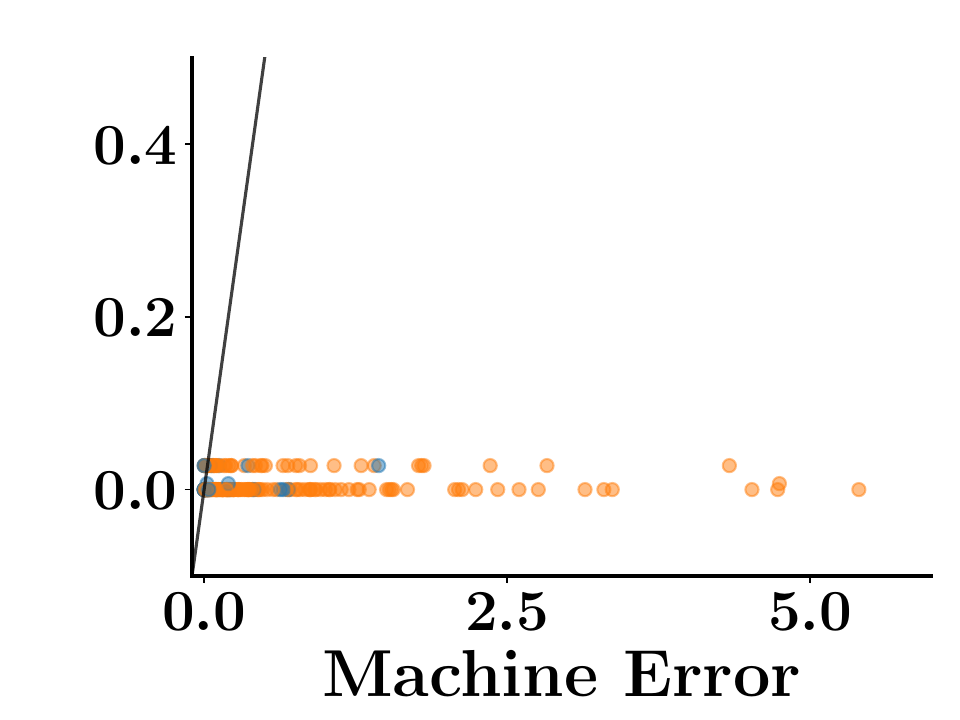}\label{fig:lte23}} \\
\subfloat{\includegraphics[width=0.22\textwidth]{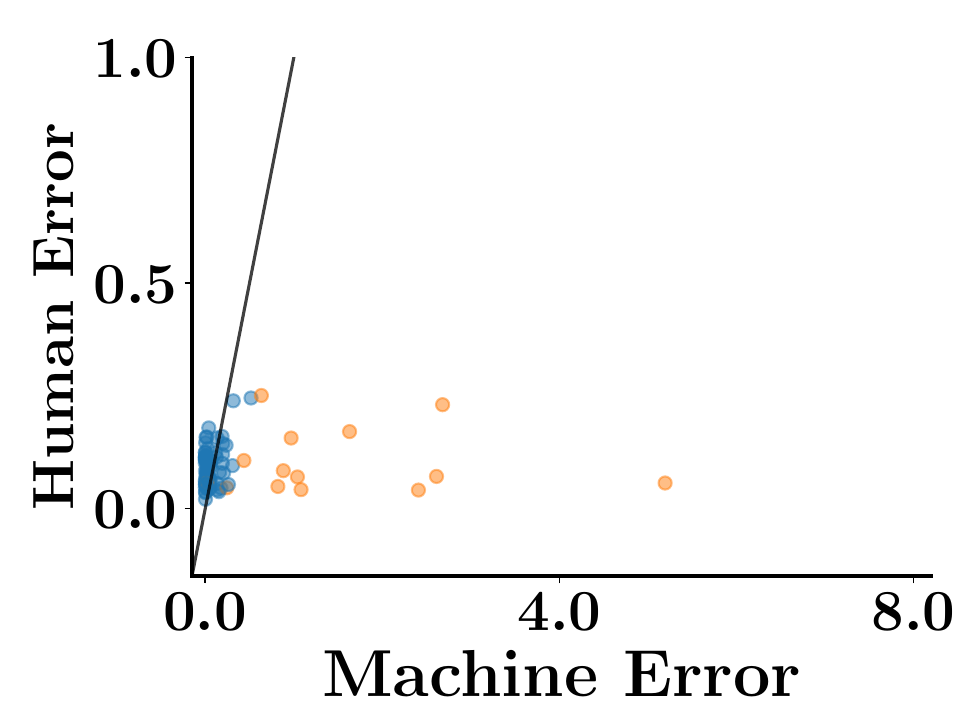}\label{fig:gtests}}\hspace*{0.2cm}
\subfloat{\includegraphics[width=0.22\textwidth]{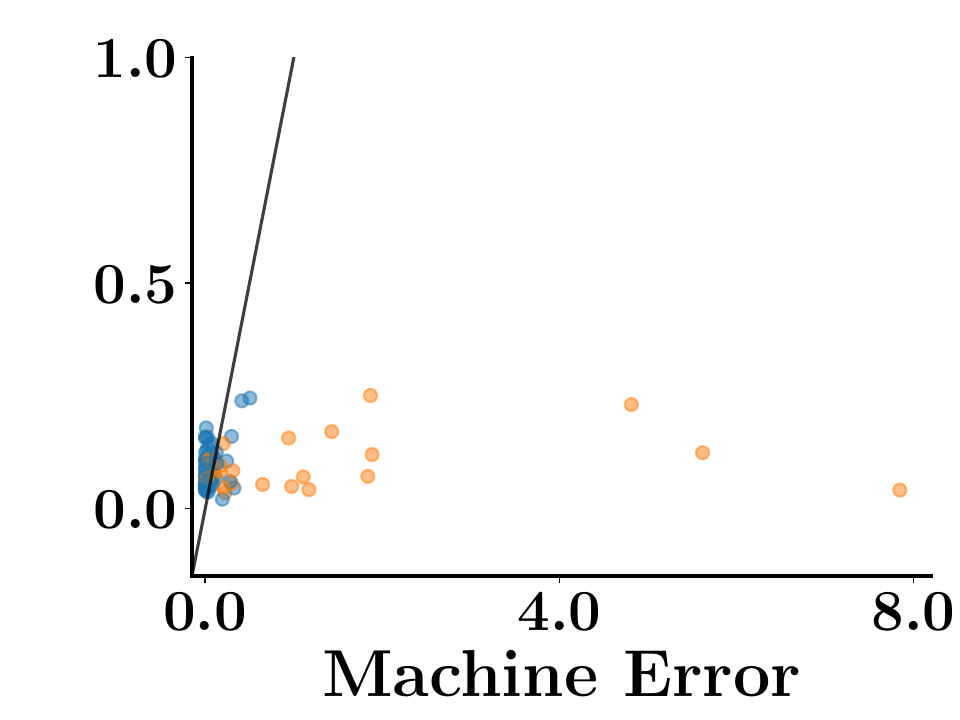}\label{fig:ltes1}} \hspace*{0.2cm}
\subfloat{\includegraphics[width=0.22\textwidth]{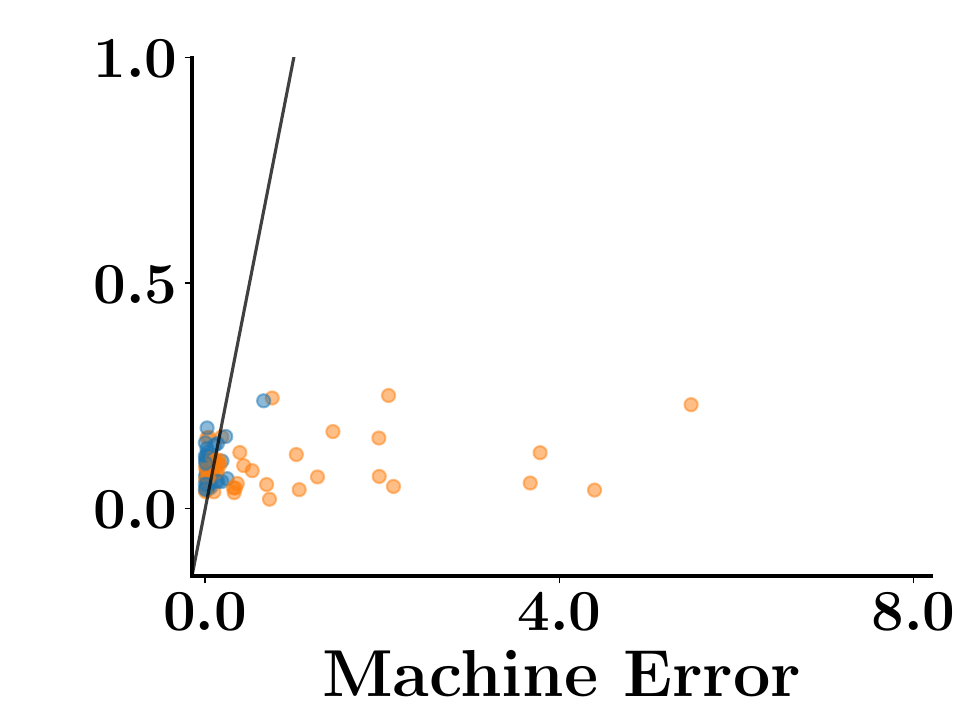}\label{fig:lte22}} \hspace*{0.2cm}
\subfloat{\includegraphics[width=0.22\textwidth]{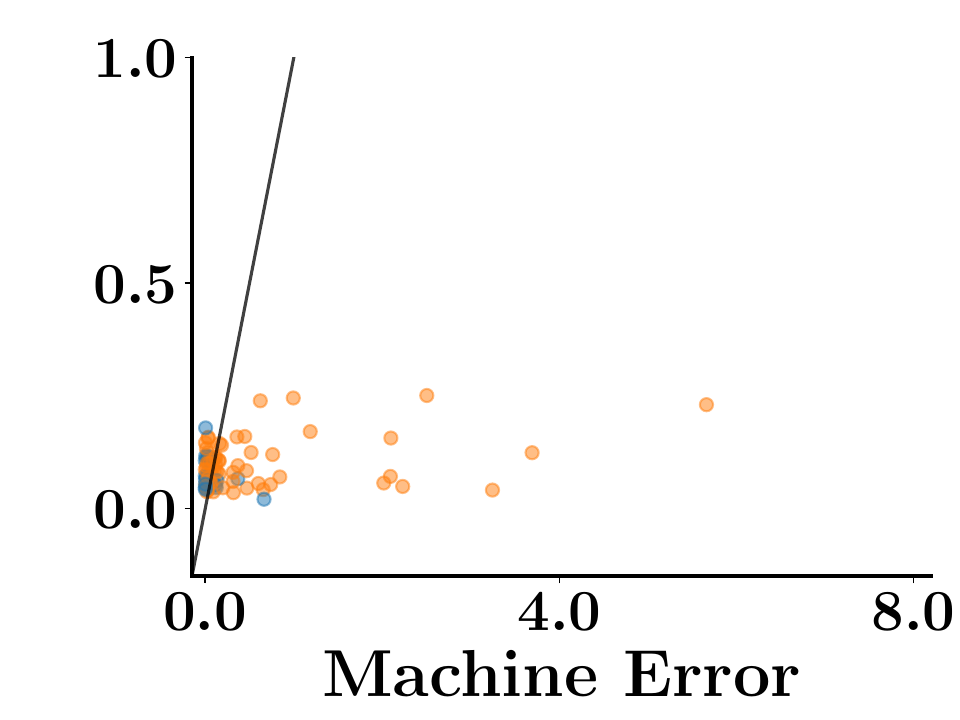}\label{fig:lte23}} \\
\subfloat{\includegraphics[width=0.22\textwidth]{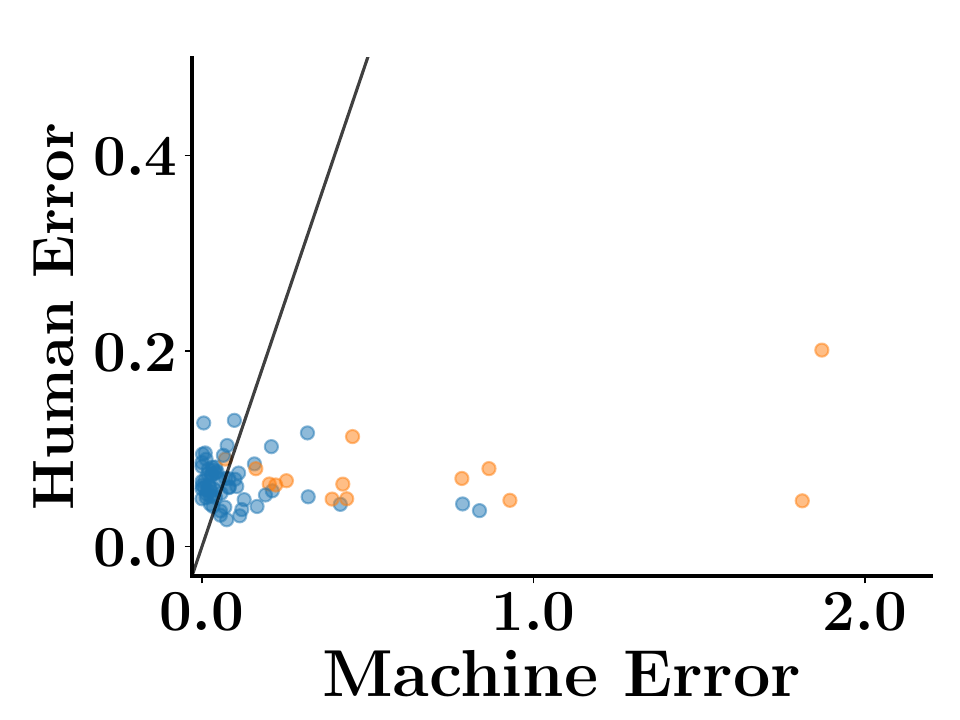}\label{fig:gtests}}\hspace*{0.2cm}
\subfloat{\includegraphics[width=0.22\textwidth]{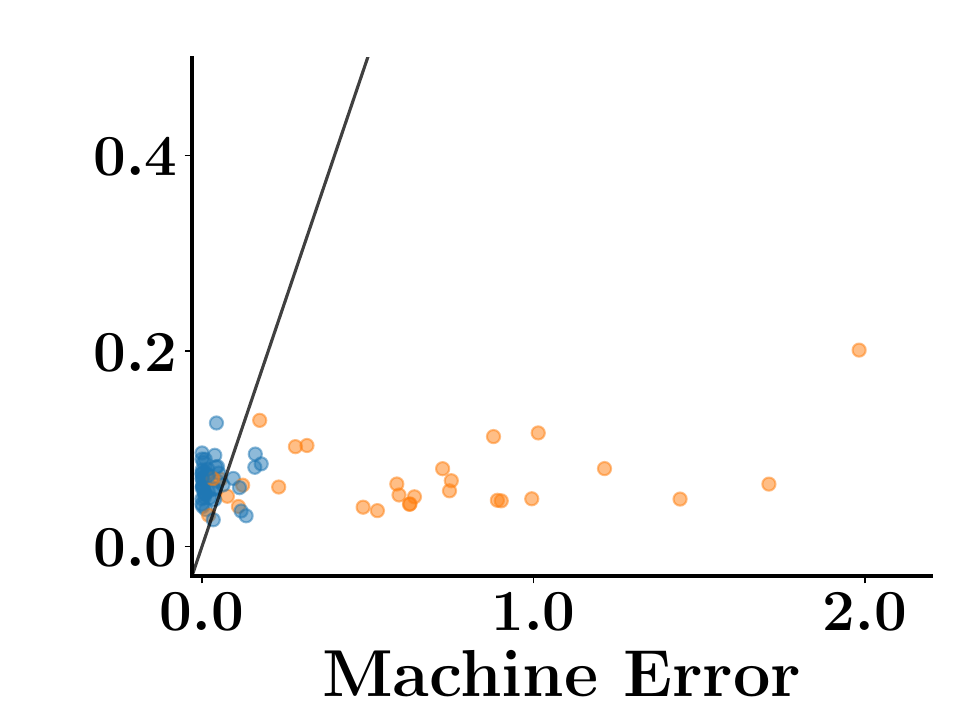}\label{fig:ltes1}} \hspace*{0.2cm}
\subfloat{\includegraphics[width=0.22\textwidth]{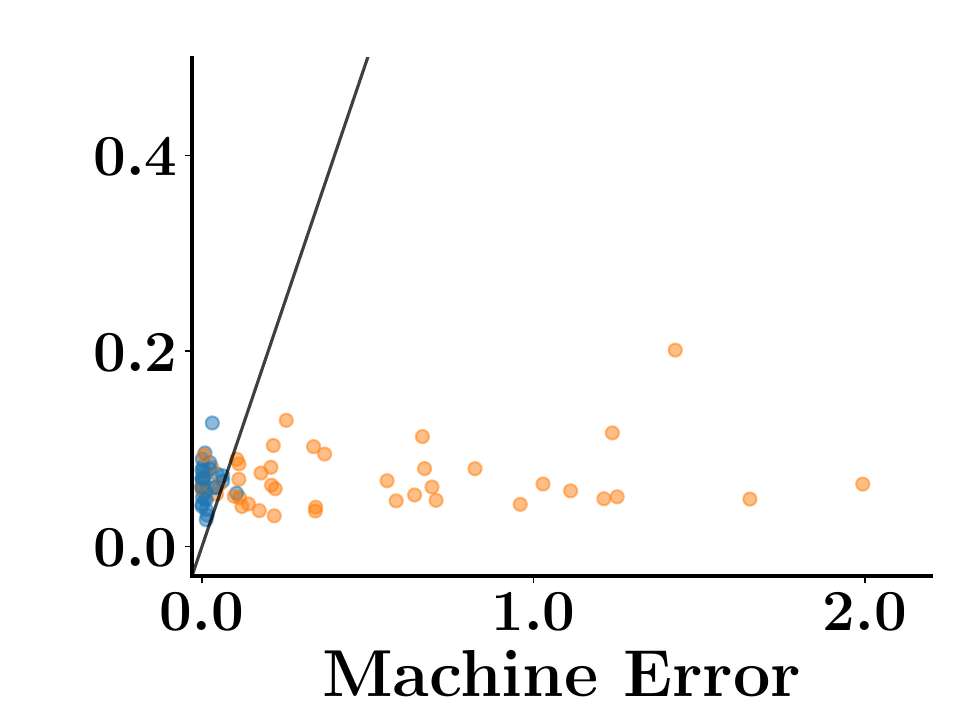}\label{fig:lte22}} \hspace*{0.2cm}
\subfloat{\includegraphics[width=0.22\textwidth]{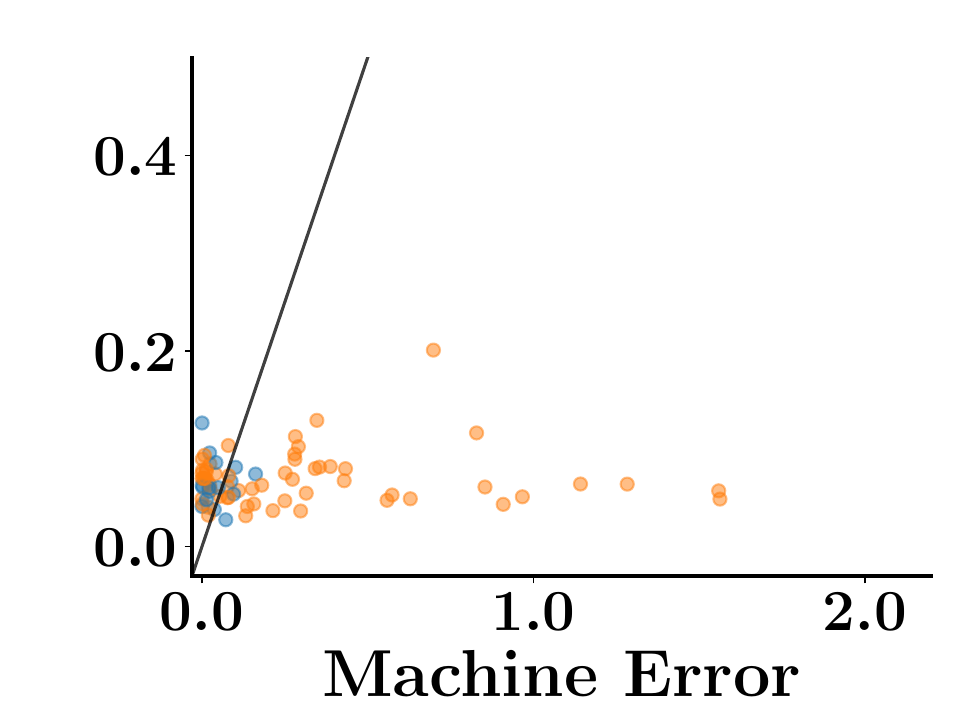}\label{fig:lte23}}\\ 
\subfloat[$|\Scal|=0.2|\Vcal|$]{\setcounter{subfigure}{1} \includegraphics[width=0.22\textwidth]{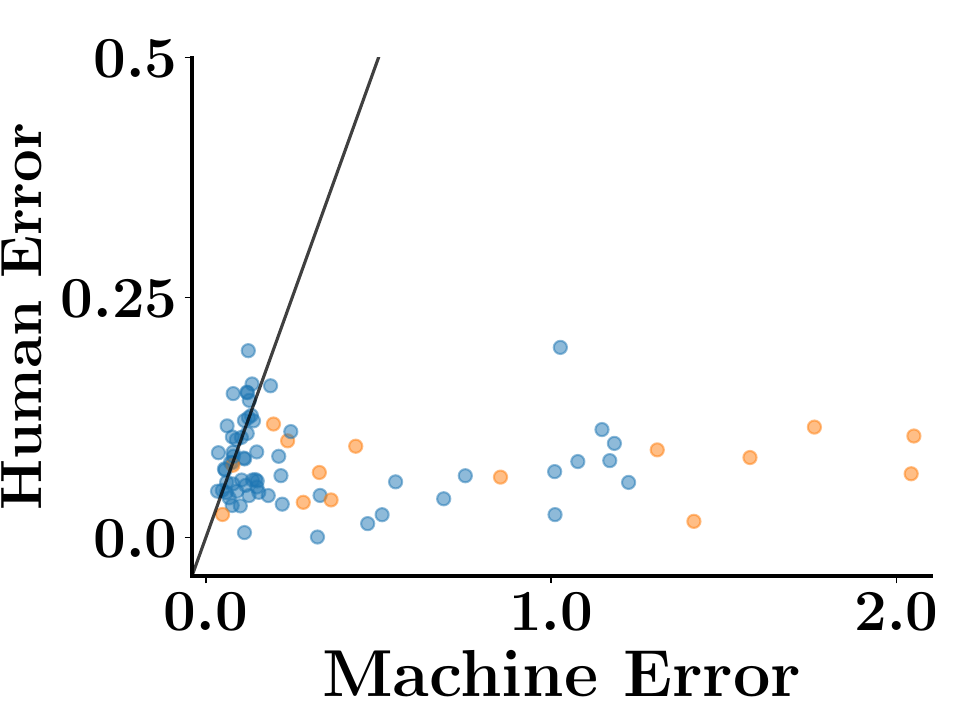}\label{fig:gtests}}\hspace*{0.2cm}
\subfloat[$|\Scal|=0.4|\Vcal|$]{\includegraphics[width=0.22\textwidth]{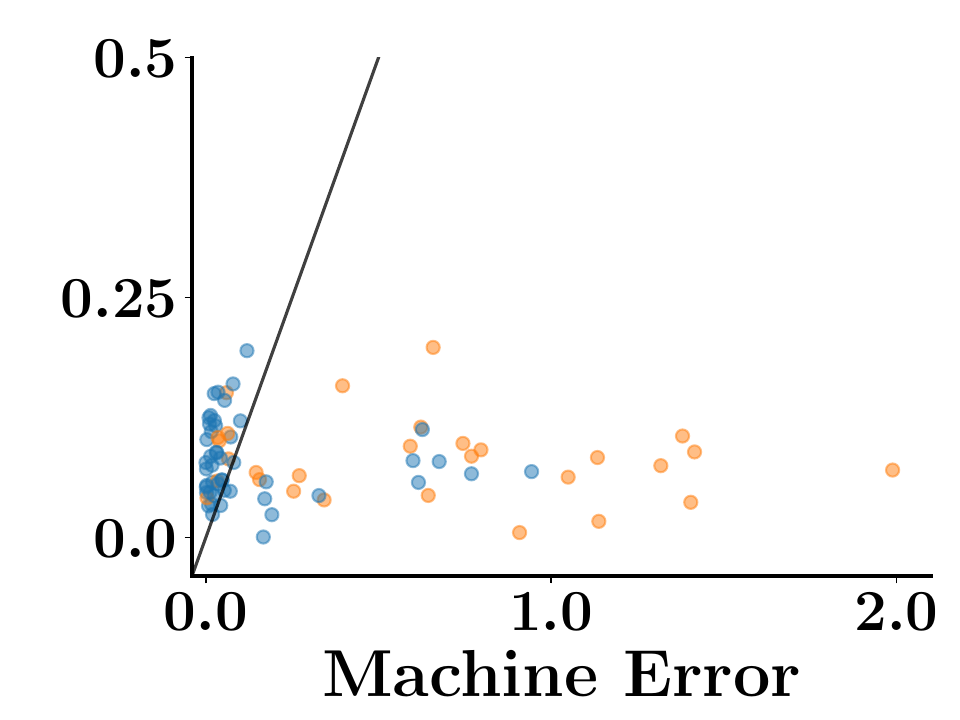}\label{fig:ltes1}} \hspace*{0.2cm}
\subfloat[$|\Scal|=0.6|\Vcal|$]{\includegraphics[width=0.22\textwidth]{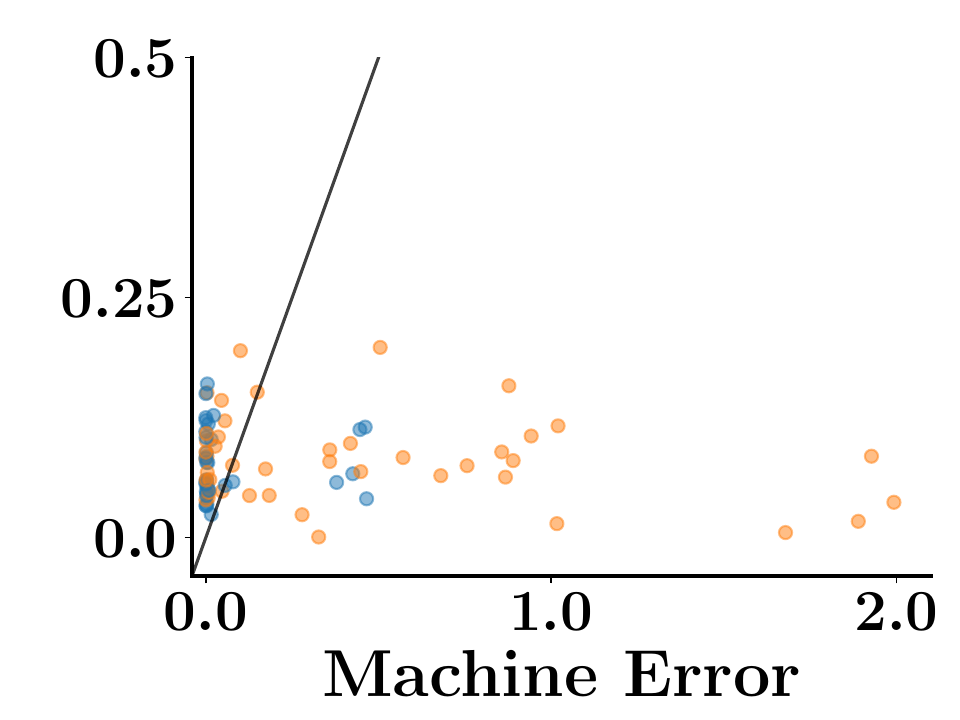}\label{fig:lte22}} \hspace*{0.2cm}
\subfloat[$|\Scal|=0.8|\Vcal|$]{\includegraphics[width=0.22\textwidth]{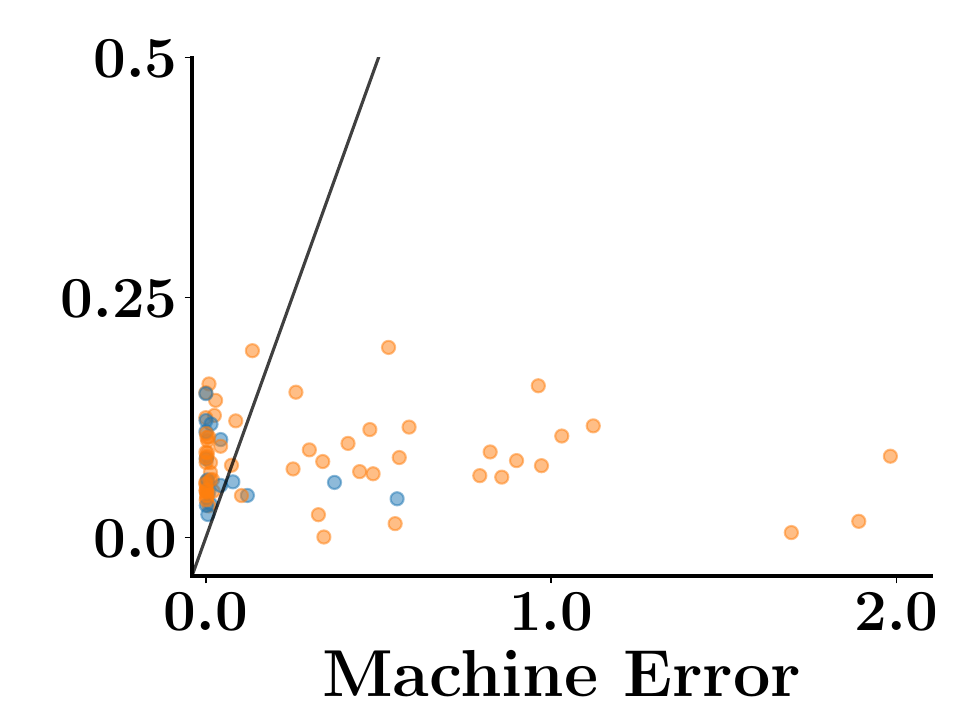}}
\caption{Human and machine error of all test samples on the Hatespeech (first row), Stare-H (second row), Stare-D (third row) and Messidor (third row) datasets 
for different automation levels.
Our algorithm outsources to humans those samples in which the machine error would have been the highest if it had to predict their response variables
}
\label{fig:scatter-human-machine-test}
\end{figure}

\end{document}